\def\TV{{\text{TV}}}
\def\tbx{\widetilde{\bx}}
\def\kui{\textcolor{black}}
\def\kuiredd{\textcolor{black}}
\def\kuire{\textcolor{black}}
\def\dg{\textcolor{black}}
\def\black{\textcolor{black}}
\def\dgr{\textcolor{black}}
\def\kuired{\textcolor{black}}
\def\dgrr{\textcolor{black}}
\def\dgrrr{\textcolor{black}}
\begin{document}
\title{MPTV: Matching Pursuit Based Total Variation Minimization for Image Deconvolution}

\author{Dong~Gong, Mingkui~Tan, Qinfeng~Shi, Anton~van den Hengel, and Yanning~Zhang
\thanks{Dong Gong and Yanning Zhang are with the School of Computer Science and Engineering, Northwestern Polytechnical University, Xi'an, China, 710129 (e-mail: edgong01@gmail.com; ynzhang@nwpu.edu.cn). 
}
\thanks{Mingkui Tan is with South China University of Technology, Guangzhou, China, 510006 (e-mail: mingkuitan@scut.edu.cn).}
\thanks{Qinfeng Shi and Anton van den Hengel are with The University of Adelaide, SA 5005, Australia (e-mail: javen.shi@adelaide.edu.au; anton.vandenhengel@adelaide.edu.au).}
\thanks{Corresponding author: Qinfeng Shi. The first two authors contributed equally to this work.}
}

\markboth{IEEE Transactions on Image Processing,~Vol.~xx, No.~xx,~2018}{Shell \MakeLowercase{\textit{et al.}}: Bare Demo of IEEEtran.cls for Journals}

\maketitle
\begin{abstract}
Total variation (TV) regularization has proven effective for a range of computer vision tasks through its preferential weighting of sharp image edges. Existing TV-based methods, however, often suffer from the over-smoothing issue and solution bias caused by the homogeneous penalization. In this paper, we consider addressing these issues by applying inhomogeneous regularization on different image components. We formulate the inhomogeneous TV minimization problem as a convex quadratic constrained linear programming problem. Relying on this new model, we propose a matching pursuit based total variation minimization method (MPTV), specifically for image deconvolution. The proposed MPTV method is essentially a cutting-plane method, which iteratively activates a subset of nonzero image gradients, and then solves a subproblem focusing on those activated gradients only. Compared to existing methods, MPTV is less sensitive to the choice of the trade-off parameter between data fitting and regularization. Moreover, the inhomogeneity of MPTV alleviates the over-smoothing and ringing artifacts, and improves the robustness to errors in blur kernel. Extensive experiments on different tasks demonstrate the superiority of the proposed method over the current state-of-the-art.
\end{abstract}

\begin{IEEEkeywords}
Total variation, image deconvolution, matching pursuit, convex programming.
\end{IEEEkeywords}

\IEEEpeerreviewmaketitle
\section{Introduction}

\IEEEPARstart{M}{any} image restoration tasks can be formulated as an inverse problem, which
allows the recovery of the latent image $\bx$ from a measured image $\by$.
The imaging model can be formulated as:
\begin{equation}\label{eq:imaging_model}
  \by = \bA\bx+\bn,
\end{equation}
where $\bx\in \mbR^{n}$ denotes the
latent
image, $\by\in \mbR^{n}$ denotes the
measured
image, $\bA\in \mbR^{n\times n}$ is a linear operator
which models the measurement process, and
$\bn\in \mbR^{n}$ is an additive noise vector.
Directly recovering $\bx$ from model \eqref{eq:imaging_model} is an ill-posed problem, because there are usually too many solutions. A proper prior or regularizer for $\bx$ is important for reducing the ill-posedness of the problem.
Given $\by$ and $\bA$, and assuming $\bn$ is
sampled from
i.i.d. Gaussian white noise, image restoration can be achieved by solving the following for $\bx$:
\begin{equation}\label{eq:problem_model_org}
  \min_{\bx} \frac{1}{2} \|\by-\bA\bx\|_2^2 + \lambda\Omega(\bx),
\end{equation}
where $\Omega(\cdot)$ is the regularizer on $\bx$,
and $\lambda>0$ is {a trade-off parameter}.
Many {kinds of regularizer $\Omega(\cdot)$}, such as the total variation (TV) norm \cite{rudin1992TV}, wavelet frame-based sparse priors \cite{Cai2012IMAGE} and Gaussian mixture models \cite{Zoran2011EPLL}, have been proposed to handle different image restoration tasks, \eg denoising, inpainting and deconvolution.
In this paper, we focus on one particular task, namely, \emph{non-blind image deconvolution}.

\par
For image deconvolution \cite{wang2008new,gong2016active}, the matrix $\bA$ in model \eqref{eq:imaging_model} and \eqref{eq:problem_model_org} represents a convolution matrix, \dgr{with an embedded blur kernel (a.k.a. point spread function, PSF) $\bk$}. The degenerate image $\by$ is usually modeled by $$\by=\bx*\bk+\bn,$$ where $*$ denotes the 2D convolution operator. \emph{Non-blind deconvolution} seeks to recover $\bx$ from the blurred image $\by$ given a known blur kernel $\bk$ (and thus $\bA$).
The problem
is ill-posed and non-trivial to address.
The practical importance of the problem has motivated significant research attention
\cite{richardson1972bayesian,wang2008new,krishnan2009fast,dong2011image}.
In particular, TV regularized  methods have been intensively exploited and have demonstrated great success in image deconvolution \cite{rudin1992TV,goldstein2009split,zuo2011generalized,lou2015weighted}.

\par
The TV-based model was first proposed by Rudin \etal \cite{rudin1992TV} for image denoising \cite{goldstein2009split,wang2008new,yuan2015l0tv}.
It has subsequently been applied to a variety of other tasks including
deconvolution \cite{chan1998total,wang2008new,perrone2014total}, super-resolution \cite{marquina2008imageSR}, and inpainting \cite{shen2002inpating}.
By assuming that the image gradients are sparse, TV-based methods
reflect the tendency of clear, sharp images towards piecewise smoothness.
\dgr{Although $\ell_0$-norm {regularization can be applied to promote sparse solution directly, the resultant optimization problem is difficult to solve} due to its non-convexity. In practice, $\ell_1$-norm or $\ell_{2,1}$-norm based TV regularization is more commonly used \cite{wang2008new,lou2015weighted}.}
In this paper, we consider the $\ell_{2,1}$-norm based isotropic TV norm \cite{rudin1992TV}:
\begin{equation}\label{eq:iso_tv_norm}
  \Omega_{\TV}(\bx)=\|\bD\bx\|_{2,1} = \sum\nolimits_{i=1}^n \|~[(\bD_v \bx)_i, (\bD_h \bx)_i]~\|_{2},
\end{equation}
where $\bD = [\bD_v^{\T}, \bD_h^{\T}]^{\T}$, and $\bD_v\in\!\! \mbR^{n\times n}$ and $\bD_h\in\!\! \mbR^{n\times n}$ denote the first-order difference matrices in vertical and horizontal directions, respectively.  $\|\cdot\|_{2,1}$ denotes the $\ell_{2,1}$-norm. Specifically, $[(\bD_v \bx)_i, (\bD_v \bx)_i]$ denotes a $1$-by-$2$ vector concatenating the $i$-th element of $\bD_v \bx$ and $\bD_h \bx$.

\par
A variety of TV-based methods for reducing the impact of both image noise and blur have been devised.
These methods tend to suffer from a common set of deficiencies, however.
Firstly, the homogeneous penalty may cause over-smoothing of high-frequency image components  (such as edges and corners). Secondly, due to the regularization bias \cite{deledalle2015debiasing,deledalle2017clear,Brinkmann2017bias} rooted in the specific $\ell_1$-norm or $\ell_{2,1}$-norm,
\dgrr{it is difficult to select an appropriate trade-off hyper-parameter $\lambda$ to achieve a solution that has sparse gradients and suffers less bias issue.}
Thirdly, existing TV-based deconvolution methods are usually sensitive to  errors or noise in $\bk$ (\ie $\bA$), {and this} often causes heavy ringing artifacts in the
latent image
$\bx$ \cite{Shan2008High,mosleh2014image}. For existing TV methods, it is non-trivial to achieve a balance between suppressing artifacts and preserving image details.

\par
{In this paper, we propose a matching pursuit algorithm, called MPTV, for solving TV regularization based image deconvolution. The paper extends our conference paper \cite{gong2017MPGL} where we addressed the generalized lasso problem. The new contributions of this paper are summarized as follows. }

\begin{itemize}
\item We {reformulate} the TV minimization problem as a {quadratically constrained linear programming (QCLP)} problem and {then propose a matching pursuit algorithm {called MPTV} to address the resultant problem.} {Instead of focusing on all the image gradients at the beginning,
we iteratively invoke the most beneficial gradient subset, followed by solving a subproblem constrained on selected subsets only.}

\item The proposed MPTV method is able to alleviate the over-smoothing issue while yielding a solution with sparse gradients due to the matching pursuit like optimization strategy. Compared to existing methods, MPTV can help to reduce the solution bias.
\dgrr{Moreover, the proposed optimization scheme also helps to improve the robustness of the method to noise and/or errors in $\bA$. {In particular, for the task of image deconvolution, MPTV helps to suppress the} ringing artifacts and recover the image details.}

\item {Many TV-based methods require an extensive and often imprecise selection of hyper-parameters to balance the sparsity and the fitness to observation. This issue can be significantly alleviated in the proposed MPTV method due to the new formulation and optimization strategy, particularly the early stopping strategy that will be introduced. In other words, MPTV is less sensitive to the selection of the regularization parameter.}

\end{itemize}

\par
The remainder of this paper is organized as follows. In Section \ref{sec:related_work}, we review related work in TV models and image deconvolution.
{In Section \ref{sec:mptv}, we introduce our matching pursuit based TV minimization (MPTV) algorithm and discuss the details, including stopping conditions, parameter setting, and convergence analysis.}
Empirical studies on various datasets are presented in Section \ref{sec:exp}. We conclude in Section \ref{sec:con}.

\section{Notation and Related Work}\label{sec:related_work}
\subsection{Notation}
\label{sec:notation}
Let $\bA=[A_{i,j}]\in \mbR^{m\times n}$ and $\bv=[v_1,...,v_n]^\T\in \mbR^{n}$ denote a matrix and a vector, receptively, where $^\T$ denotes the transpose of a vector/matrix. Let $\0$ and $\1$ be vectors with all zeros and all ones, respectively, and let $\bI$ denote the identity matrix. Let $\bA_i$ or $\bA^i$ be a matrix indexed for some purpose, and let $\bv^i$ or $\bv_i$ be a vector indexed for some purpose.

\par
Given a vector $\bv$, let $\diag(\bv)$ be a diagonal matrix with diagonal elements equal to the vector $\bv$, and $\|\bv\|_p$ be the $\ell_p$-norm. Let $\odot$ denote the element-wise (Hadamard) product, $\otimes$ denote the Kronecker product
and $\supp(\bv)$ denote the support set of $\bv$ .
Given a positive integer $n$, let $[n]=\{1,...,n\}$.
Given any index set $\cT\subseteq [n]$, let $\cT^{c}$ be the complementary set of $\cT$, \ie $\cT^c=[n]\setminus\cT$, and $\card(\cT)$ be the cardinality of $\cT$. For a vector $\bv\in \mbR^n$, let $v_i$ denote the $i$-th element of $\bv$, and $\bv_\cT$ denote the subvector indexed by $\cT$. For a matrix $\bA$, let $\bA_\cT$ denote the columns of $\bA$ with indeces in
$\cT$.

\subsection{Total Variation Models}
Total variation regularization {has been the subject of} much attention over the past two decades, not least due to its effectiveness in image processing \cite{wang2008new,yuan2015l0tv}, compressive sensing \cite{goldstein2009split} and machine learning tasks \cite{wang2014highly,gong2017MPGL}. Considering the piecewise smooth property of images, Rudin \etal \cite{rudin1992TV} proposed \kui{the} TV regularizer for image denoising. Following that, many variants of the original TV regularizer \kui{have been} extensively studied. Beside the isotropic TV model proposed in \cite{rudin1992TV}, anisotropic TV has  also been wildly explored \cite{goldstein2009split,wang2008new}.

To alleviate the optimization difficulties caused by the non-differentiability of {TV model}s, some approximate TV models have been proposed, \eg smooth TV \cite{chan1999smoothTV} and Huber-norm based TV \cite{nikolova2005huberTV}. To preserve the {sparse nature} of the image gradients, the $\ell_0$-norm based TV regularizer was proposed for edge-preserving image editing tasks \cite{xu2011l0smooth} and blur kernel estimation \cite{xu2013unnatural}. Motivated by the $\ell_1-\ell_2$-norm in compressive sensing, Lou \etal \cite{lou2015weighted} proposed a weighted difference of anisotropic and isotropic TV regularizers to alleviate the over-smoothing {incurred by classical TV methods.} For these methods,
achieving acceptable performance is critically dependent on
{careful turning of hyper-parameters.} In fact, due to the bias nature in regularization, it is non-trivial to reduce bias issue
while pursing sparsity.
{To handle different types of noises, different data fitting functions are also studied}. Classical TV models \cite{rudin1992TV,wang2008new} use $\ell_2$ loss to fit the Gaussian noise in observation. To improve {the robustness of the model to outliers}, non-smooth loss functions, including $\ell_1$ loss \cite{yang2009l1loss,xu2010two}, $\ell_\infty$ loss \cite{clason2012linftyloss} and $\ell_0$ loss \cite{yuan2015l0tv}, {have been used} to handle Laplacian noise, uniform noise and impulse noise, respectively.

\par
{Various optimization strategies}  for TV regularized problems have also been extensively investigated.
Rudin \etal \cite{rudin1992TV} minimize the TV model using a gradient projection method, which, converges slowly due to the non-smoothness of the problem.
\dgrr{Osher \etal \cite{osher2005iterative} proposed an iterative regularization method based on Bregman distance minimization for solving the TV based image restoration problem. Burger \cite{burger2016bregman} discussed a series of Bregman distance based approaches for inverse problems and pointed out that the Bregman iteration scheme is beneficial for alleviating the bias caused by the TV regularizer.}
In the past two decades, many other  techniques {have been} proposed to accelerate convergence, including the primal-dual interior point method \cite{chan1999nonlinear}, the splitting Bregman method \cite{goldstein2009split}, the half-quadratic formulation based method \cite{wang2008new},  and alternating direction multiplier methods \cite{zuo2011generalized}.
\dgrr{Specifically, to handle the spatially inhomogeneous structural and textural information in the image, some approaches proposed to use spatially varying constraints with the TV regularization \cite{gilboa2006variational,bredies2013spatially}. Gilboa \etal \cite{gilboa2006variational} proposed a variational framework for image denoising, in which spatially varying constraints are used for different image local areas.
In \cite{bredies2013spatially}, a spatially dependent parameter selection scheme is proposed to achieve spatially adaptive TV regularization for image restoration.}

\begin{figure*}[!t]
\subfigure[Blurred signal $\by$, blur kernel $\bk$ and the original sharp signal $\bx$]{
\centering
\includegraphics[trim=0 3 0 5, clip, width=0.32\linewidth]{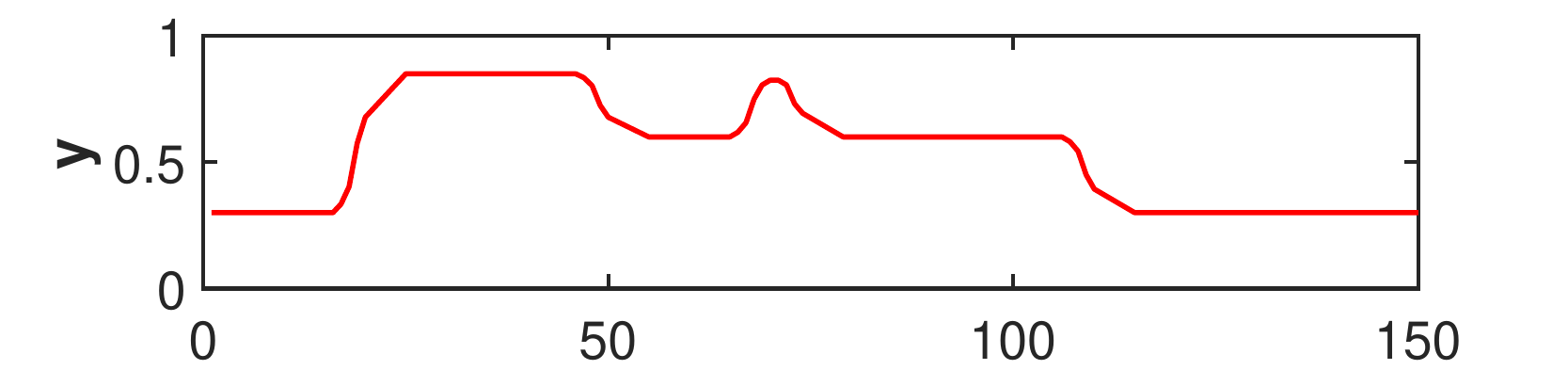}
\centering
\includegraphics[trim=0 3 0 5, clip, width=0.32\linewidth]{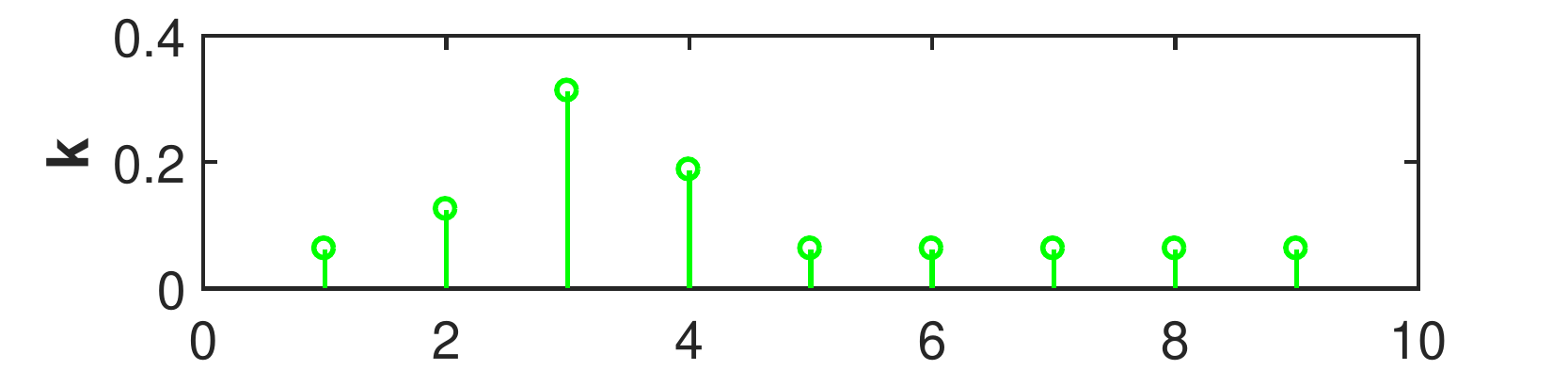}
\centering
\includegraphics[trim=0 3 0 5, clip, width=0.32\linewidth]{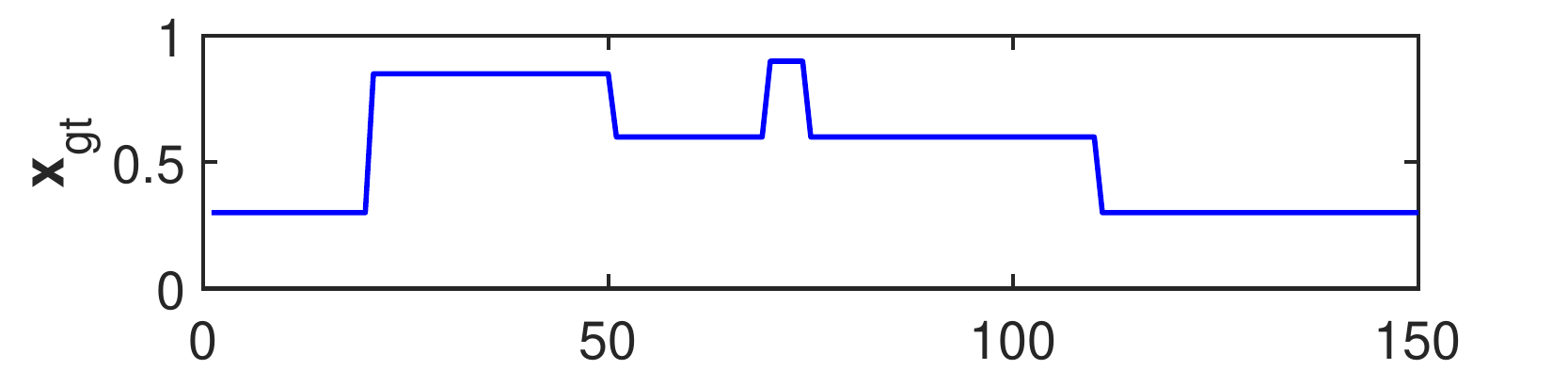}
}
\hfill
\vspace{-0.05cm}
\subfigure[Recovered signal $\bx$]{
\centering
\includegraphics[trim=0 3 0 5, clip, width=0.32\linewidth]{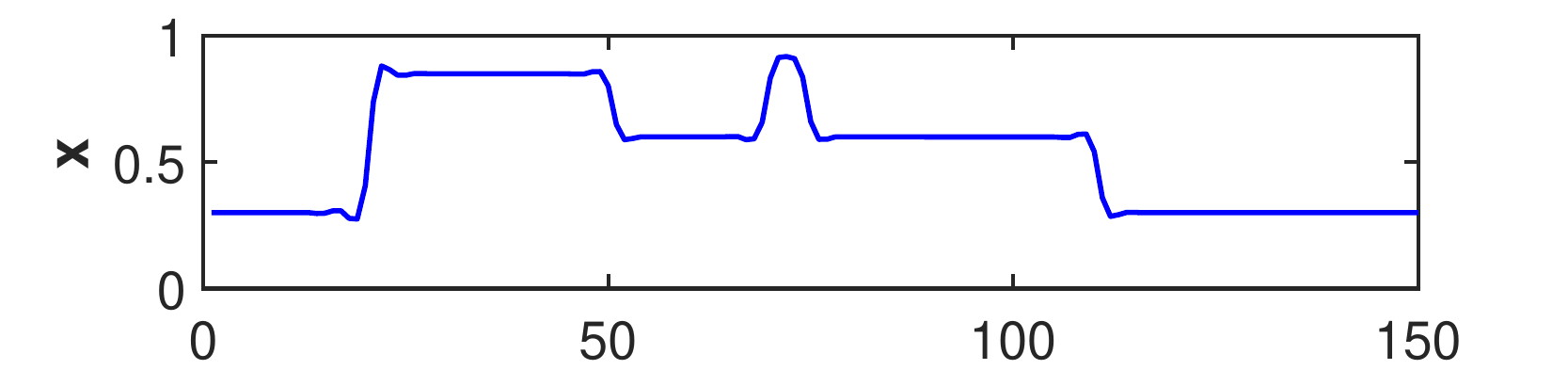}
\centering
\includegraphics[trim=0 3 0 5, clip, width=0.32\linewidth]{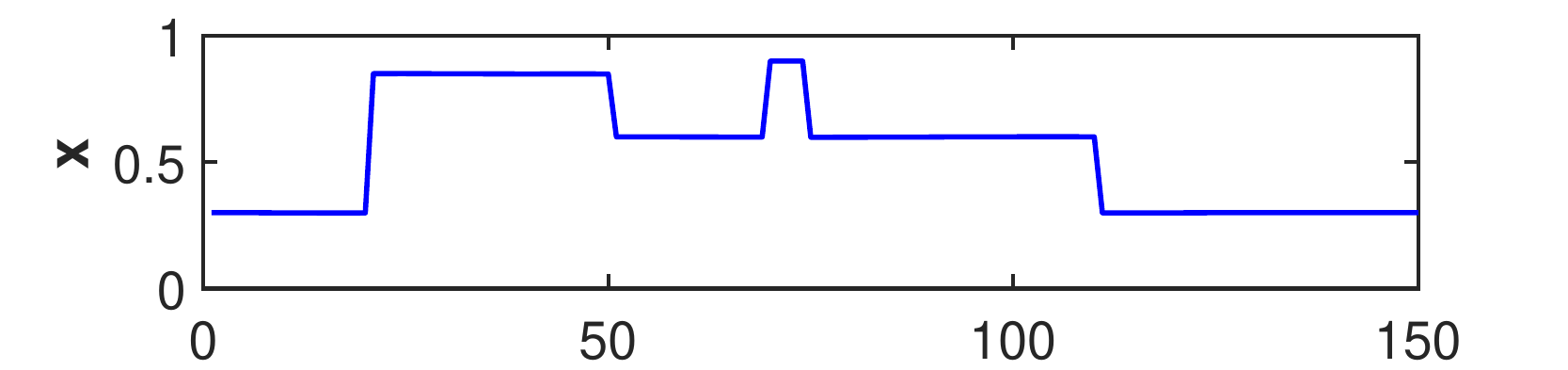}
\centering
\includegraphics[trim=0 3 0 5, clip, width=0.32\linewidth]{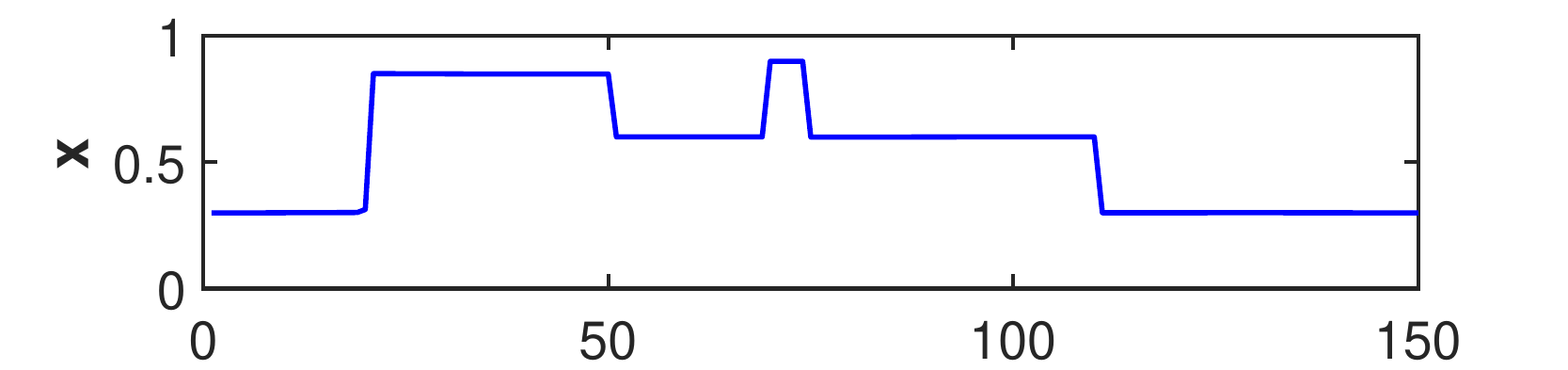}
}
\hfill
\vspace{-0.1cm}
\subfigure[Gradients $\bD\bx$ of recovered signal]{
\centering
\includegraphics[trim=0 2 0 5, clip, width=0.32\linewidth]{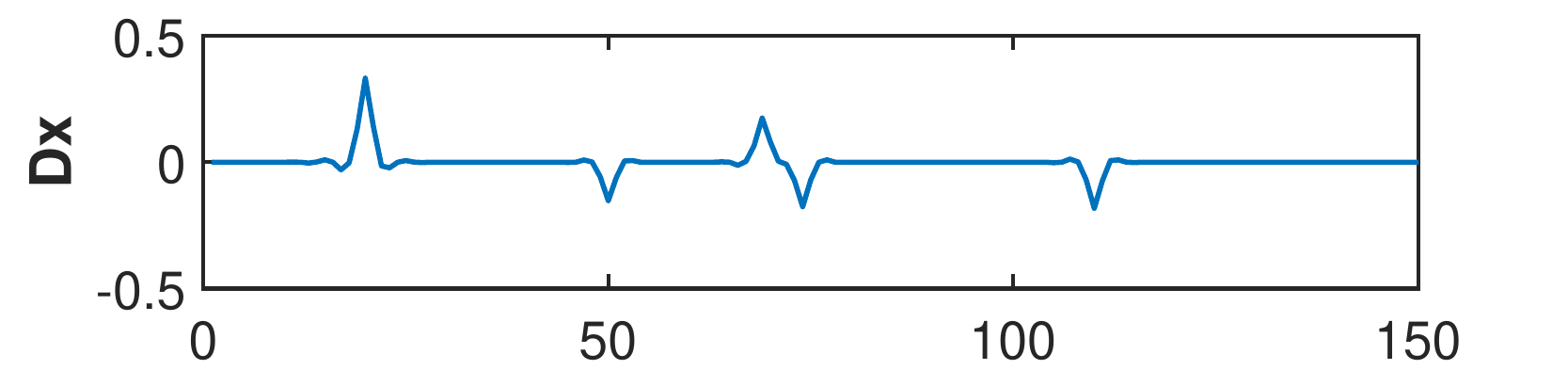}
\centering
\includegraphics[trim=0 2 0 5, clip, width=0.32\linewidth]{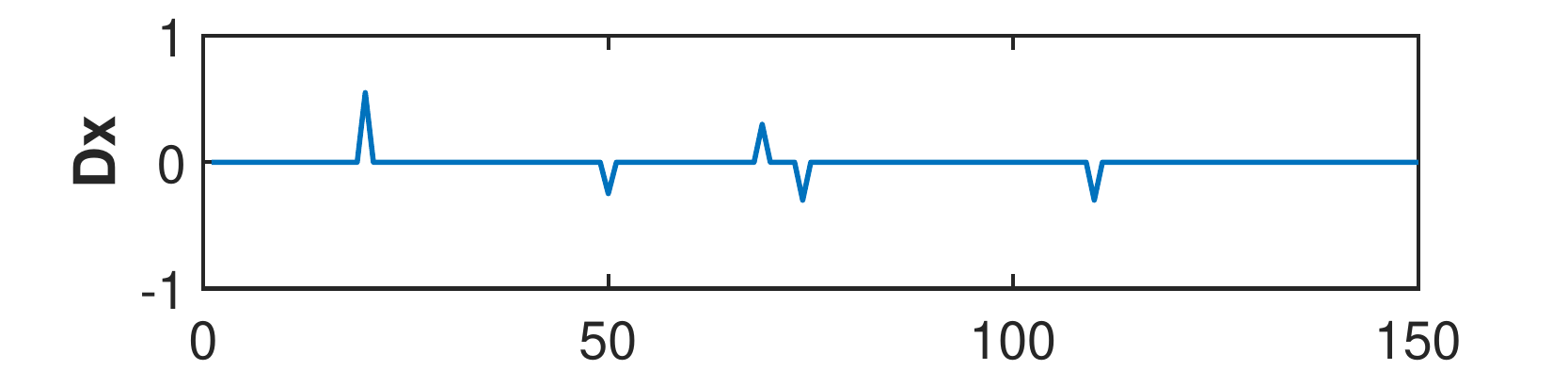}
\centering
\includegraphics[trim=0 2 0 5, clip, width=0.32\linewidth]{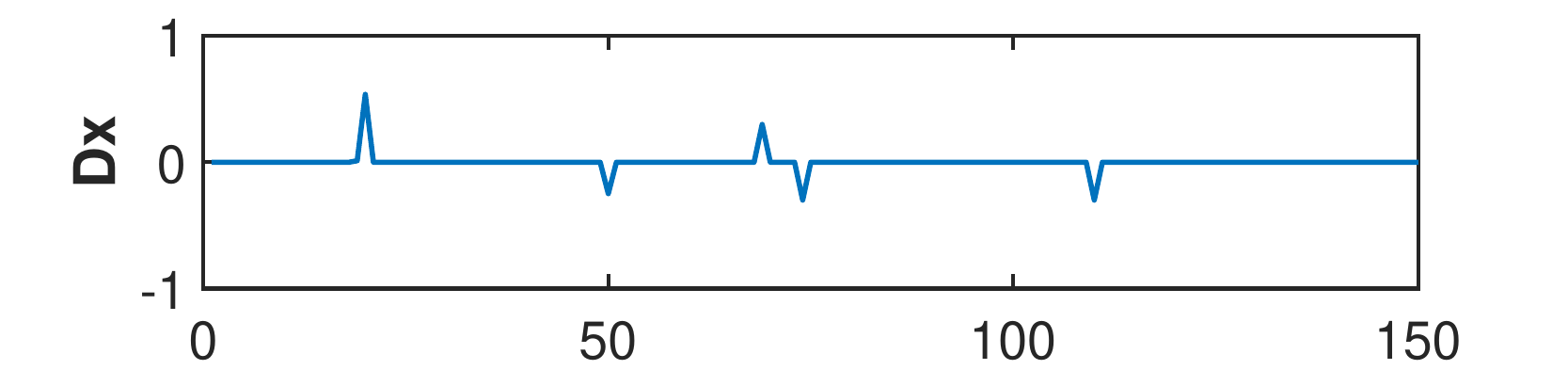}
}
\vspace{-0.2cm}
\caption{Deconvolution results on a 1D signal example.  (a) The blurred signal $\by$ is generated by blurring the sharp signal $\bx_{\text{gt}}$ with the blur kernel $\bk$. In (b) and (c), the results from left to right are generated by solving the TV regularized model using ADMM \cite{boyd2011ADMM} (TV-ADMM), solving problem \eqref{eq:mptv2} using the proposed MPTV method \dgr{($\kappa=1$)}, and solving problem \eqref{eq:mptv2} using ADMM given the support of the ground truth signal $\bx_{\text{gt}}$, respectively. (c) shows the gradients of the recovered signal in (b). In each case, $\lambda=0.01$.
}
\vspace{-0.3cm}
\label{fig:1d_example}
\end{figure*}

\subsection{Image Deconvolution}
\label{sec:related_work_deconv}
{Abundant studies have been conducted} on (non-blind) image deconvolution, {starting from} the classical ones including Wiener filter and Richadson-Lucy (RL) algorithms \cite{richardson1972bayesian}. Based on the piecewise smooth property of natural images and statistical studies of image gradients, many methods perform image deconvolution by preserving significant edges or sparse gradients. To this end, a series of TV based methods have been extensively exploited \cite{wang2008new,lou2015weighted,yuan2015l0tv}. Additionally,
in \cite{levin2007coded_irls}, an iterative reweighed least square (IRLS) has been used to achieve excellent results by encouraging the piecewise smooth property in image deconvolution. Krishnan and Fergus \cite{krishnan2009fast} proposed to perform image deconvolution using a hyper-Laplacian prior.
Moreover, instead of preserving sparse gradients, many other methods also perform well by relying on patch similarity \cite{dabov2008bm3ddeb}, kernel similarity \cite{kheradmand2014Kernel}, progressive multi-scale deconvolution \cite{yuan2008progressive}, group based sparse representation \cite{zhang2014group}, or stochastic optimization \cite{xiao2015stochastic}.

\par
\dgr{Besides the empirically designed models above, some methods have been proposed to learn models  for image deconvolution \cite{schuler2013machine,Zoran2011EPLL,xu2014deep,schmidt2014csf,gong2018learning}. The generative prior learning methods \cite{Zoran2011EPLL,schmidt2010generative} focus on learning image priors/regularizers (\eg a Gaussian Mixture Model based prior in \cite{Zoran2011EPLL}) from a set of clear and sharp images, which usually leads to non-convex problems and time-consuming optimization. For the sake of computational efficiency, some discriminative learning based methods \cite{schuler2013machine,xu2014deep,schmidt2014csf} learn the deconvolution models (\eg a convolutional neural network based model \cite{xu2014deep}) relying on pairs of sharp and blurred images. The learning based deconvolution methods are effective to utilize more information from training examples than the empirically designed approaches.
However, suffering from limited flexibility on the model design, existing learning based methods \cite{xu2014deep,schuler2013machine} often require customized training for specific blur kernels or noise, which limits their practicability.}

\par
In practice, image deconvolution results often suffer from wave-like  ringing artifacts, especially \kui{in regions} near strong edges. This \kui{issue} may be caused by the unavoidable error in blur \kui{kernel estimation} \cite{Shan2008High}, the Gibbs phenomenon \cite{yuan2007image}, the zero values in the frequency spectrum of the blur kernel \cite{mosleh2014image} and/or the mismatch between the data and the model (\eg non-conforming noise, saturation) \cite{cho2011outlier,Whyte2014Deblurring}. To reduce these artifacts in deconvolution, Yuan \etal \cite{yuan2008progressive} performed the Richardson-Lucy algorithm in a residual inter-and-intra-scale scheme with \kui{edge-preserving} bilateral filters. Shan \etal \cite{Shan2008High} proposed to \kui{combin} a smoothness constraint \kui{with} the $\ell_1$-norm regularizer.  Mosleh \etal \cite{mosleh2014image} proposed a post processing method to detect and suppress ringing based on frequency analysis. Some methods tried to avoid artifacts using more accurate imaging models and data fitting terms \cite{cho2011outlier,xu2010two,Whyte2014Deblurring}.

\section{Matching Pursuit Total Variation \\for Image Deconvolution}
\label{sec:mptv}

In this section, we present a matching pursuit based TV minimization method for non-blind image deconvolution.

\subsection{\kuired{A QCLP Reformulation of TV Model}}
Image deconvolution is rendered more challenging by the ill-posed nature of the problem and the noise in both the blurred image and the estimated blur kernel.
\dgr{
With the TV regularizer introduced in \eqref{eq:iso_tv_norm}, the clear and sharp image can be recovered by solving the following classic TV-norm regularized optimization problem:
\begin{equation}\label{eq:tv}
   \min_{\bx}  ~\frac{1}{2}\|\by-\bA\bx\|_2^2 + \lambda\Omega_{\TV}(\bx),
\end{equation}
where  $\Omega_{\TV}(\bx)=\|\bD\bx\|_{2,1} = \sum\nolimits_{i=1}^n \|~[(\bD_v\bx)_i, (\bD_h\bx)_i]~\|_{2}$. In problem \eqref{eq:tv}, the TV regularizer treats all elements in $\bx$ homogeneously. Although the $\ell_{2,1}$-norm is used for inducing sparsity on image gradients, it tends to shrink the large elements in image gradients towards zero \cite{Brinkmann2017bias}. As a result, while the TV regularizer in \eqref{eq:tv} favors a piecewise smooth solution, it may incur over-smoothing \dgr{deconvolution results} \cite{lou2015weighted} due to the bias issue \cite{deledalle2017clear,Brinkmann2017bias}.
We thus consider minimizing the total variation value by explicitly detecting and only preserving the sparse nonzero gradients, while suppressing the insignificant components. In this way, we can maintain the significant high-frequency elements in the image and suppress the noise. To illustrate the above issue, we show an example for 1D signal deconvolution in Fig. \ref{fig:1d_example}. This experiment shows that minimizing TV with active gradient detection can reduce the regularization bias significantly, and thus alleviates the over-smoothness and increases estimation accuracy accordingly.}

\par
\dgr{To explicitly detect the sparse nonzero image gradients in image deconvolution, we will formulate the TV model with a new binary vector to indicate the nonzero gradients, which will be estimated with the image simultaneously. Firstly,
we introduce $\bz=[\bz_v^\T, \bz_h^\T]^\T\in\mbR^{2n}$ to denote the concatenation of the gradients of two directions, namely $\bD_v\bx$ and $\bD_h\bx$. To find the most significant image gradients which contribute the most to the quality of image deconvolution, we introduce two binary vectors $\btau_v \in\{0,1\}^n$ and $\btau_h \in\{0,1\}^n$ to indicate the nonzero ones in $\bD_v\bx$ and $\bD_h\bx$, respectively.
We further define $\btau_v = \btau_h \triangleq \btau$, since each term in the $\ell_{2,1}$-norm based TV regularizer consists of two elements over vertical and horizontal directions, \eg $(\bD_v\bx)_i$ and $(\bD_h\bx)_i$ shown in \eqref{eq:iso_tv_norm}. We then let $\tbtau=[\btau^\T, \btau^\T]^\T$ indicate the nonzero components in $\bD\bx$ via $(\bz\odot\tbtau)$. To induce the sparsity, we impose an $\ell_0$-norm constraint $\|\btau\|_0\leq \kappa$. For simplicity, let $\Lambda=\{\btau|\btau\in\{0,1\}^{n},\|\btau\|_0\leq \kappa\}$ be the feasible domain of $\btau$.}

With the introduction of $\btau$ and the auxiliary variable $\bz$, we propose to address the following  TV-norm regularized problem:
\begin{equation}
\begin{split} \label{eq:mptv2_nonsplit}
  \min_{\btau\in\Lambda} \min_{\bx,\bxi,\bz}  &~~\frac{1}{2}\|\bxi\|_2^2 + \lambda\sum\nolimits_{i=1}^n \|~[(\bz_v)_i, (\bz_h)_i]~\|_2\\
  \st &~~\bxi=\by-\bA\bx,~\bD\bx=\bz\odot\tbtau, \\
\end{split}
\end{equation}
where $(\bz_v)_i$ and $(\bz_h)_i$ denote the $i$-th elements of $\bz_v$ and $\bz_h$, respectively.
\kui{The constraint $\|\btau\|_0\leq \kappa$ explicitly \kuired{constrains} the sparsity of gradients via $\bD\bx=\bz\odot\tbtau$, leading to an image $\bx$ with sparse $\bD\bx$ and a small total variation.
}

\par
To simplify the computation of $\sum\nolimits_{i=1}^n \|~[(\bz_v)_i, (\bz_h)_i]~\|_2$,
\dgr{we define binary matrices $\bC_i\in \{0,1\}^{2\times 2n}, \forall i\in[n]$ to select both $(\bz_v)_i$ and $(\bz_h)_i$, $\forall i\in[n]$ from $\bz$ via multiplication between $\bC_i$ and $\bz$, \ie $\bC_i\bz$. As an example, there is $\bC_i\bz=[(\bz_v)_i, (\bz_h)_i]^\T$.
In each $\bC_i$, the $i$-th and $(i+n)$-th elements on the two rows, respectively, are 1, and the rest are 0. }
Then, the $\ell_{2,1}$-norm in problem \eqref{eq:mptv2_nonsplit} can be rewritten as $\sum_{i=1}^n \|\bC_i \bz\|_2$.
\dgr{Moreover, we introduce a new vector $\bd$ to represent $\bz$, and let its subvectors $\bd_i=\bC_i\bz, \forall i\in[n]$.}
Then, we can obtain an equivalent formulation of problem \eqref{eq:mptv2_nonsplit}:
\begin{equation}
\begin{split} \label{eq:mptv2}
  \min_{\btau\in\Lambda} \min_{\bx,\bxi,\bz,\bd}  &~~\frac{1}{2}\|\bxi\|_2^2 + \lambda\sum\nolimits_{i=1}^n \|\bd_i\|_2\\
  \st &~~\bxi=\by-\bA\bx,~\bD\bx=\bz\odot\tbtau, \\
  &~~\bd_i=\bC_i\bz, \forall i\in [n].\\
\end{split}
\end{equation}

\par
In problem \eqref{eq:mptv2}, the integer $\kappa$ reflects \kui{our} rough knowledge of the sparsity of $\bD\bx$. Note that there are $|\Lambda|=\sum_{i=0}^\kappa\binom{n}{i}$ feasible $\btau$'s in $\Lambda$. Problem \eqref{eq:mptv2} tends to find the optimal $\btau$ from $\Lambda$ to minimize the objective in (\ref{eq:mptv2}).
Even though the sparsity constraint $\|\btau\|_0\leq \kappa$ explicitly induces sparsity in image gradients, the regularizer $\lambda\sum\nolimits_{i=1}^n \|\bd_i\|_2$ is still necessary for the robust recovery of $\bz$ due to possible noise in $\by$. Meanwhile, benefiting from the sparsity constraint and the optimization scheme that will be introduced, a small $\lambda$ can be used to reduce the bias \kui{induced by the regularization} \cite{tan2013mpl,nowak2007gradient}, \kui{which is beneficial} in alleviating over-smoothness.

\par
\kuired{Unfortunately,} problem \eqref{eq:mptv2} is a mixed integer programming problem, and thus hard to address. We consider a \emph{convex relaxation} to this problem. \kui{To achieve this, we first introduce the following proposition.}
\begin{proposition} \label{prop:mptv_dual}
By \kui{deriving the dual form of the inner minimization problem in (\ref{eq:mptv2})} w.r.t. $\bx$, $\bxi$, $\bz$ and $\bd$ \kui{given fixed $\btau\in\Lambda$}, problem \eqref{eq:mptv2} can be \kui{transformed} into
\begin{equation} \label{eq:mptv_dual}
\begin{split}
  \min_{\btau\in \Lambda}\max_{\balpha}& ~-\frac{1}{2}\|\balpha\|_2^2 + \balpha^\T\by\\
  \st &~~ \bA^\T\balpha =\bD^\T\bbeta, \\
  &~~ \tau_i \|\bbeta_i\|_2 \leq \lambda, ~\bbeta_i=\bC_i\bbeta, ~\forall i\in [n].\\
\end{split}
\end{equation}
where $\balpha\in \mbR^n$, $\bbeta_v\in \mbR^{n}$, $\bbeta_h\in \mbR^{n}$, $\bbeta=[\bbeta_v^\T, \bbeta_h^\T]^\T$ are the Lagrangian dual variables.
For any $\btau\in \Lambda$,  $\balpha^*=\bxi^*$ at the optimum of the inner problem.
\end{proposition}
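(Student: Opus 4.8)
The plan is to fix $\btau\in\Lambda$, form the Lagrangian of the inner minimization in \eqref{eq:mptv2}, and eliminate the four primal blocks $\bx,\bxi,\bz,\bd$ one at a time. I assign a multiplier $\balpha\in\mbR^n$ to the constraint $\bxi=\by-\bA\bx$, a multiplier $\bbeta\in\mbR^{2n}$ to $\bD\bx=\bz\odot\tbtau$, and a multiplier $\bmu_i\in\mbR^2$ to each $\bd_i=\bC_i\bz$, giving
\begin{equation*}
L = \tfrac12\|\bxi\|_2^2 + \lambda\sum\nolimits_i\|\bd_i\|_2 + \balpha^\T(\by-\bA\bx-\bxi) + \bbeta^\T(\bD\bx-\bz\odot\tbtau) + \sum\nolimits_i\bmu_i^\T(\bd_i-\bC_i\bz).
\end{equation*}
Because the inner objective is convex and every constraint is affine, Slater's condition holds trivially, so strong duality applies and it suffices to compute $\min_{\bx,\bxi,\bz,\bd}L$ and read off the induced dual constraints.

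Three of the four block minimizations are routine. Minimizing the $\bxi$-terms $\tfrac12\|\bxi\|_2^2-\balpha^\T\bxi$ gives $\bxi=\balpha$, which immediately establishes the stationarity identity $\balpha^\ast=\bxi^\ast$ claimed in the statement and contributes $-\tfrac12\|\balpha\|_2^2$; together with the constant term $\balpha^\T\by$ this produces the dual objective $-\tfrac12\|\balpha\|_2^2+\balpha^\T\by$. The $\bx$-terms are linear, with coefficient $\bD^\T\bbeta-\bA^\T\balpha$, so finiteness of the minimum forces the equality constraint $\bA^\T\balpha=\bD^\T\bbeta$. For each $\bd_i$, the partial minimization $\min_{\bd_i}(\lambda\|\bd_i\|_2+\bmu_i^\T\bd_i)$ equals $0$ when $\|\bmu_i\|_2\le\lambda$ (by Cauchy--Schwarz, attained at $\bd_i=\0$) and $-\infty$ otherwise, yielding the constraint $\|\bmu_i\|_2\le\lambda$.

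The one delicate step --- and the main obstacle --- is the $\bz$-block, whose linear coefficient $-(\bbeta\odot\tbtau)-\sum_i\bC_i^\T\bmu_i$ must vanish for the dual to be finite. Here I will exploit the disjoint-selection structure of the $\bC_i$: since the two rows of $\bC_i$ pick out coordinates $i$ and $i+n$, one has $\bC_i\bC_i^\T=\bI$ (the $2\times2$ identity) and $\bC_i\bC_j^\T=\0$ for $i\ne j$. Left-multiplying the vanishing-coefficient equation by $\bC_i$ therefore solves it uniquely as $\bmu_i=-\bC_i(\bbeta\odot\tbtau)$. Using $\tbtau=[\btau^\T,\btau^\T]^\T$, a direct coordinate check gives $\bC_i(\bbeta\odot\tbtau)=\tau_i\bC_i\bbeta=\tau_i\bbeta_i$, so that $\|\bmu_i\|_2=\tau_i\|\bbeta_i\|_2$ because $\tau_i\ge0$. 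Substituting into the constraint $\|\bmu_i\|_2\le\lambda$ converts it into $\tau_i\|\bbeta_i\|_2\le\lambda$, precisely the constraint in \eqref{eq:mptv_dual}. Finally, the outer minimization over $\btau\in\Lambda$ is carried along unchanged, so assembling the dual objective with the three resulting families of constraints yields the stated minimax problem, while the relation $\balpha^\ast=\bxi^\ast$ is exactly the $\bxi$-stationarity recorded above.
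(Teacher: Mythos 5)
Your proof is correct and follows essentially the same route as the paper's Appendix A: the identical Lagrangian for the inner problem with fixed $\btau$, blockwise minimization over $\bxi$, $\bx$, $\bz$, and $\bd$ yielding the dual objective $-\frac{1}{2}\|\balpha\|_2^2+\balpha^\T\by$, the equality constraint $\bA^\T\balpha=\bD^\T\bbeta$, the norm bound $\tau_i\|\bbeta_i\|_2\leq\lambda$, and $\balpha^*=\bxi^*$ from the $\bxi$-stationarity. The only cosmetic differences are that you eliminate the $\bz$-block multipliers directly via the orthogonality $\bC_i\bC_j^\T=\delta_{ij}\bI$ (where the paper instead parametrizes $\bgamma_i=\bC_i\bgamma$ and deduces $\bgamma=\diag(\tbtau)\bbeta$), you replace the paper's minimax argument for the $\bd$-block with a cleaner direct dual-norm computation, and you make the strong-duality justification (Slater for affine constraints) explicit where the paper leaves it implicit.
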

The proof can be found in Appendix.

\par
\kuired{From Proposition \ref{prop:mptv_dual},  $\balpha^*$ equals to the fitting error $\bxi^*$.
Without loss of generality, we assume $\balpha \in \cA=[-l, l]^n$, where $l$ is a sufficiently large positive number and  $\cA$ is a compact domain. Based on \eqref{eq:mptv_dual}, the feasible domain of $\balpha$ w.r.t. each $\btau$ is $\cA_{\btau}=\{\balpha| \bA^\T\balpha =\bD^\T\bbeta, \tau_i \|\bbeta_i\|_2 \leq \lambda, \forall i\in [n], \balpha\in [-l,l]^n\}$. }
\kuired{Note that, according to the constraints in \eqref{eq:mptv_dual}, all $\cA_{\btau}$'s share the same $\bbeta$.}
We further define
\begin{equation}
  \phi(\balpha, \btau) = \frac{1}{2}\|\balpha\|_2^2 - \balpha^\T\by, ~\balpha \in \cA_{\btau}.
\end{equation}
By applying the minimax inequality in \cite{boyd2004convex}, we have
\begin{equation}\label{eq:minimax_inequality}
  \min_{\btau\in\Lambda}~\max_{\balpha\in\cA}-\phi(\balpha,\btau)
  \geq\max_{\balpha\in\cA}~\min_{\btau\in\Lambda}-\phi(\balpha,\btau).
\end{equation}
According to \eqref{eq:minimax_inequality}, $\max_{\balpha\in\cA}\min_{\btau\in\Lambda}-\phi(\balpha,\btau)$ is a lower bound of problem \eqref{eq:mptv_dual} and it is a convex problem.
By introducing a new variable $\theta\in\mbR$, $\max_{\balpha\in\cA}\min_{\btau\in\Lambda}-\phi(\balpha,\btau)$ can be equivalently written as a \emph{quadratically constrained linear programming} (QCLP) problem \cite{pee2011solving}:
\begin{equation}\label{eq:mptv2_qclp}
  \min_{\balpha\in\cA,\theta}~\theta,~~ \st~\phi(\balpha,\btau)\leq\theta, \forall \btau\in\Lambda,
\end{equation}
which is a convex relaxation of problem (\ref{eq:mptv_dual}).
\kuired{Note that each feasible $\btau$ corresponds to a constraint.
Note that there are $T=\sum_{i=0}^\kappa \binom{n}{i}$ elements in $\Lambda$. In other words,  problem \eqref{eq:mptv2_qclp} has exponentially many constraints, which makes it difficult to address directly.}

\subsection{\kui{Optimization of the QCLP TV Model}}
\label{sec:qclp_opt}
Though there are exponentially many constraints in \eqref{eq:mptv2_qclp}, most of them are inactive at the optimum, since only a subset of components are relevant in fitting the observation. Accordingly, we seek to address problem \eqref{eq:mptv2_qclp} using a \emph{cutting-plane method} \cite{kortanek1993ccp,tan2013mpl} as shown in Algorithm \ref{algo:ccp_dual}.
\dgr{Instead of handling all constraints at the same time, Algorithm \ref{algo:ccp_dual} iteratively finds the active constraints and then solves a subproblem \eqref{eq:ccp_master_prob} with the selected active constraints only.}

\begin{algorithm}[htp]\label{algo:ccp_dual}
\caption{Cutting-plane for the QCLP Problem \eqref{eq:mptv2_qclp}}
\KwIn{Observation $\by$, $\bA$, parameter $\lambda$.}
Initializing $\balpha^0=\by-(\1\otimes(\sum_{i=1}^n y_i/n))$, and $\btau_0=\0$\;
Set $\Lambda_0=\emptyset$ and $t=1$\;
\While{Stopping conditions are not achieved}
{
  Find the $\btau_t$ corresponding to \textbf{the most violated constraint} based on $\balpha^{t-1}$\;
  Set $\Lambda_{t}=\Lambda_{t-1}\cup\{\btau_t\}$\;
  \dgr{\textbf{Solve the subproblem} corresponding to $\Lambda_t$:}
\begin{equation}\label{eq:ccp_master_prob}
    \min_{\balpha\in \cA,\theta\in\mbR} ~\theta, ~~\st~ \phi(\balpha,\btau)-\theta\leq 0, \forall \btau \in \Lambda_t,
  \end{equation}
  obtaining the solution $\balpha^t$. Let $t=t+1$\;
}
\end{algorithm}

\par
\dgr{Algorithm \ref{algo:ccp_dual} involves two main steps: finding the most violated constraints and the subproblem optimization step. Since each $\btau\in\Lambda$ corresponds to a constraint, in the $t$-th iteration, we find the most active $\btau_t$ based on $\balpha^{t-1}$ and add it into the active constraint set $\Lambda_t$, which is initialized as an empty set $\emptyset$.
Then we update $\balpha^t$ by solving the subproblem \eqref{eq:ccp_master_prob} with the constraints defined in $\Lambda_t$. The algorithm terminates when the stopping conditions are achieved. }

\par
\dgr{As will be shown later, in each iteration $t$, the activated $\btau_t$ indicates at most $\kappa$ nonzero elements in $\bD_v\bx$ and $\bD_h\bx$. Activating the most active $\btau_t$ is equivalent to finding $\kappa$ indices of the corresponding nonzero elements, which can be recorded in an index set $\cC_t\subseteq [n]$, \ie $\cC_t=\mathrm{supp}(\btau_t)$. For convenience, we define an index set $\cS_t=\cup_{i=1}^t\cC_i$ to record the indices indicated by all $\btau$'s in $\Lambda_t$.
As shown in Section \ref{sec:sub_opt}, due to the difficulty of directly solving the subproblem \eqref{eq:ccp_master_prob} w.r.t. the dual variable $\balpha$ \cite{Tan2014Towards,tan2013mpl}, we will investigate an efficient way to solve the subproblem {w.r.t. to the primal variable} $\bx$ instead.}

\subsection{\kuired{Solving QCLP Subproblem in Primal Form}}
\label{sec:sub_opt}
\dgr{In each iteration of Algorithm \ref{algo:ccp_dual}, after updating the active constraint set, we address subproblem \eqref{eq:ccp_master_prob} with a subset of activated $\btau$'s in $\Lambda_t$. Although the number of constraints in \eqref{eq:ccp_master_prob} is significantly smaller than the original problem \eqref{eq:mptv2_qclp}, directly optimizing it w.r.t. the dual variable $\balpha$ is not easy \cite{Tan2014Towards,tan2013mpl}. However, considering that the activated $\btau$'s in $\Lambda_t$ indicate only a subset of nonzero elements in $\bz$ and $\bD\bx$, we will transform problem \eqref{eq:ccp_master_prob} to an equivalent problem (w.r.t. the primal variables $\bx$) that can be solved much faster \cite{Tan2014Towards,gong2017MPGL}.}

\par
\dgr{Recall that $\cS_t\subseteq [n]$ is defined to record the indices of the nonzero elements indicated by $\btau$'s in $\Lambda_t$, and $\tbtau=[\btau^\T, \btau^\T]^\T \in \{0,1\}^{2n}$ in \eqref{eq:mptv2_nonsplit} is defined to indicate nonzero elements in $\bD\bx\in\mbR^{2n}$.
For convenience, we extend the definition of $\cS_t$ for $\btau$ to a notation $\tcS_t$ for $\tbtau$, \ie, for an $\cS_t$, there is $\tcS_t = \cS_t\cup (\cS_t+n)$ and $(\cS_t+n)=\{i|i=j+n, j\in \cS_t\}$.
Armed with the definition of $\tcS_t$, we transform problem \eqref{eq:ccp_master_prob} into problem \eqref{eq:subproblem_primal} w.r.t. the primal variables $\bx$ and $\bz_{\tcS_t}$ as shown in  Proposition \ref{prop:subproblem}.}
\begin{proposition}
\label{prop:subproblem}
  \dgr{Let $\cS=\cup_{i=1}^t \cC_i$. Assume there is no overlapping element among $\cC_i$'s, problem \eqref{eq:ccp_master_prob} can be addressed by solving
  \begin{equation}\label{eq:subproblem_primal}
  \begin{split}
    \min_{\bx,\bz_{\tcS}}  &~\frac{1}{2}\|\by-\bA\bx\|_2^2 + \lambda\sum\nolimits_{i\in \cS} \|\bC_{i\tcS}\bz_{\tcS}\|_2\\
    \st &~(\bD\bx)_{\tcS}=\bz_{\tcS},(\bD\bx)_{\tcS^c}=\0.
  \end{split}
\end{equation}}
Additionally, the optimal value of $\balpha^*$ under problem \eqref{eq:ccp_master_prob} can be recovered by $\balpha^*=\bxi^*$ where $\bxi^*=\by-\bA\bx^*$.
\end{proposition}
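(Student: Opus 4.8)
The plan is to recognize \eqref{eq:subproblem_primal} as the Lagrangian dual of a constraint-restricted copy of \eqref{eq:mptv_dual}, and then to close the loop with strong duality. First I would note that the objective $\phi(\balpha,\btau)=\frac{1}{2}\|\balpha\|_2^2-\balpha^\T\by$ is independent of $\btau$ in value, the index $\btau$ entering \eqref{eq:ccp_master_prob} only through the domain $\cA_{\btau}$. Hence minimizing $\theta$ subject to $\phi(\balpha,\btau)-\theta\le0$ for every $\btau\in\Lambda_t$ amounts to minimizing $\frac{1}{2}\|\balpha\|_2^2-\balpha^\T\by$ over $\balpha\in\bigcap_{\btau\in\Lambda_t}\cA_{\btau}$. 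Since all $\cA_{\btau}$ share the same $\bbeta$ and $\tau_i\in\{0,1\}$, the constraint $\tau_i\|\bbeta_i\|_2\le\lambda$ is vacuous when $\tau_i=0$ and reduces to $\|\bbeta_i\|_2\le\lambda$ when $\tau_i=1$; intersecting over $\Lambda_t$ thus keeps exactly the balls indexed by $\cS=\cup_{i=1}^t\cC_i$. Assuming $l$ is large enough that the box $\cA=[-l,l]^n$ is inactive, this leaves the restricted dual
\[
\max_{\balpha,\bbeta}\; -\frac{1}{2}\|\balpha\|_2^2+\balpha^\T\by \quad \st\; \bA^\T\balpha=\bD^\T\bbeta,\; \|\bbeta_i\|_2\le\lambda,\; i\in\cS.
\]
The non-overlapping hypothesis on the $\cC_i$'s guarantees that $\cS$ is their disjoint union, so each activated $\btau$ contributes its own block of ball constraints and the bookkeeping between $\Lambda_t$ and $\cS$ stays one-to-one.

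Next I would derive the Lagrangian dual of \eqref{eq:subproblem_primal}, essentially reversing the computation behind Proposition \ref{prop:mptv_dual} but restricted to $\cS$. Introducing $\bxi=\by-\bA\bx$, a multiplier $\balpha$ for $\bxi=\by-\bA\bx$, and a multiplier $\bbeta$ for the pair $(\bD\bx)_{\tcS}=\bz_{\tcS}$ and $(\bD\bx)_{\tcS^c}=\0$, I would minimize the Lagrangian over the primal variables. Minimizing over $\bxi$ gives the stationarity relation $\bxi=\balpha$ and the term $-\frac{1}{2}\|\balpha\|_2^2$; the part linear in the free variable $\bx$ collapses $\bbeta$ against $\bD^\T$ and is bounded below only if $\bA^\T\balpha=\bD^\T\bbeta$; and for each $i\in\cS$ the block $\min_{\bz_i}(\lambda\|\bz_i\|_2-\bbeta_i^\T\bz_i)$, with $\bz_i=\bC_{i\tcS}\bz_{\tcS}$, is finite (and equals $0$) precisely when $\|\bbeta_i\|_2\le\lambda$, by the dual-norm characterization of $\|\cdot\|_2$. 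Collecting the surviving terms reproduces exactly the restricted dual of the previous paragraph, so the two problems coincide.

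Finally I would invoke strong duality to pass back to the primal. Problem \eqref{eq:subproblem_primal} is a convex program whose constraints are all affine, and it is trivially feasible (e.g.\ $\bx=\0$, $\bz_{\tcS}=\0$), so strong duality holds with no duality gap; the primal \eqref{eq:subproblem_primal} and its dual therefore attain a common optimum, which establishes that \eqref{eq:ccp_master_prob} can be solved through \eqref{eq:subproblem_primal}. The stationarity relation $\bxi=\balpha$ obtained above immediately yields $\balpha^*=\bxi^*=\by-\bA\bx^*$, as claimed. The main obstacle I expect is not computational but organizational: carefully justifying the collapse of the $\btau$-indexed constraint family onto the single index set $\cS$ (and pinning down where the non-overlapping hypothesis is actually needed), confirming that the box constraint $\cA$ is inactive so it may be dropped, and checking the constraint qualification for strong duality; once these are in place the dual/primal matching is a routine Lagrangian calculation.
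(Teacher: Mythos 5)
Your proposal is correct and follows essentially the same route as the paper's appendix proof: both reduce \eqref{eq:ccp_master_prob} to maximizing $-\frac{1}{2}\|\balpha\|_2^2+\balpha^\T\by$ subject to $\bA^\T\balpha=\bD^\T\bbeta$ and the ball constraints $\|\bbeta_i\|_2\le\lambda$ indexed by $\cS$, then identify this with the Lagrangian dual of \eqref{eq:subproblem_primal} via the dual-norm characterization of $\|\cdot\|_2$, recovering $\balpha^*=\bxi^*$ from stationarity in $\bxi$. The only differences are cosmetic: where you collapse the $\btau$-indexed constraint family by directly intersecting the domains $\cA_{\btau}$ (using that $\phi$ is $\btau$-independent in value), the paper reaches the same restricted dual through a small lemma with simplex multipliers $\bmu\in\Pi$ and Sion's minimax theorem, and it routes the group terms through auxiliary variables $\bd_j$, $\bgamma_j$ rather than dualizing the $\bz$-blocks directly, with the non-overlap assumption playing the same bookkeeping role in both arguments.
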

The proof can be found in Appendix. \dgr{In \eqref{eq:subproblem_primal}, the subscript $t$ of $\tcS_t$ is omitted for simplifying the representation. }

\par
\dgr{Proposition \ref{prop:subproblem} implies that activating $\btau$'s in the QCLP problem in \eqref{eq:mptv2_qclp} corresponds to activating a subset of nonzero image gradients in the TV minimization problem in primal, which are indexed by $\cS_t$.
The nonzero gradients activated in each iteration of Algorithm \ref{algo:ccp_dual} are recorded in $\cC_t$.
}

\par
\dgr{In problem \eqref{eq:subproblem_primal}, only a subset of image gradients (\ie $\bD\bx_{\tcS}$ and $\bz_{\tcS}$) indicated by $\tcS$ can be nonzero, the constraint $(\bD\bx)_{\tcS^c}=\0$ thus reduces the uncertainty of the problem, making the optimization easy.
We will introduce an efficient alternating direction method of multipliers (ADMM) based algorithm to handle subproblem \eqref{eq:subproblem_primal} in Section \ref{sec:sub_opt_admm}.}

\subsection{Finding the Most Violated Constraint}
\label{sec:finding_cons}
\dgr{In each iteration of Algorithm \ref{algo:ccp_dual}, we need to find the most active $\btau$ within a large number of elements in $\Lambda$ based on the updated $\balpha^{t-1}$ and the corresponding $\bbeta$.
}

\par
At the optimum of problem \eqref{eq:mptv2_qclp}, the following condition should hold for all $\btau$'s:
\begin{equation}\label{eq:opt_cond_worst}
  \bA^\T\balpha =\bD^\T\bbeta,~ \tau_i \|\bbeta_i\|_2 \leq \lambda,~ \forall i\in [n].
\end{equation}
\dgr{Given an $\balpha$ and $\bbeta$, a subvector $\bbeta_i$ with the larger $\|\bbeta_i\|_2$ (and $\tau_i=1$) violates the optimality condition \eqref{eq:opt_cond_worst} the more.
In this sense, the most active $\btau$ indicates \emph{the largest number} of $\|\bbeta_i\|_2$ with \emph{the largest values}.
Due to the constraint $\|\btau\|_0\leq \kappa$, with a given $\bbeta$, we construct the most active $\btau$ by finding the $\kappa$ largest $\|\bbeta_i\|_2$, and then setting the corresponding $\tau_i$ to 1 and the rest to 0.}

\par
\dgr{However, we cannot directly obtain the solution of the dual variables $\balpha$ and $\bbeta$, since the optimization of subproblem \eqref{eq:ccp_master_prob} is achieved by solving the problem in \eqref{eq:subproblem_primal} w.r.t. the primal variable $\bx$, as introduced in Section \ref{sec:sub_opt}.
Thus, in each iteration, after solving the subproblem in primal (\ie problem \eqref{eq:subproblem_primal}), we firstly recover $\balpha$ from the solution $\bx$ via $\balpha=\bxi=\by-\bA\bx$, and then reconstruct $\bbeta$ based on the equality constraint $\bA^\T\balpha =\bD^\T\bbeta$ in \eqref{eq:ls_worst_case_ana}. }
Recall that $\balpha\in \mbR^{n}$ and $\bbeta = [\bbeta_v^\T, \bbeta_h^\T]^\T\in \mbR^{2n}$. It is thus highly ill-posed to recover $\bbeta$ from $\balpha$ by solving the linear system $\bA^\T\balpha =\bD^\T\bbeta$.
\dgr{We use the standard $\ell_2$-norm based regularizer $\|\bbeta\|_2^2$ to alleviate the ill-posed nature. Relying on the $\ell_2$-norm based regularizer, we can obtain a closed-form solution of $\bbeta$ efficiently, which simplifies the optimization.
We thus try to obtain $\bbeta$ approximately by solving:}
\begin{equation}\label{eq:ls_worst_case_ana}
  \min_{\bbeta} \frac{1}{2} \|\bD^\T\bbeta-\bA^\T\balpha\|_2^2 + \frac{r}{2}\|\bbeta\|_2^2,
\end{equation}
where $r>0$ is a penalty parameter. Even though $\bD$ is a concatenation of two parts, \ie $\bD=[\bD_v^\T, \bD_h^\T]^\T$, we can still efficiently solve \eqref{eq:ls_worst_case_ana} using Fast Fourier Transforms (FFTs) (see details in Appendix). Then we can easily obtain $\bbeta_i,\forall i\in [n]$ based on $\bbeta_i=\bC_i \bbeta$. For convenience, we define a vector $\bg$, where $g_i=\|\bbeta_i\|_2$.

\par
\dgr{In the $t$-th iteration of Algorithm \ref{algo:ccp_dual}, with the recovered $\bbeta$, we can find the most active $\btau_t$.
As introduced in Section \ref{sec:qclp_opt}, in practice, we recode the $\kappa$ indices indicated by $\btau_t$ into a set $\cC_t$, \ie $\cC_t=\mathrm{supp}(\btau_t)$ and let $\cS_t=\cup_{i=1}^t\cC_i$ record the indices of the all activated $\btau$'s.
\dgrr{In general, once an element has been activated and added into $\cS_{t-1}$, it is unlikely to be activated again and selected in $\cC_t$ in the subsequent iterations. However, if the subproblem is not solved accurately, some of the activated elements may have large values of $g_i$ (\ie $\|\bbeta_i\|_2$) and be activated again.}
To avoid overlapping components among $\cC_t$ in practice, we form $\cC_t$ from $[n]\setminus\cS_{t-1}$.
The algorithm for finding the most violated constraint is summarized in Algorithm \ref{algo:worst-case_ana}.}

\begin{algorithm}[htp]\label{algo:worst-case_ana}
\caption{Finding the Most Violated Constraint}
\KwIn{$\balpha$, $\kappa$, $\bA$ and regularizer parameter $r$.}\dgr{Recover $\bbeta$ from $\balpha$ by solving problem \eqref{eq:ls_worst_case_ana}}\;
Let $\bbeta_i=\bC_i\bbeta$ and calculate $g_i=\|\bbeta_i\|_2,\forall i\in[n]$\;
Initialize $\btau=\0$, and find the $\kappa$ largest $g_i$'s\;
Set $\tau_i$ corresponding to the $\kappa$ largest $g_i$'s to 1\;
Form $\cC$, and return $\btau$ and $\cC$.
\end{algorithm}

\begin{figure*}[!t]
\centering
\subfigure[Input $\by$ and $\bk$]{
\begin{minipage}[b]{.16\textwidth}
\centerline{
\begin{overpic}[width=1\textwidth]
{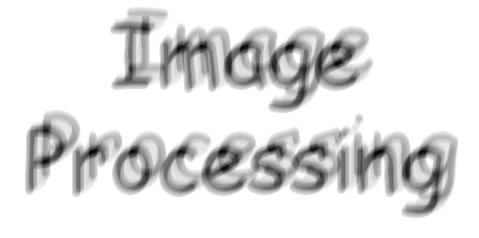}
\put(68,45){\includegraphics[width=0.24\linewidth]{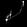}}
\put(69,47){\color{white}{\bf $\bk$}}
\end{overpic}}
\end{minipage}
}
\hspace{-0.4cm}
\subfigure[Ground truth $\bx^*$]{
\begin{minipage}[b]{.16\textwidth}
\centerline{
\begin{overpic}[width=1\textwidth]
{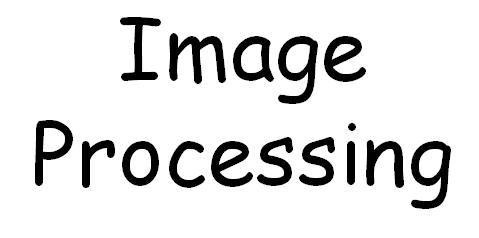}
\end{overpic}}
\end{minipage}
}
\hspace{-0.4cm}
\subfigure[Results of TV-ADMM]{
\begin{minipage}[b]{.16\textwidth}
\centerline{
\begin{overpic}[width=1\textwidth]
{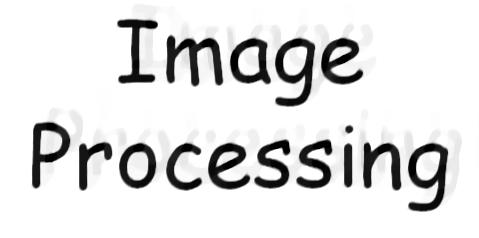}
\end{overpic}}
\end{minipage}
}
\hspace{-0.4cm}
\subfigure[Results of MPTV]{
\begin{minipage}[b]{.16\textwidth}
\centerline{
\begin{overpic}[width=1\textwidth]
{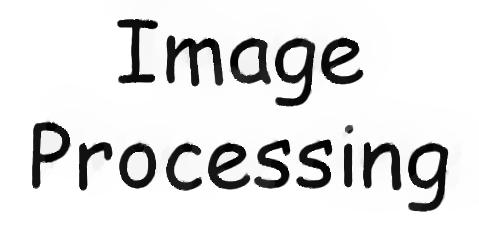}
\end{overpic}}
\end{minipage}
}
\hspace{-0.4cm}
\subfigure[PSNR]{
\begin{minipage}[b]{.16\textwidth}
\centerline{
\begin{overpic}[width=1\textwidth]
{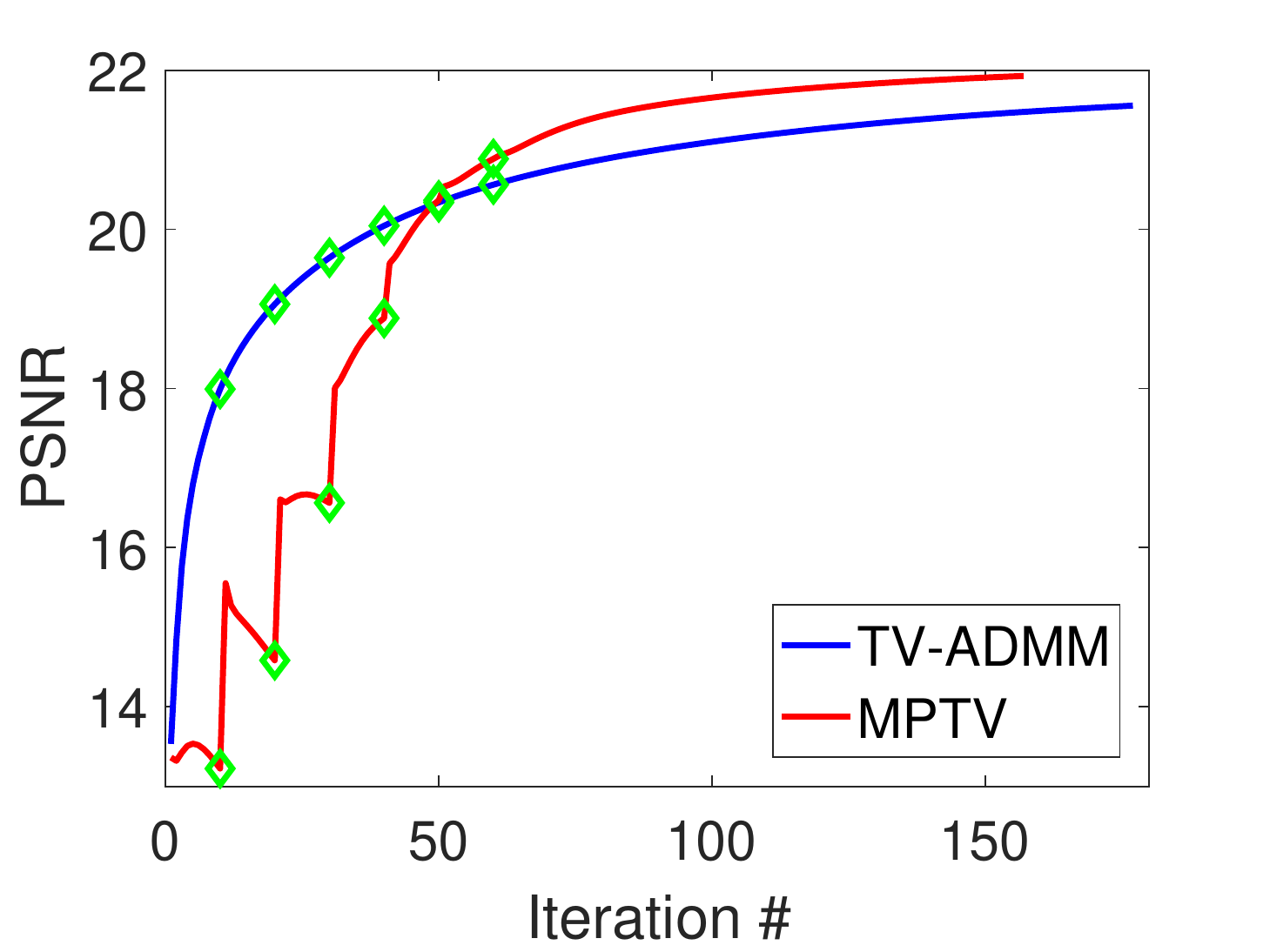}
\end{overpic}}
\end{minipage}
}
\hspace{-0.4cm}
\subfigure[SSIM]{
\begin{minipage}[b]{.16\textwidth}
\centerline{
\begin{overpic}[width=1\textwidth]
{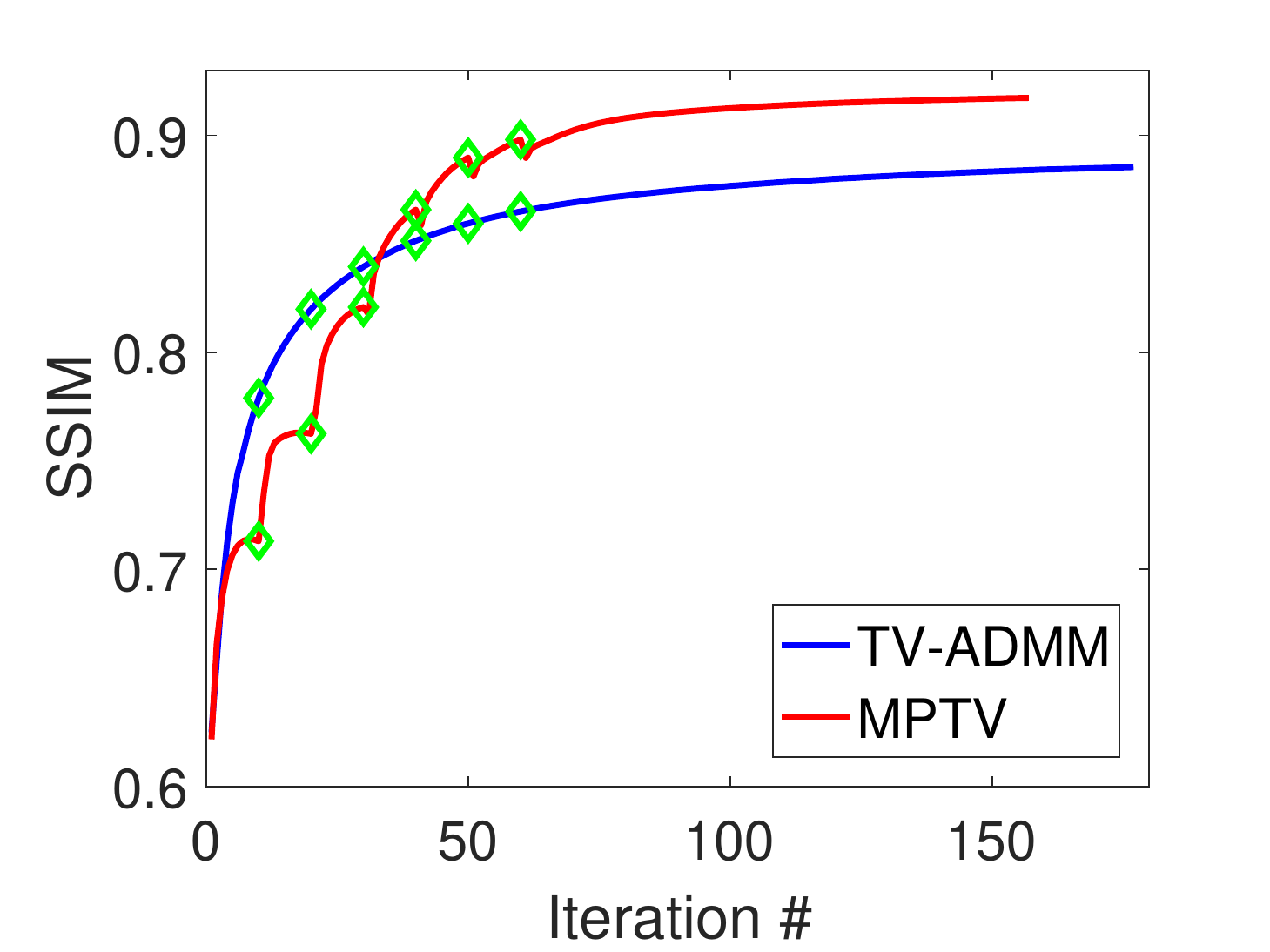}
\end{overpic}}
\end{minipage}
}
\hfill
\vspace{-0.05cm}
\subfigure[Intermediate results $\bx^t$ of MPTV (from $\bx^1$ to $\bx^6$)]{
\begin{tabular}[]{c}
\begin{minipage}[b]{.16\textwidth}
\centerline{
\begin{overpic}[width=1\textwidth]
{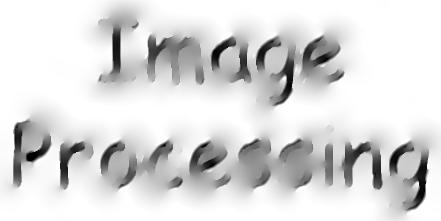}
\end{overpic}}
\end{minipage}
\hspace{-0.2cm}
\begin{minipage}[b]{.16\textwidth}
\centerline{
\begin{overpic}[width=1\textwidth]
{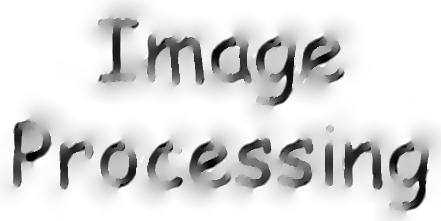}
\end{overpic}}
\end{minipage}
\hspace{-0.2cm}
\begin{minipage}[b]{.16\textwidth}
\centerline{
\begin{overpic}[width=1\textwidth]
{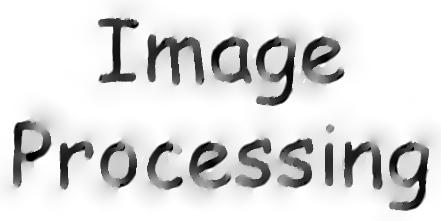}
\end{overpic}}
\end{minipage}
\hspace{-0.2cm}
\begin{minipage}[b]{.16\textwidth}
\centerline{
\begin{overpic}[width=1\textwidth]
{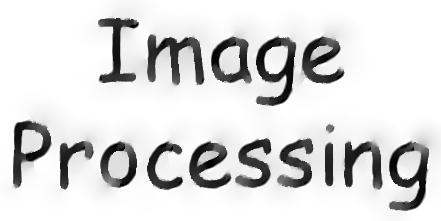}
\end{overpic}}
\end{minipage}
\hspace{-0.2cm}
\begin{minipage}[b]{.16\textwidth}
\centerline{
\begin{overpic}[width=1\textwidth]
{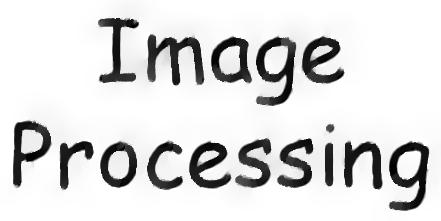}
\end{overpic}}
\end{minipage}
\hspace{-0.2cm}
\begin{minipage}[b]{.16\textwidth}
\centerline{
\begin{overpic}[width=1\textwidth]
{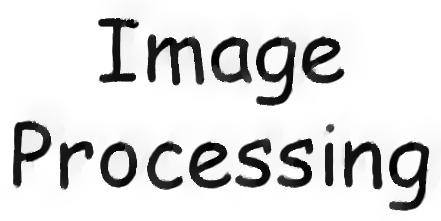}
\end{overpic}}
\end{minipage}
\hspace{-0.2cm}
\end{tabular}
}
\hfill
\vspace{-0.05cm}
\subfigure[Intermediate $\bg^t$ in MPTV for gradient activation (from $\bg^0$ to $\bg^5$, corresponding to the $\bx^1$ to $\bx^6$ in (g))]{
\begin{tabular}[]{c}
\begin{minipage}[b]{.16\textwidth}
\centerline{
\begin{overpic}[width=1\textwidth]
{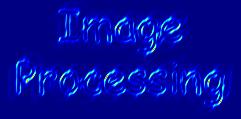}
\end{overpic}}
\end{minipage}
\hspace{-0.2cm}
\begin{minipage}[b]{.16\textwidth}
\centerline{
\begin{overpic}[width=1\textwidth]
{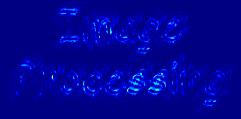}
\end{overpic}}
\end{minipage}
\hspace{-0.2cm}
\begin{minipage}[b]{.16\textwidth}
\centerline{
\begin{overpic}[width=1\textwidth]
{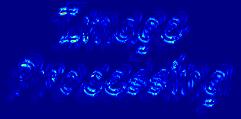}
\end{overpic}}
\end{minipage}
\hspace{-0.2cm}
\begin{minipage}[b]{.16\textwidth}
\centerline{
\begin{overpic}[width=1\textwidth]
{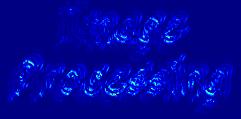}
\end{overpic}}
\end{minipage}
\hspace{-0.2cm}
\begin{minipage}[b]{.16\textwidth}
\centerline{
\begin{overpic}[width=1\textwidth]
{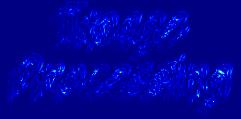}
\end{overpic}}
\end{minipage}
\hspace{-0.2cm}
\begin{minipage}[b]{.16\textwidth}
\centerline{
\begin{overpic}[width=1\textwidth]
{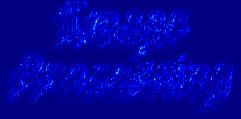}
\end{overpic}}
\end{minipage}
\hspace{-0.2cm}
\end{tabular}
}
\caption{An example for the MPTV method that is presented in Algorithm \ref{algo:mptv_primal}. (a) Input blurred image and blur kernel containing estimation error. (b) Ground truth image. (c) and (d) Results of TV-ADMM and the proposed MPTV, respectively. (e) and (f) show the convergence of TV-ADMM and MPTV. MPTV converges to a result with higher PSNR and SSIM values. (g) Intermediate results of MPTV, $\bx^t$ in Algorithm \ref{algo:mptv_primal}, corresponding to the green dots in (e) and (f). (h) shows intermediate $\bg^t$ in MPTV, corresponding to the $\bx^t$ in
(g).
Note that MPTV \kuire{is
  terminated} after 6 iterations. }
\label{fig:results_ip}
\end{figure*}

\subsection{Matching Pursuit for Minimizing TV Model}
\dgr{Based on Proposition \ref{prop:subproblem} and Algorithm \ref{algo:worst-case_ana}, we can implement our algorithm in primal form as summarized in Algorithm \ref{algo:mptv_primal}, which is referred as the \emph{matching pursuit total variation (MPTV)} algorithm.}
\dgr{In each iteration of Algorithm \ref{algo:mptv_primal}, we recover the dual variables $\balpha$ and $\bbeta$ as well as find the most active $\btau_t$ (relying on Algorithm \ref{algo:worst-case_ana}), and then record all indices indicated by the activated $\btau$'s into $\cS_t$. As shown in Fig. \ref{fig:results_ip} (h), the nonzero gradients are activated according to the $\bg^t$ (\ie $\|\bbeta_i\|_2, \forall i\in[n]$), in which the elements with the largest values indicate the most significant gradients in the image.
During the iterations, the image $\bx$ is recovered based on the gradually activated gradients (indicated by $\cS_t$).
Since no $\btau$ is involved at the initial stage, \eg $\cS_0=\emptyset$, we initialize $\bx$ via solving \eqref{eq:subproblem_primal} given $\cS=\emptyset$, \ie letting $\bx^0=\1\otimes(\sum_{i=1}^n y_i/n)$.}

\par  \textbf{Refinement for $\cS$}.
\dgr{
To boost the robustness of the MPTV algorithm for real-world images in practice,
we can take several additional simple steps in Algorithm \ref{algo:mptv_primal} to refine the $\cS$, which ensures we can make a conservative estimate of the image pixels.}
To conduct refinement in 2D image coordinates, we define a 2D map $\bM$ with $\bM(i,j)=1$ for $(i,j)\in\cS$ and 0 for others, where
by a slight abuse of notation, we let
$\cS$ denote the locations of activated components in 2D coordinate system.
Firstly, we perform a binary morphology dilation after an erosion operation, such that $\bM=(\bM\ominus \bR_e)\oplus\bR_d$, where $\ominus$ and $\oplus$ denote the binary erosion and dilation, respectively. The structuring elements for both $\bR_e$ and $\bR_d$ are disks with radius 3. This step helps to remove isolated active components in $\cS$. Fewer artifacts thus arise from fragmented active components. \dgr{Since the ideally sharp edges rarely appear in real-world images, this step does not remove the useful and significant structures while removing the isolated components. }
Secondly, similar to \cite{Whyte2014Deblurring}, to improve robustness for natural images and alleviate the visible boundaries between the active and inactive components,
we blur the mask $\bM$ slightly using a Gaussian filter with standard deviation 3 pixels. After refining the $\bM$, we update $\cS$ by recording the indices of the nonzero elements of $\bM$ in $\cS$. \dgr{Note that the above operations are not necessary for images with ideally sparse gradients.}

\begin{algorithm}[htp]\label{algo:mptv_primal}
\caption{MPTV for Image Deconvolution}
\KwIn{Observation $\by$, parameter $\lambda$ and $\rho$.}Initialize $\bx^0=\1\otimes(\sum_{i=1}^n y_i/n)$, $\balpha^0=\by-\bA\bx^0$, $\cS_0=\emptyset$\;
Set iteration index $t=1$\;
\While{Stopping conditions are not achieved}
{
  \dgr{\textbf{Finding the most violated constraint:} Find the most active $\btau_t$ based on $\balpha^{t-1}$ via Algorithm \ref{algo:worst-case_ana}, and record the corresponding indices into $\cC_{t}$\;
  Let $\cS_{t} = \cS_{t-1}\cup \cC_{t}$\;
  \textbf{Subproblem optimization:} Solve subproblem \eqref{eq:subproblem_primal} via the ADMM in Algorithm \ref{algo:subprob_admm}, obtaining $\bx^{t}$. Let $\balpha^{t}=\by-\bA\bx^{t}$, and let $t=t+1$\;}
}
\end{algorithm}

\subsection{Setting of Parameter $\kappa$}
\label{sec:kappa_setting}
The integer $\kappa$ parameter controls the number of activated gradients in each iteration. In general, a large $\kappa$ can decrease the iteration number of MPTV algorithm, and a very small $\kappa$ helps to prevent activating too many gradient components. \kuire{Thus, the value of $\kappa$} may affect the quality of the recovered image. \dgr{A sensitive study of $\kappa$ is conducted in Section \ref{sec:para_sens_exp}.} {To balance between} the efficiency and the performance, we here provide a strategy for setting $\kappa$.

Recall that $\kappa$ reflects a rough knowledge of the support of $\bD_v\bx$ and $\bD_h\bx$ together, that is, $K=\card(\supp(\bD_v\bx) \cup \supp(\bD_h\bx))$. As $K$ is unknown, we first reconstruct $\bbeta^0$ from
the initialization $\balpha^0$, then obtain $\bg^0$ by letting $g^0_i=\|\bbeta^0_i\|_2$ and normalize $\bg^0$ as $\bg^0/\|\bg^0\|_2$. Following the thresholding strategy in previous work \cite{gong2017MPGL}, we set $\kappa$ as the number of of elements in $\bg^0$ larger than $\zeta\|\bg^0\|_\infty$. In practice, $\zeta\geq 0.6$ works well for image deconvolution.

\subsection{\kui{Solution Bias and Early Stopping}}
\label{sec:early_stopping}
In general
TV based image deconvolution, many existing methods are sensitive to the choice of the $\lambda$ and face a dilemma -- a too big a $\lambda$
produces
a solution with sparse gradients and helps to resist noise, but underfits the data and causes over-smoothness in the recovered image, while a small $\lambda$ fits the observation well but reduces the regularization strength and may
result in an estimate which suffers from
ringing artifacts and/or noise.
As a result it is often necessary to
intensively search for an appropriate $\lambda$ for a specific observation by considering the trade-off between the regularization strength and the solution bias. This is often impractical and inconvenient for users without specific knowledge.
Due to the nature of our optimization scheme, we bypass this dilemma to a large degree, by proposing an early stopping condition below.

\par
\kuired{Recall that the MPTV algorithm incrementally activates the nonzero elements in $\bD\bx$, the value of $\Omega_{\text{TV}}(\bx)$ increases from 0 iteratively. By defining $\psi(\bx) = \|\by-\bA\bx\|_2^2+\lambda\Omega_{\text{TV}}(\bx)$, the algorithm will be stopped when $$\kuire{|\psi(\bx^{t-1})-\psi(\bx^t)|}/\psi(\bx^0)\leq \epsilon,$$ where $\epsilon$ is a tolerance variable.} 

\par
\dgr{A too small $\epsilon$ may lead to too many iterations and activate too many $\btau$'s due to the noise in $\by$, resulting in a non-sparse solution.}
\dgr{We thus use an early stopping condition with a relatively large $\epsilon=1\times 10^{-3}$ (and maximum iteration number as 7) to prevent MPTV from too many iterations, which helps to obtain a solution $\bx$ with small total variation, \ie small $\Omega_{\TV}(\bx)$, and sparse gradients.
Since the early stopping strategy helps to ensure the sparsity of image gradients, we thus can use a relatively small $\lambda$ to reduce the regularization bias in solution.}
As a result, \kuired{MPTV performs well for a wide range of $\lambda$}, which saves a lot of computational cost for hyper-parameter tuning (see Fig. \ref{fig:para_lambda_sens}). \kuired{The sensitive study of $\epsilon$ can be found in Section \ref{sec:para_sens_exp}.}

\subsection{\dgr{Discussions on Ringing Suppression by MPTV}}
As is outlined above, \dg{the deconvolution results often suffer from} ringing artifacts in flat areas and near strong edges (Fig. \ref{fig:results_ip} (c)), \dgrr{which may be caused by errors in the blur kernel estimate \cite{Shan2008High}, the Gibbs phenomenon \cite{yuan2007image}, etc, as discussed in Section \ref{sec:related_work_deconv}. }
\dg{The classical TV based model, unfortunately, is inadequate for avoid the undesired ringing artifacts.}

\par
For the deconvolution process given a fixed blur kernel $\bk$ and corresponding convolution matrix $\bA$, an image $\tbx$ containing ringing artifacts often lies in a domain $\widetilde{\mbX}=\{\bx~|~ \bx=\widehat{\bx}+\br, \|\widehat{\bx}-\bx^*\|_2\approx 0, \|\bA\br\|_2 \approx 0 \}$, where $\bx^*$ denotes the ground truth image, and $\br$ is an error term for artifacts.
An artifact term $\br$ corresponding to medium frequency ripples \cite{Whyte2014Deblurring} lies close to $\bA$'s nullspace.
\dgr{Since for any $\tbx\in\widetilde{\mbX}$, there is $\|\by-\bA\tbx\|_2\approx 0$, a solution only minimizing the $\ell_2$-norm based data fidelity term is very likely in $\widetilde{\mbX}$.}
{In the conventional model \eqref{eq:mptv2_nonsplit}, the strength of the TV regularizer is only controlled by the parameter $\lambda$.
Although a large weight of the TV regularizer can suppress the potential ringing artifacts, it also over smooths the significant edges and textures in the recovered image \cite{wang2008new,mosleh2014image}.
When $\lambda$ is not large enough, minimizing the TV based objective cannot produce images with sparse gradients and leads to results lying in $\widetilde{\mbX}$ with visible ringing.}
\dgrr{It is hard to find an accurate solution with less ringing artifacts only by tuning the parameter $\lambda$.}

\dgr{Instead of directly optimizing the objective in \eqref{eq:problem_model_org}, MPTV gradually activates the most significant gradients according to the fitting errors as shown in Fig. \ref{fig:results_ip}.} \dgrr{As shown in \cite{yuan2007image}, the ringing artifacts are less significant in the fitting error, due to the smaller magnitudes. The possible artifacts can be naturally suppressed by the gradient activation step and the early stopping condition. Thus, benefiting from the optimization scheme, the proposed MPTV can suppress the ringing artifacts without the need of large $\lambda$. Recall that MPTV can also produce the desired sparse gradients with a small $\lambda$, which is consistent with that for ringing suppression.}
The example in Fig. \ref{fig:results_ip} and the experimental results in Section \ref{sec:exp} demonstrate that by selectively, and inhomogeneously, activating the appropriate gradients the intended effect of the regularizer can be maintained without causing the unwanted ringing.

\subsection{Convergence Analysis}
\label{sec:analysis}
Similar to the analysis in \cite{Tan2014Towards} and \cite{gong2017MPGL}, it can be proved that the $\theta$'s generated through progressive iterations of Algorithm \ref{algo:ccp_dual} increase monotonically. As a result, the following \dgrrr{proposition} shows that MPTV converges to a global solution of \eqref{eq:mptv2_qclp}.
\dgrrr{\begin{proposition}
\black{Let $\{(\balpha_t, \theta_t)\}$ be the sequence generated by Algorithm \ref{algo:ccp_dual}. If the most violated constraint finding problem and subproblem \eqref{eq:ccp_master_prob} can be solved, the sequence $\{(\balpha_t, \theta_t)\}$ will converge to an optimal solution of problem \eqref{eq:mptv2_qclp}.}
\label{thm:convergence}
\end{proposition}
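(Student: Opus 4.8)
The plan is to run the standard cutting-plane convergence argument in two stages: first showing that the master values $\{\theta_t\}$ form a monotone, bounded, hence convergent sequence, and then using the finiteness of $\Lambda$ together with the relaxation/restriction relationships between the subproblem \eqref{eq:ccp_master_prob} and the full problem \eqref{eq:mptv2_qclp} to certify that the limit is globally optimal. First I would establish monotonicity: since Algorithm \ref{algo:ccp_dual} only appends the newly activated $\btau_t$, we have $\Lambda_{t-1}\subseteq\Lambda_t$, so the subproblem at iteration $t$ carries all constraints of iteration $t-1$ plus one more; adding constraints to a minimization cannot decrease its optimum, giving $\theta_{t-1}\le\theta_t$ (this is the monotone increase asserted just before the proposition). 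Next I would bound the sequence from above: for every $t$, \eqref{eq:ccp_master_prob} is a relaxation of \eqref{eq:mptv2_qclp} because $\Lambda_t\subseteq\Lambda$, hence $\theta_t\le\theta^{\star}$ with $\theta^{\star}$ the optimal value of \eqref{eq:mptv2_qclp}. A monotone sequence bounded above converges, so $\theta_t\to\bar\theta\le\theta^{\star}$; and since $\cA=[-l,l]^n$ is compact and $\phi$ is continuous, $\{\balpha_t\}$ has a convergent subsequence with limit $\bar{\balpha}\in\cA$.

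The decisive step is to turn $\bar\theta\le\theta^{\star}$ into an equality by certifying feasibility of the limit for all of $\Lambda$. Here I would exploit that $\Lambda$ is finite, with $|\Lambda|=T=\sum_{i=0}^{\kappa}\binom{n}{i}$. Whenever some constraint is still violated, the most-violated-constraint step returns a $\btau_t$ with $\phi(\balpha_{t-1},\btau_t)>\theta_{t-1}$; but $\balpha_{t-1}$ already satisfies every constraint in $\Lambda_{t-1}$, so this $\btau_t$ cannot lie in $\Lambda_{t-1}$, i.e. each iteration appends a genuinely new element and $|\Lambda_t|$ strictly increases. Because $|\Lambda_t|\le T$, this can occur only finitely often, so after at most $T$ iterations the most violated constraint satisfies $\phi(\balpha_t,\btau)\le\theta_t$ for all $\btau\in\Lambda$. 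At that point $(\balpha_t,\theta_t)$ is feasible for \eqref{eq:mptv2_qclp}, whence $\theta_t\ge\theta^{\star}$; combined with the relaxation bound $\theta_t\le\theta^{\star}$ this forces $\theta_t=\theta^{\star}$, so $(\balpha_t,\theta_t)$ is a global optimum. Since $\phi(\balpha)=\frac{1}{2}\|\balpha\|_2^2-\balpha^{\T}\by$ is strictly convex in $\balpha$, the optimal $\balpha^{\star}$ is unique, and the sequence, being eventually constant, converges to $(\balpha^{\star},\theta^{\star})$.

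The main obstacle I anticipate is exactly this feasibility-certification step: guaranteeing that the oracle always returns a fresh $\btau_t$ so that the active set strictly grows, and that the absence of a remaining violated constraint genuinely certifies global feasibility and optimality rather than merely local stationarity. The finiteness of $\Lambda$ keeps this clean through finite termination. If instead one wanted a purely asymptotic argument that does not invoke finite termination (closer to the setting of \cite{Tan2014Towards}), the hard part would be to show that the violation $\phi(\balpha_{t-1},\btau_t)-\theta_{t-1}$ of the selected constraint tends to zero; this would have to be extracted from the strong convexity of $\phi$ in $\balpha$ and the continuity of $\phi$ over the compact $\cA$, passing to the limit along the convergent subsequence so that $\bar{\balpha}$ satisfies all constraints.
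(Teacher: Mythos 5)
Your proof is correct under the proposition's stated hypothesis of exact oracles, but it certifies optimality by a genuinely different mechanism than the paper. You share the first stage with the paper's Lemma~\ref{lem:mono_inc}: $\Lambda_{t-1}\subseteq\Lambda_t$ gives $\theta_{t-1}\leq\theta_t$, and $\Lambda_t\subseteq\Lambda$ gives $\theta_t\leq\theta^*$. For the decisive step, however, the paper argues asymptotically in the tradition of \cite{Tan2014Towards,kortanek1993ccp}: it introduces the auxiliary sequence $\varphi_t=\min_{i\in[t]}\bigl(\max_{\btau\in\Lambda}\phi(\balpha_i,\btau)\bigr)$, shown in Lemma~\ref{lem:varphi} to be monotonically decreasing with $\varphi_t\geq\theta^*$, then \emph{assumes} a limit point $(\bar{\balpha},\bar{\theta})$ of the iterates exists and uses continuity of $f(\balpha,\theta)=\theta-\max_{\btau\in\Lambda}\phi(\balpha,\btau)$ to pass to the limit and conclude that $(\bar{\balpha},\bar{\theta})$ is feasible for \eqref{eq:mptv2_qclp}, hence optimal. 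You instead exploit the finiteness of $\Lambda$: an exactly solved subproblem makes $\balpha_{t-1}$ satisfy every constraint in $\Lambda_{t-1}$, so a violated constraint returned by the oracle is necessarily new, the working set strictly grows, and after at most $T=\sum_{i=0}^{\kappa}\binom{n}{i}$ iterations no violation remains; feasibility plus the relaxation bound then forces $\theta_t=\theta^*$, and strict convexity of $\phi$ in $\balpha$ gives uniqueness of $\balpha^*$ --- a point the paper does not make. Your finite-termination route is more elementary and actually repairs a soft spot in the paper's argument, since the paper merely posits the existence of a limit point (compactness of $\cA$ and the bounded monotone $\{\theta_t\}$ would justify subsequential limits, but this is left implicit), whereas you obtain convergence for free from eventual constancy. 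What the paper's asymptotic route buys is generality: it remains meaningful when $T$ is combinatorially enormous so that ``termination within $T$ steps'' is vacuous in practice, and it is the style of argument that survives approximate subproblem solutions, early stopping, or infinite constraint index sets, which is closer to how Algorithm~\ref{algo:worst-case_ana} actually behaves (the regularized recovery of $\bbeta$ is only an approximate most-violated-constraint oracle, whereas your strict-growth step needs the exact oracle that the proposition hypothesizes). Both proofs, yours and the paper's, inherit the paper's abstraction that $\phi(\balpha,\btau)$ implicitly encodes the $\btau$-dependent domain $\cA_{\btau}$.
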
}

The proof can be found in Appendix.

\par
\kuiredd{In practice, we may use an early stopping to prevent MPTV from activating too many constraints, which will help to obtain images with sparse gradients.}

\section{\dgr{Efficient Optimization of Subproblem \eqref{eq:subproblem_primal}}}
\label{sec:sub_opt_admm}
\dgr{
In MPTV algorithm (\ie Algorithm \ref{algo:mptv_primal}), after updating $\cS_t$, we need to solve the subproblem \eqref{eq:subproblem_primal}.
To handle the equality constraints, we apply the alternating direction method of multipliers (ADMM) \cite{boyd2004convex} which can be up to a few orders of magnitude faster than solving \eqref{eq:ccp_master_prob} directly \cite{Tan2014Towards}.}
Specifically, we iteratively update the primal and dual variables of the augmented Lagrangian function \cite{boyd2011ADMM} of \eqref{eq:subproblem_primal}.
Let subvectors $\bgamma_{\tcS}$ and $\bgamma_{\tcS^c}$ be the dual variables w.r.t. the two constraints, respectively. By introducing a dual variable $\bgamma\in \mbR^{2n}$ and a positive penalty parameter $\rho>0$, we obtain the augmented Lagrangian for \eqref{eq:subproblem_primal}:
\dgr{
\begin{equation}\label{eq:subprob_admm_augLag}
\begin{split}
  \cL&(\bx, \bz_{\tcS}, \bgamma) = \frac{1}{2}\|\by-\bA\bx\|_2^2 + \lambda\sum\nolimits_{i\in \cS} \|\bC_{i\tcS}\bz_{\tcS}\|_2 \\
  &+ \bgamma_{\tcS}^\T((\bD\bx)_{\tcS}-\bz_{\tcS}) + \bgamma_{\tcS^c}^\T(\bD\bx)_{\tcS^c} \\
  &+ \frac{\rho}{2} \|(\bD\bx)_{\tcS}-\bz_{\tcS}\|_2^2 + \frac{\rho}{2} \|(\bD\bx)_{\tcS^c}\|_2^2.
\end{split}
\end{equation}}
Then, we can solve for $\bx$
by iteratively minimizing $\cL(\cdot)$ w.r.t. $\bx$ and $\bz_{\tcS}$, and updating $\bgamma$. Let $k$ denote the iteration number. Applying ADMM, we carry out the following steps at each iteration:

\par \noindent \textbf{Step 1} Compute $\bz_{\tcS}^{k+1}$ with fixed $\bx^{k}$ and $\bgamma^{k}$ by solving: \begin{equation}\label{eq:subprob_admm_z}
  \min_{\bz_{\tcS}} \lambda\!\sum_{i\in \cS} \|\bC_{i\tcS}\bz_{\tcS}\|_2 +\frac{\rho}{2} \|(\bD\bx^k)_{\tcS}-\bz_{\tcS} + \frac{1}{\rho}\bgamma_{\tcS}^k\|_2^2,
  \end{equation}
which has an unique closed-form solution.
For $\forall i\in \cS$, let $\bmu_i = \bC_{i\tcS}(\bD\bx^k+\frac{1}{\rho}\bgamma^k)$. We can obtain the solution of \eqref{eq:subprob_admm_z} by a two-dimensional shrinkage \cite{wang2008new} for $\forall i\in\cS$:
\begin{equation}\label{eq:admm_l2tv_shrinkage}
  \bC_{i\tcS}\bz_{\tcS}^{k+1} = \max\left\{ \|\bmu_i\|_2 -\frac{\lambda}{\rho}, 0 \right\} \frac{\bmu_i}{\|\bmu_i\|_2}, \forall i\in \cS,
\end{equation}
where a convention $0\cdot(0/0)=0$ is followed. Following this, for convenience, we define a temporary variable $\bz'=[{\bz'}_v^{\T}, {\bz'}_h^{\T}]^\T$, and let $\bz'_{\tcS}=\bz_{\tcS}^{k+1}$ and $\bz'_{\tcS^c}=\0$.

\par \noindent \textbf{Step 2} Update $\bx^{k+1}$ by solving $\min_{\bx} \cL(\bx, \bz_{\tcS}^{k+1}, \bgamma^{k})$ w.r.t. $\bx$:
\begin{equation}\label{eq:subprob_admm_x}
  \min_{\bx} \frac{1}{2}\|\by-\bA\bx\|_2^2 + \frac{\rho}{2}\|\bD\bx-\bz'+\frac{1}{\rho}\bgamma^{k}\|_2^2,
\end{equation}
which is quadratic in $\bx$. Let $\bnu=[\bnu_v^\T, \bnu_h^\T]^\T$ where $\bnu_v = \bz'_v-\frac{1}{\rho}\bgamma_v^{k}$ and $\bnu_h = \bz'_h-\frac{1}{\rho}\bgamma_h^{k}$. The minimizer can be recovered from the normal equation:
\begin{equation}
  \left[ \bA^\T\bA + \rho(\bD^\T\bD) \right] \bx = \bA^\T\by+\rho\bD^\T\bnu.
\end{equation}
Under the periodic boundary condition, we can obtain the solution via  FFTs \cite{wang2008new}:
\begin{equation}\label{eq:subprob_admm_x_fft}
  \bx^{k+1}\!\! =\!\! \cF^{-1}\!\!\left(\!\!\frac{\overline{\cF(\bA)}\cF(\by)\!\! +\!\!\rho(\overline{\cF(\bD_v)}\cF(\bnu_v)\!\! +\! \overline{\cF(\bD_h)}\cF(\bnu_h))} {\overline{\cF(\bA)}\cF(\bA)\!\!+\!\! \rho(\overline{\cF(\bD_v)}\cF(\bD_v)\!\!+\!\!\overline{\cF(\bD_h)}\cF(\bD_h))}\!\!\right),
  \nonumber
\end{equation}
where $\cF(\cdot)$ and $\cF^{-1}(\cdot)$ denote the Fourier transform and the inverse transform, respectively, $\overline{\cF(\cdot)}$ denotes the complex conjugate of $\cF(\cdot)$, and the multiplication and division are all component-wise operators.

\par \noindent \textbf{Step 3} Update the dual variable $\bgamma$:
\begin{equation}\label{eq:subprob_admm_beta}
  \bgamma^{k+1} = \bgamma^k + \rho (\bD\bx^{k+1}-\bz').
\end{equation}

\par
The ADMM method for solving problem \eqref{eq:subproblem_primal} is summarized in Algorithm \ref{algo:subprob_admm}.
As shown in Proposition \ref{prop:subproblem}, the dual variable can be
recovered
by $\balpha^*=\bxi^*=\by-\bA\bx^*$.

\begin{algorithm}[htp]\label{algo:subprob_admm}
\caption{ADMM for Solving Subproblem \eqref{eq:subproblem_primal}}
\KwIn{Observation $\by$, parameter $\lambda$ and $\rho$, initialization of image $\bx^0$, index set $\cS_t$.}Initialize $\bbeta^0=\0$. Set iteration number as $k=0$\;
\While{Stopping conditions are not achieved}
{
  Compute $\bz_{\tcS}^{k+1}$ according to \eqref{eq:admm_l2tv_shrinkage}\;
  Generate $\bz'$ by letting $\bz'_{\tcS}=\bz_{\tcS}^{k+1}$ and $\bz'_{\tcS^c}=\0$\;
  Compute $\bx^{k+1}$ by solving problem \eqref{eq:subprob_admm_x}\;
  Update $\bgamma^{k+1}$ according to \eqref{eq:subprob_admm_beta}\;
  If the stopping condition is achieved, stop\;
  Let $k=k+1$\;
}
\end{algorithm}
\kui{\textbf{Stopping conditions of Algorithm \ref{algo:subprob_admm}}.
\dgr{For Algorithm \ref{algo:subprob_admm} applied in solving subproblem \eqref{eq:subproblem_primal}, by defining $\varphi(\bx)=\|\by-\bA\bx\|_2$, \kuired{the algorithm is stopped when
$$|\varphi(\bx^{t-1})-\varphi(\bx^t)|/\varphi(\bx^0)\leq \epsilon_{\text{in}},$$}where $\epsilon_{\text{in}}$ is a tolerance value.}}
This stopping condition is adapted from \cite{goldstein2009split}.
In practice, a sufficiently small $\epsilon_{\text{in}}$ is enough to find the most-violated constraint. We suggest setting $\epsilon_{\text{in}}=0.001$ and the maximum iteration to $100$.
\dgrrr{To avoid the possible early stopping issue in ADMM at the very beginning iterations, we can let the algorithm be stopped only after a minimal number of iterations. }

\section{Experiments}
\label{sec:exp}

In this section, we evaluate the performance of our method against the state-of-the-art image deconvolution methods on both synthetic data and real-world images.
To \kuire{make a comprehensive} study of the proposed method,  \kuire{we consider different types of blur kernels (PSFs) and images.}
We will also study the sensitivity of the MPTV algorithm to \kuire{noises}, blur level and different parameter settings, and \kuire{demonstrate}
its ability to suppress
ringing artifacts.
All experiments are conducted using MATLAB on a desktop computer with an Intel Core i5 CPU with 8GB of RAM.
We \dgr{use brute-force search for}
the parameters to obtain the best performance for every method.

\subsection{Experimental Settings}
\subsubsection{Synthetic dataset generation}
\kuire{\emph{Three synthetic datasets} are generated based} on sharp images $\bx^*$ belonging to three  different types -- images with sparse gradients, text images with near sparse gradients and natural images. We \kuire{use a set of different blur kernels to construct different $\bA$'s}. \kuire{For each $\bA$} we generate the blurry testing images $\by$'s \kuire{by the imaging model \eqref{eq:imaging_model}}  $\by=\bA\bx^*+\bn$, where $\bn$ denotes the additive Gaussian white noise with noise level\footnote{An image $\by$ with $(100\times \sigma) \%$ Gaussian noise is generated by adding noise with zero mean and standard derivation $\sigma$ for image $\bA\bx^*$ with $[0,1]$ intensity range. } $0.3\%$ unless stated otherwise. \kuire{All synthetic blurry images in these three datasets are generated using a set of \emph{different blur kernels (PSFs)}, including the
Gaussian blur kernel, disk blur kernel, linear motion blur kernel and motion blur kernels with different sizes from Levin \etal's dataset \cite{levin2009understanding}, as shown in Fig. \ref{fig:syn_k_imgsp} (a).} A nonperiodic boundary condition \cite{sorel2012boundary} is used for imitating the real blurred image. We will discuss more details of the datasets in the following.

\begin{figure}[!t]
\centering
\hspace{-0.4cm}
\subfigure[Blur kernels for synthesizing data]{
\begin{tabular}[]{c}
\begin{minipage}[b]{.05\textwidth}
\begin{overpic}[width=1\textwidth]
{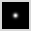}
\put(66,12){\scriptsize \color{white}{\bf\#1}}
\end{overpic}
\end{minipage}
\begin{minipage}[b]{.05\textwidth}
\begin{overpic}[width=1\textwidth]
{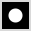}
\put(66,12){\scriptsize \color{white}{\bf\#2}}
\end{overpic}
\end{minipage}
\begin{minipage}[b]{.05\textwidth}
\begin{overpic}[width=1\textwidth]
{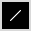}
\put(66,12){\scriptsize \color{white}{\bf\#3}}
\end{overpic}
\end{minipage}
\begin{minipage}[b]{.05\textwidth}
\begin{overpic}[width=1\textwidth]
{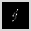}
\put(66,12){\scriptsize \color{white}{\bf\#4}}
\end{overpic}
\end{minipage}
\\
\hfill
\begin{minipage}[b]{.05\textwidth}
\begin{overpic}[width=1\textwidth]
{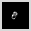}
\put(66,12){\scriptsize \color{white}{\bf\#5}}
\end{overpic}
\end{minipage}
\begin{minipage}[b]{.05\textwidth}
\begin{overpic}[width=1\textwidth]
{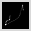}
\put(66,12){\scriptsize \color{white}{\bf\#6}}
\end{overpic}
\end{minipage}
\begin{minipage}[b]{.05\textwidth}
\begin{overpic}[width=1\textwidth]
{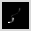}
\put(66,12){\scriptsize \color{white}{\bf\#7}}
\end{overpic}
\end{minipage}
\begin{minipage}[b]{.05\textwidth}
\begin{overpic}[width=1\textwidth]
{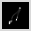}
\put(66,12){\scriptsize \color{white}{\bf\#8}}
\end{overpic}
\end{minipage}
\end{tabular}
}
\hspace{-0.7cm}
\subfigure[Synthetic image]{
\begin{tabular}[]{c}
\begin{minipage}[b]{.105\textwidth}
\begin{overpic}[width=1\textwidth]
{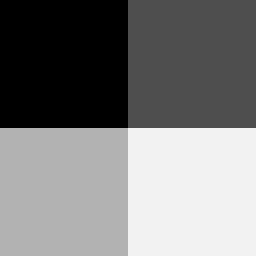}
\put(80,7){\color{black}{\bf$\bx^*$}}
\end{overpic}
\end{minipage}
\begin{minipage}[b]{.105\textwidth}
\begin{overpic}[width=1\textwidth]
{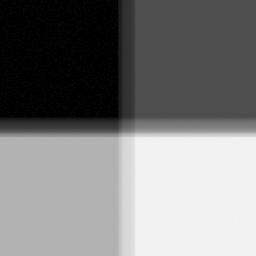}
\put(80,10){\color{black}{\bf\by}}
\end{overpic}
\end{minipage}
\end{tabular}
}
\caption{(a) Blur kernels (PSFs) for generating synthetic data. \#1: $25\times 25$ Gaussian blur kernel with standard derivation 1.6; \#2: $15\times 15$ disk blur kernel; \#3: $11\times 11$ linear motion blur with length 15 and angle $45^\circ$; \#4-\#8: Motion blur kernel from Levin \etal's dataset \cite{levin2009understanding} (squared kernels with side length 19, 15, 27, 21 and 23). \dg{All blur kernels are padded as same size for better illustration.}
(b) An example of the \kuire{synthetic image with sparse gradients, where kernel \#6 is used.}}
\label{fig:syn_k_imgsp}
\end{figure}

\subsubsection{Evaluation metrics}
Since we have the ground truth for synthetic datasets, we use the peak-signal-noise-ratio (PSNR) and structural similarity index (SSIM) \cite{wang2004ssim} as the  metric.

\subsubsection{Comparison with other algorithms}
We compare the proposed MPTV method with several state-of-the-art competitors for image deconvolution, including the methods by promoting sparsity on image gradients (such as \eg ~FTVd \cite{wang2008new}, L0-Abs \cite{portilla2009l0abs}, WDTV \cite{lou2015weighted}, IRLS \cite{levin2007coded_irls}, hyper-Laplacian prior based method \cite{krishnan2009fast} and L0TV \cite{yuan2015l0tv}), and other methods (\eg BM3D \cite{dabov2008bm3ddeb} and a kernel similarity based method \cite{kheradmand2014Kernel}). \kuire{Moreover, we also compare with a TV-ADMM method by minimizing the isotropic TV model using ADMM method, which is a direct baseline of the proposed method.}

\subsection{Experiments on Synthetic Image with Sparse Gradients}
\kuire{Recall that the proposed MPTV is based on the identification of non-zero subsets of the sparse gradient.} We thus first evaluate the performance of the proposed method on an image with sparse gradients.

Fig. \ref{fig:syn_k_imgsp} (b) shows a $256\times 256$ sharp image with sparse gradients for synthesizing blurred testing images. We perform image deconvolution using different methods and \kuire{record the averaged values of PSNR and SSIM with 8 different kernels in Table \ref{tab:syn_sparse}.} From Table \ref{tab:syn_sparse}, the proposed MPTV performs the best in terms of PSNR and SSIM. \kuire{Here, the execution time of MPTV for handling a $256\times 256$ image is within 1 second.}

\begin{table*}[!t]
\begin{center}
\footnotesize
\caption{Comparison on the Synthetic Image with Sparse Gradients (PSNR/SSIM).}
\vspace{-0.35cm}
\centering
\begin{tabular}{p{1.4cm}|p{1.4cm}<{\centering}|p{1.4cm}<{\centering}<{\centering}<{\centering}|p{1.4cm}<{\centering}<{\centering}|p{1.4cm}<{\centering}|p{1.4cm}<{\centering}|p{1.4cm}<{\centering}|p{1.4cm}<{\centering}|p{1.4cm}<{\centering}|p{1.4cm}<{\centering}}
\hline
 {PSF index} & {1} & {2} & {3} & {4} & {5} & {6} & {7} & {8} & {Avg.} \\ \cline{1-10}
 {Input}                            & 31.37/0.9622 & 27.05/0.9178 & 27.41/0.9236 & 28.22/0.9324 & 29.03/0.9466 & 22.92/0.8610 & 22.69/0.8911 & 23.81/0.8881 & 26.56/0.9154    \\ \hline
 {FTVd \cite{wang2008new}}          & 47.36/0.9820 & 27.81/0.4727 & 27.54/0.4416 & 34.34/0.7328 & 32.99/0.6801 & 27.68/0.4767 & 34.32/0.7375 & 31.07/0.5823 & 32.89/0.6382     \\ \hline
{L0-Abs \cite{portilla2009l0abs}}   & 37.69/0.9670 & 33.83/0.9221 & 35.99/0.8572 & 39.22/0.8641 & 39.37/0.8673 & 33.52/0.7786 & 38.32/0.8595 & 37.76/0.8522 & 36.96/0.8710     \\ \hline
{BM3D \cite{dabov2008bm3ddeb}}      & 37.00/0.9355 & 25.68/0.4194 & 24.61/0.3763 & 40.62/0.9190 & 46.88/0.9814 & 23.14/0.3071 & 40.06/0.9435 & 35.57/0.8428 & 34.20/0.7156     \\ \hline
{WDTV \cite{lou2015weighted}}       & 47.44/0.9876 & 35.17/0.8668 & 34.42/0.7964 & 39.54/0.8953 & 40.45/0.9159 & 32.13/0.7047 & 40.29/0.9141 & 36.42/0.8256 & 38.23/0.8633     \\ \hline
{IRLS \cite{levin2007coded_irls}}   & 35.97/0.9826 & 33.96/0.9717 & 42.48/0.9831 & 49.24/0.9905 & 48.52/0.9929 & 48.82/0.9885 & 50.68/0.9940 & 45.66/0.9813 & 44.42/0.9856     \\ \hline
{HL \cite{krishnan2009fast}}        & 36.76/0.9874 & 33.72/0.9572 & 39.90/0.9764 & 49.23/0.9921 & 48.97/0.9943 & 40.54/0.9214 & 51.27/0.9935 & 44.88/0.9805 & 43.16/0.9754     \\ \hline
{KS \cite{kheradmand2014Kernel}}    & 37.21/0.9455 & 25.91/0.3960 & 25.06/0.3593 & 39.98/0.9229 & 46.80/0.9832 & 23.60/0.3020 & 40.14/0.9487 & 35.87/0.8555 & 34.32/0.7141     \\ \hline
{L0TV \cite{yuan2015l0tv}}          & 46.69/0.9821 & 41.09/0.9890 & 41.76/0.9568 & 30.95/0.6408 & 30.84/0.7854 & 31.88/0.8409 & 37.43/0.8709 & 21.87/0.4777 & 35.31/0.8180     \\ \hline
{TV-ADMM}                           & 44.14/0.9703 & 37.24/0.9693 & 40.78/0.9726 & 51.48/0.9790 & 52.93/0.9793 & 44.05/0.9567 & 53.59/0.9795 & 43.60/0.9707 & 45.98/0.9722     \\ \hline
{MPTV}                             & \bf 50.41/0.9997 & \bf 41.09/0.9973 & \bf 47.56/0.9979 & \bf 56.98/0.9996 & \bf 57.13/0.9997 & \bf 51.80/0.9955 & \bf 59.85/0.9998 & \bf  50.51/0.9965 & \bf 51.92/0.9983   \\ \hline
\end{tabular}
\label{tab:syn_sparse}
\end{center}
\end{table*}

\subsubsection{\kuire{Sensitivity study on noise}}
Based on the synthetic data above, we conduct a noise level sensitivity study for the proposed MPTV. We first
generate
two blurred images, one with the Gaussian blur kernel (\#1 in Fig. \ref{fig:syn_k_imgsp} (a)) and the other with a motion blur kernel (\#6 in Fig. \ref{fig:syn_k_imgsp} (a)), and then add Gaussian noise with noise with levels varying  from $1\%$ to $10\%$ with interval $1\%$
to the two blurred images. Fig.~\ref{fig:noise_sens} records the PSNR and SSIM values of different methods, which show that the performance of many previous methods decreases as noise level increases. The performance of MPTV is high and stable since the cutting-plane optimization method restricts the elements of $\bD\bx$ corresponding to the inactive $\btau$ to be zero.

\begin{figure*}[!t]
\centering
\subfigure[]{
\label{fig:noise_sens_a}
\centering
\includegraphics[trim =1mm 0mm 13mm 0mm, clip, width=0.23\linewidth]{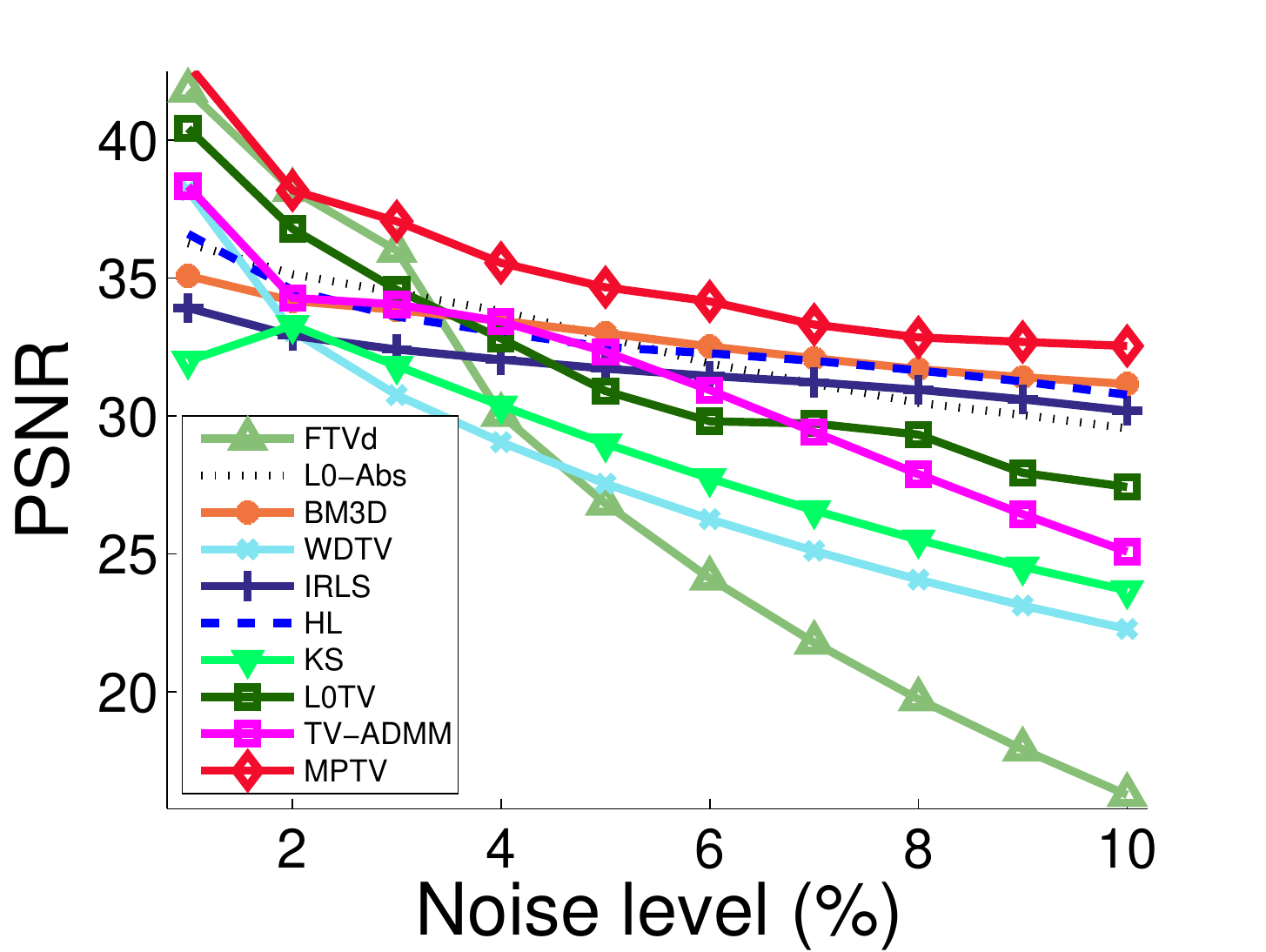}
}
\hfill
\subfigure[]{
\label{fig:noise_sens_b}
\centering
\includegraphics[trim =1mm 0mm 13mm 0mm, clip, width=0.23\linewidth]{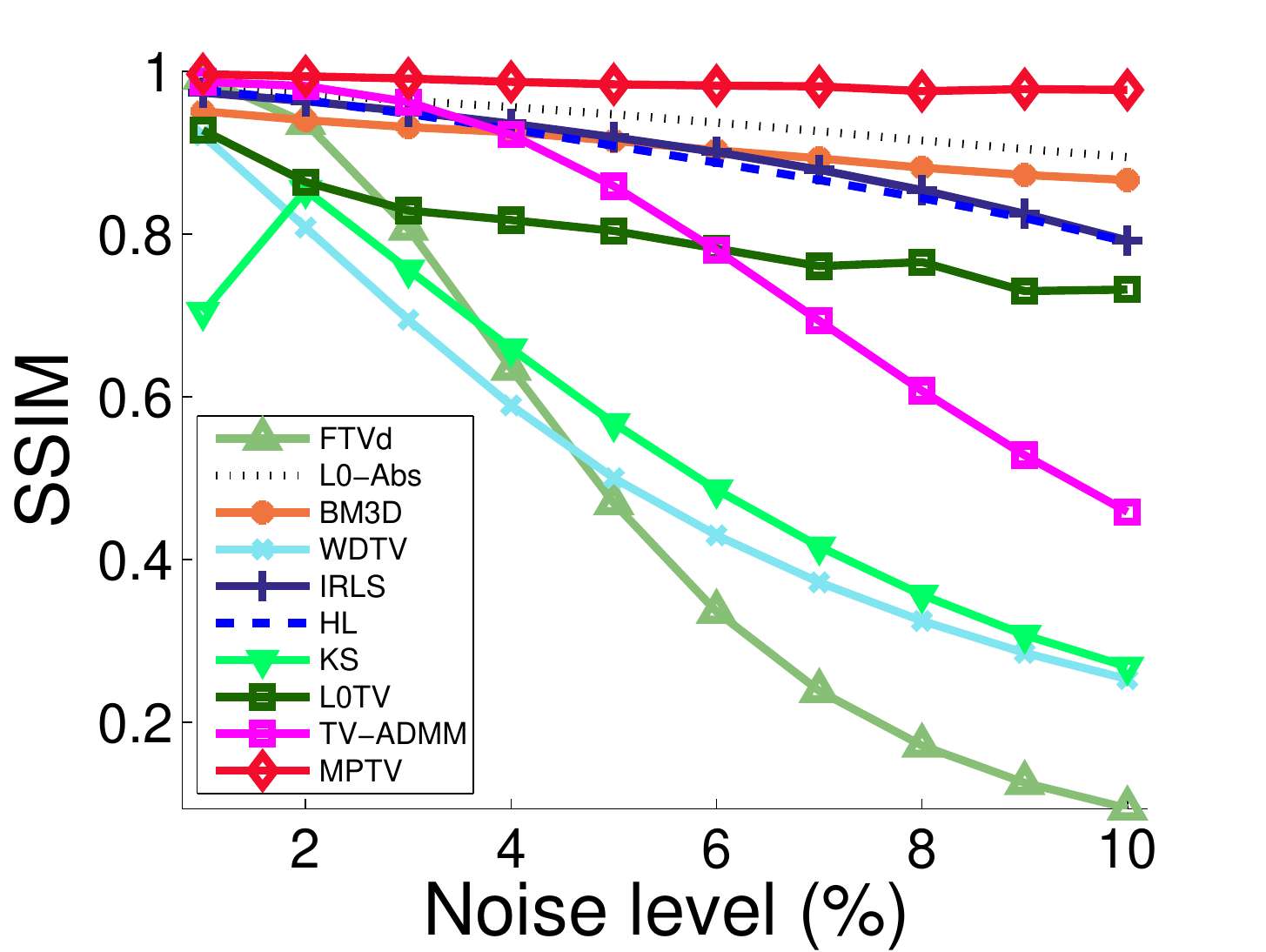}
}
\hfill
\subfigure[]{
\label{fig:noise_sens_c}
\centering
\includegraphics[trim =1mm 0mm 13mm 0mm, clip, width=0.23\linewidth]{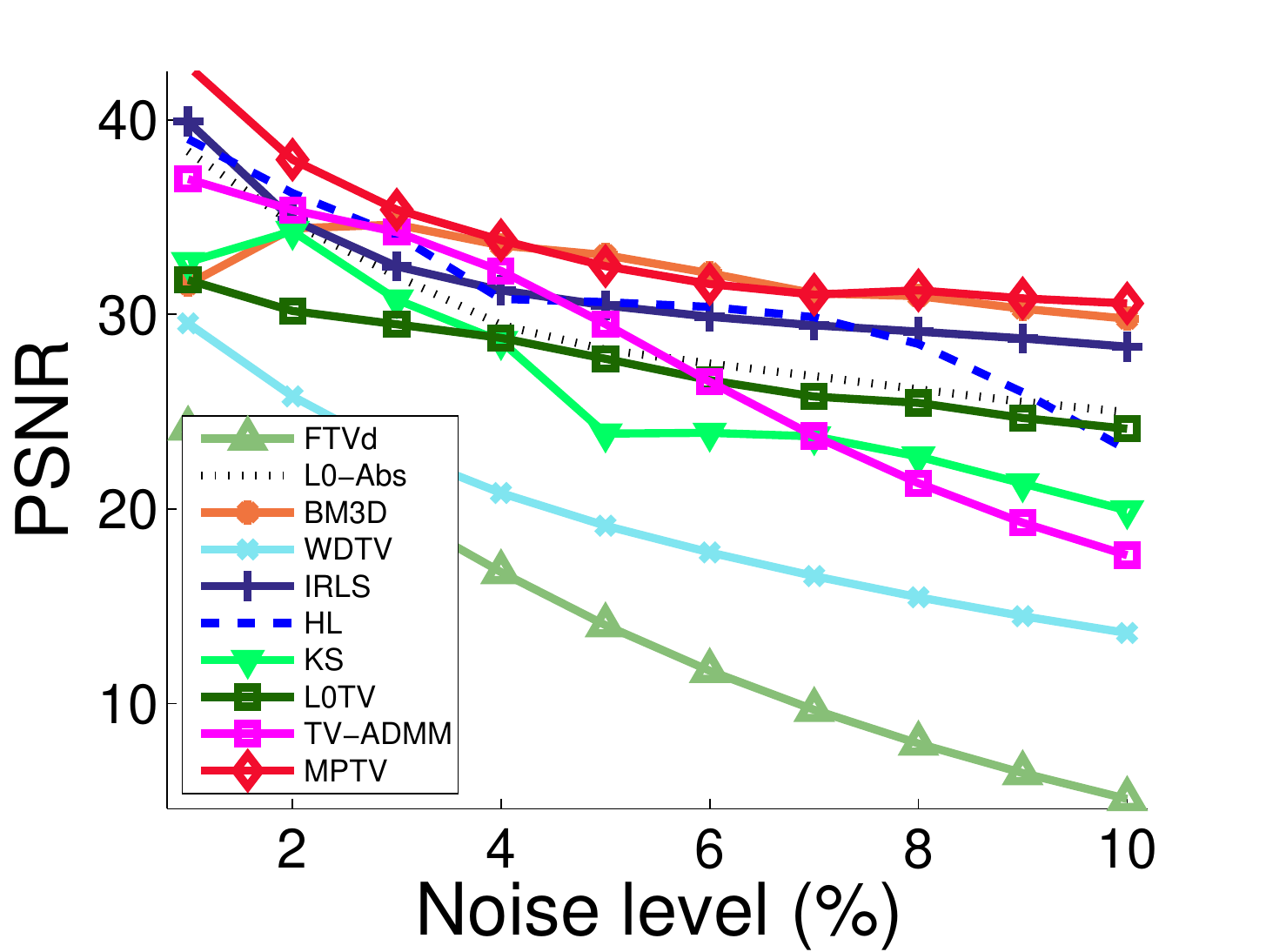}
}
\hfill
\subfigure[]{
\label{fig:noise_sens_d}
\centering
\includegraphics[trim =1mm 0mm 13mm 0mm, clip, width=0.23\linewidth]{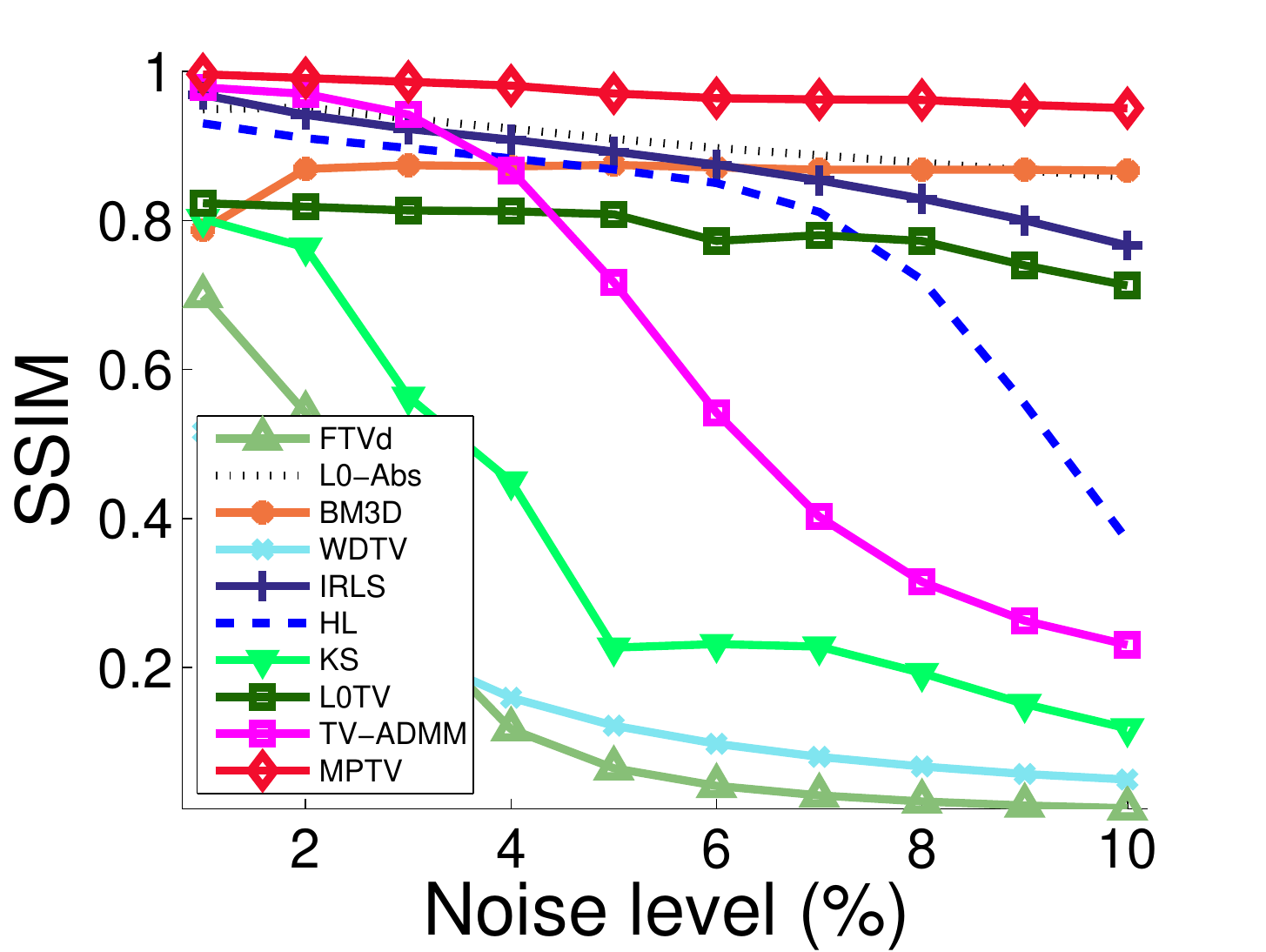}
}
\vspace{-0.2cm}
\caption{Study of noise sensitivity. PSNR and SSIM of the deconvolution results are evaluated for the blurry images contaminated by Gaussian noise with increasing noise level. (a) and (b) Results on the blurry images with Gaussian blur. (c) and (d) Results on the blurry images with motion blur.}
\label{fig:noise_sens}
\end{figure*}

\subsubsection{\kuire{Sensitivity study on blur level}}
The deconvolution task becomes more difficult as the level of blur increases.
In order to test the robustness of the various methods to increasing blur level, we generate synthetic images using both the Gaussian and linear motion blur kernels, as they exhibit a natural parametrization for this purpose.  We thus
generate 8 squared Gaussian blur kernels where the side lengths range from $15$ to $85$
pixels
with an interval of $10$, and standard derivations $1.6, 3, 5$ and $7$,
and 8 linear motion blur kernels with $45^\circ$ angle and lengths from $7$ to $63$ with an interval of $8$. Fig.~\ref{fig:blur_sens} shows that the performance of all methods decreases with increasing blur level for both Gaussian and motion blur. Nevertheless, the proposed MPTV performs the best for all blur levels.

\begin{figure*}[htp]
\centering
\subfigure[]{
\label{fig:blur_sens_a}
\centering
\includegraphics[trim =0mm 0mm 13mm 0mm, clip, width=0.23\linewidth]{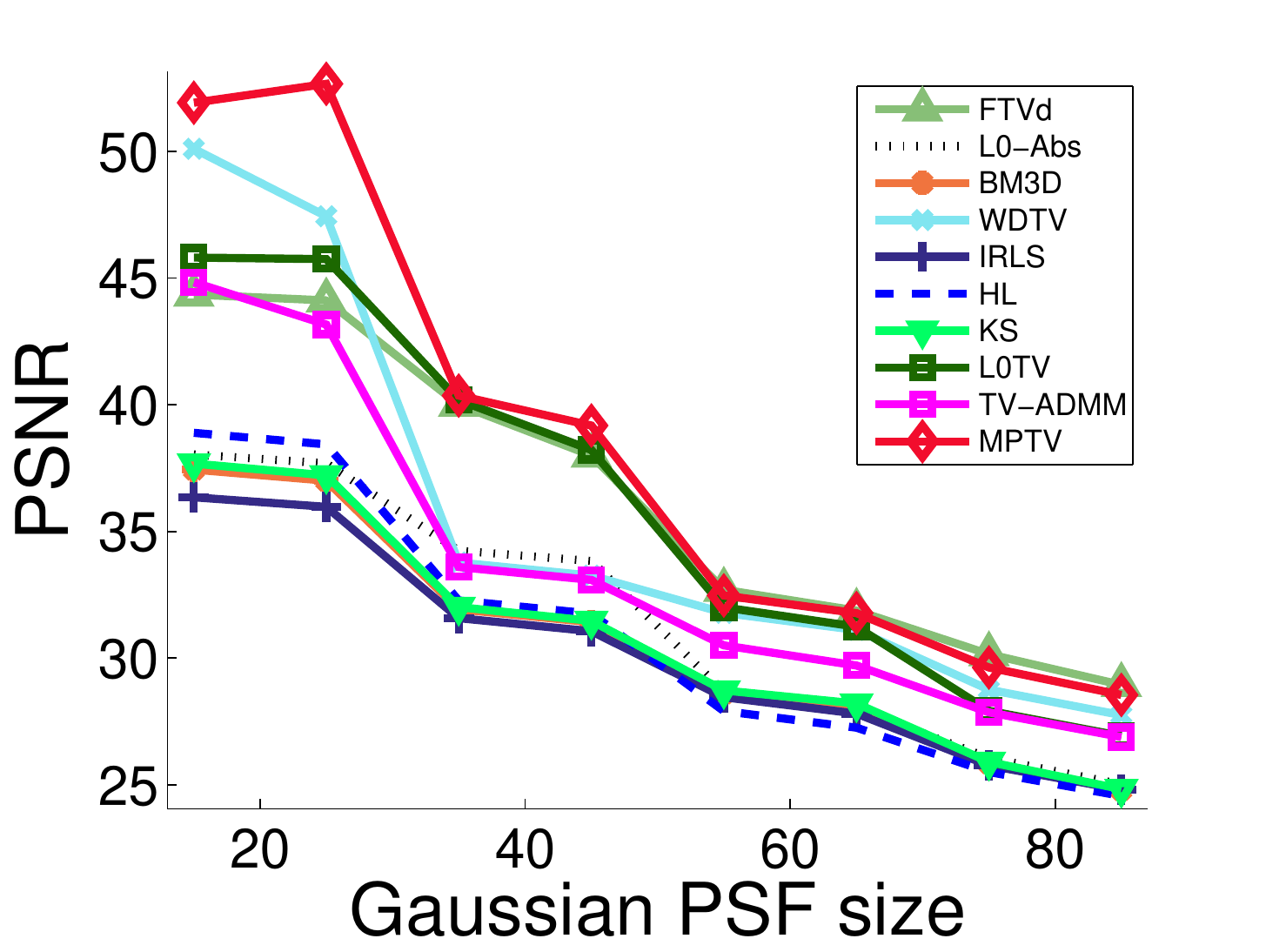}
}
\hfill
\subfigure[]{
\label{fig:blur_sens_b}
\centering
\includegraphics[trim =0mm 0mm 13mm 0mm, clip, width=0.23\linewidth]{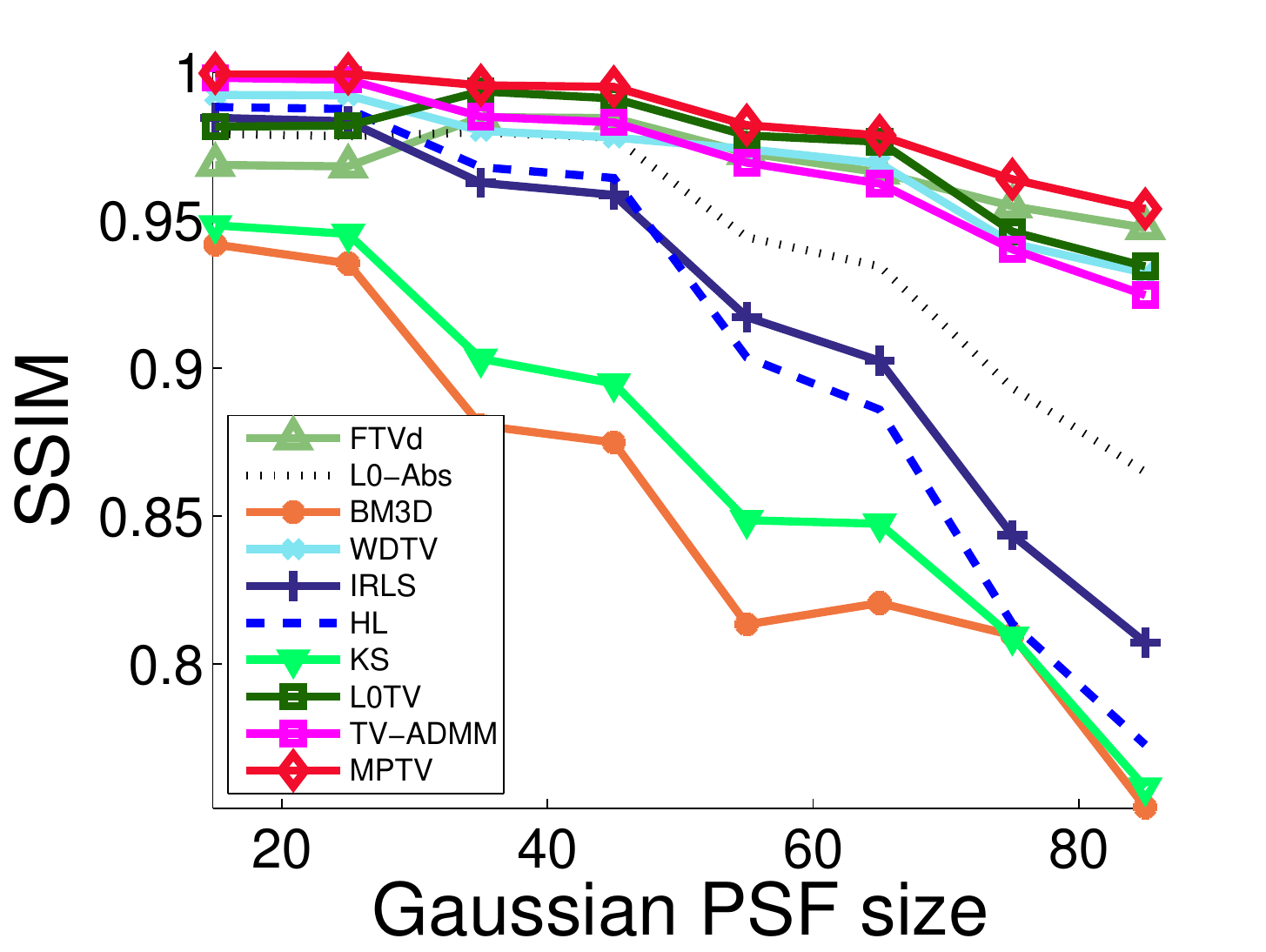}
}
\hfill
\subfigure[]{
\label{fig:blur_sens_c}
\centering
\includegraphics[trim =0mm 0mm 13mm 0mm, clip, width=0.23\linewidth]{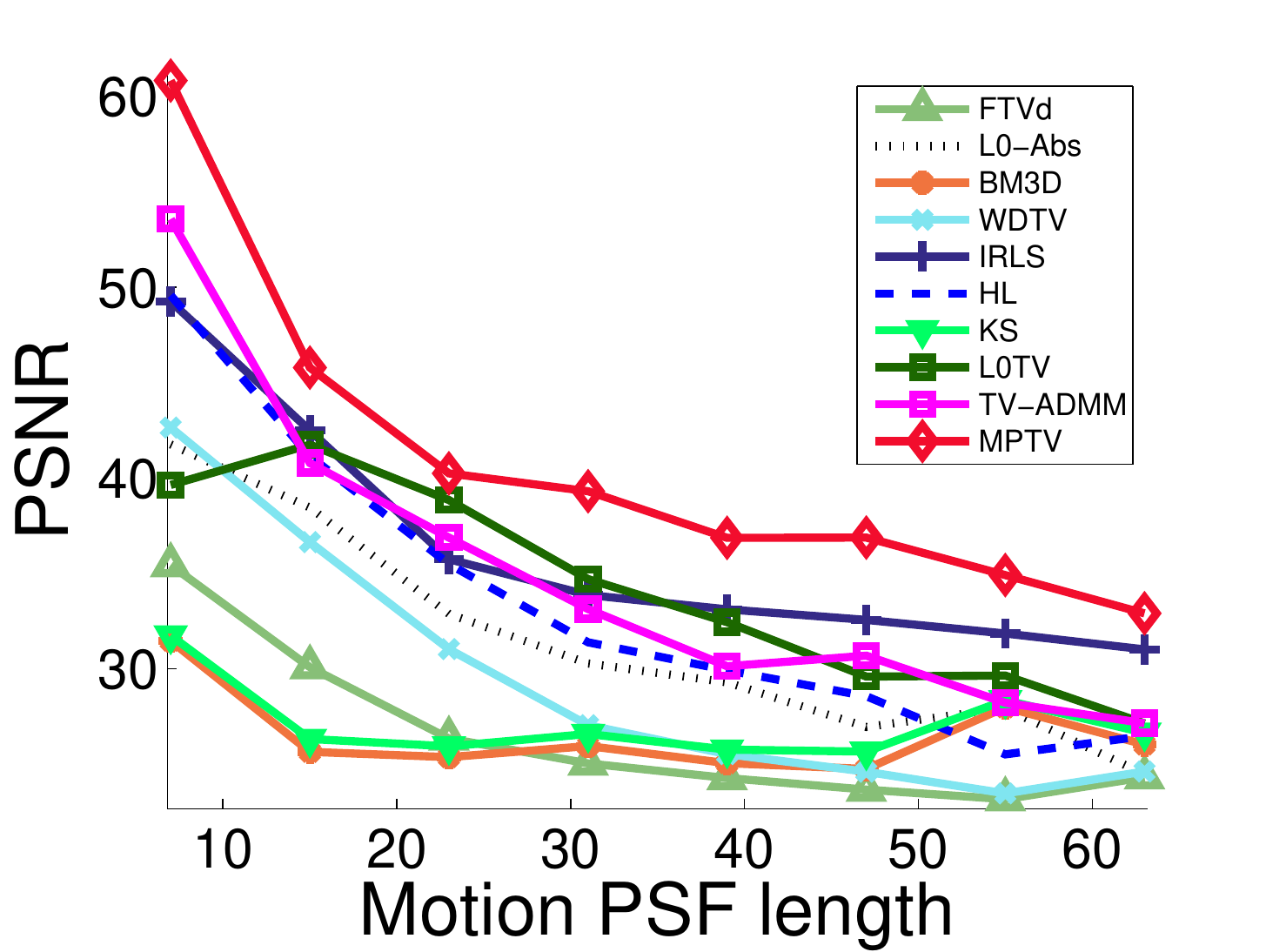}
}
\hfill
\subfigure[]{
\label{fig:blur_sens_d}
\centering
\includegraphics[trim =0mm 0mm 13mm 0mm, clip, width=0.23\linewidth]{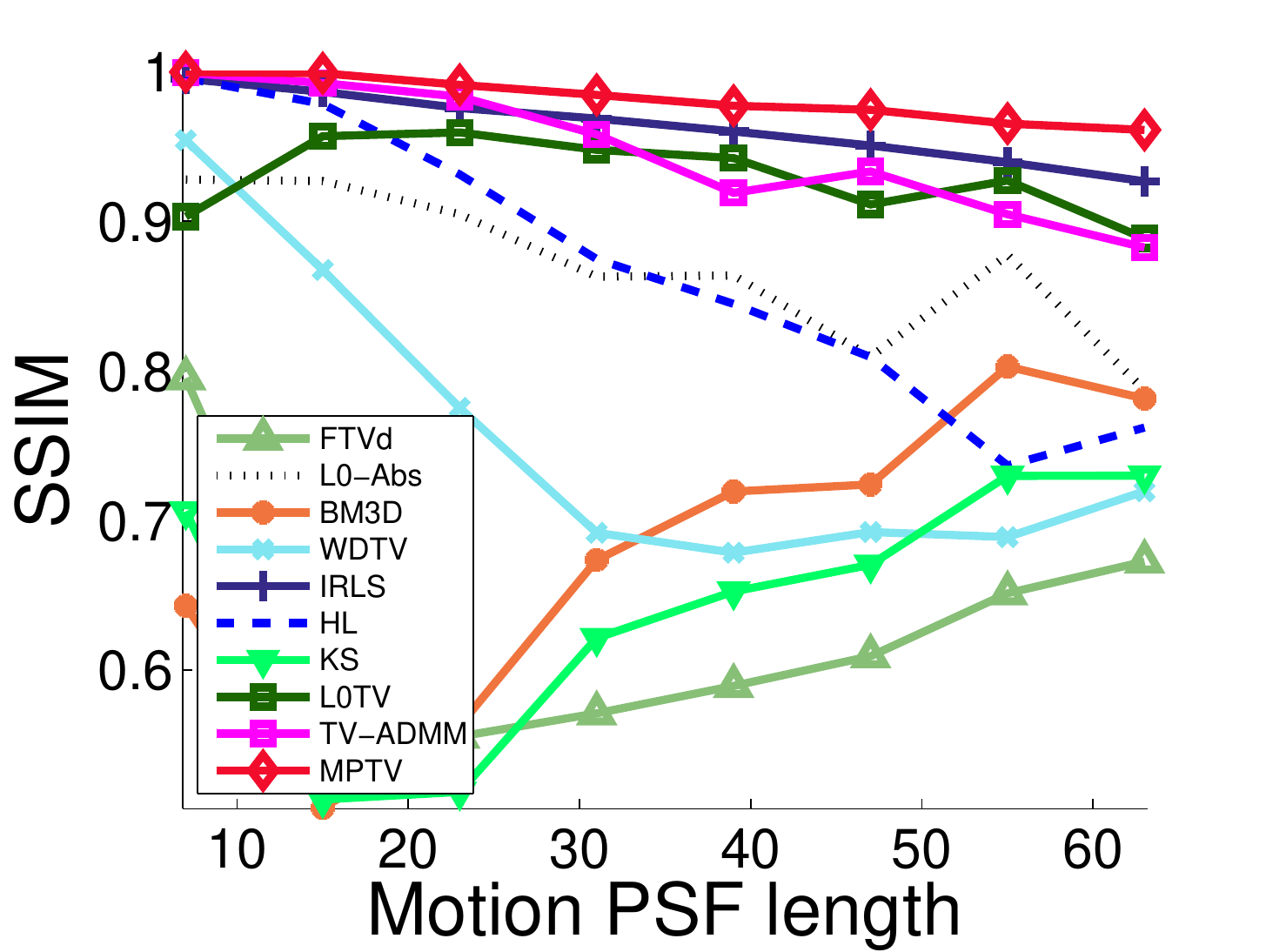}
}
\vspace{-0.2cm}
\caption{\kui{Blur level sensitivity study of various methods}. PSNR and SSIM of the deconvolution results for 8 blurry images contaminated by Gaussian blur or linear motion blur with increasing blur level. (a) and (b) Results on the images blurred by the Gaussian PSFs. (c) and (d) Results on the images blurred by the linear motion PSFs.}
\label{fig:blur_sens}
\end{figure*}

\subsubsection{Sensitivity study on parameters}
\label{sec:para_sens_exp}
\dgr{Firstly,
we conduct a sensitivity study for the parameter $\lambda$.
MPTV with a proper $\lambda$ achieves solutions with high accuracy and less regularizer bias. Due to the cutting-plane method and the early stopping strategy, MPTV is not sensitive to the value of $\lambda$.}
In this experiment, we primarily compare MPTV with the TV-ADMM method that shares a similar problem formulation and the same implementation details apart from the binary indicator $\btau$ and the cutting-plane based optimization scheme. We perform deconvolution using the two methods given 20 $\lambda$'s with values from $1\times 10^{-5}$ to $0.96\times 10^{-3}$ with an interval of $5\times 10^{-5}$, and record the average PSNR and SSIM values of all testing images in Fig. \ref{fig:para_lambda_sens}. According to evaluation based on both PSNR and SSIM, the proposed MPTV method is more robust to the setting of $\lambda$ and outperforms the TV-ADMM for all $\lambda$'s, which proves the effectiveness of the proposed MPTV formulation and the cutting-plane based optimization method.

\begin{figure}[htp]
\centering
\subfigure[Evaluation based on PSNR]{
\label{fig:noise_sens_a}
\centering
\includegraphics[width=0.45\linewidth]{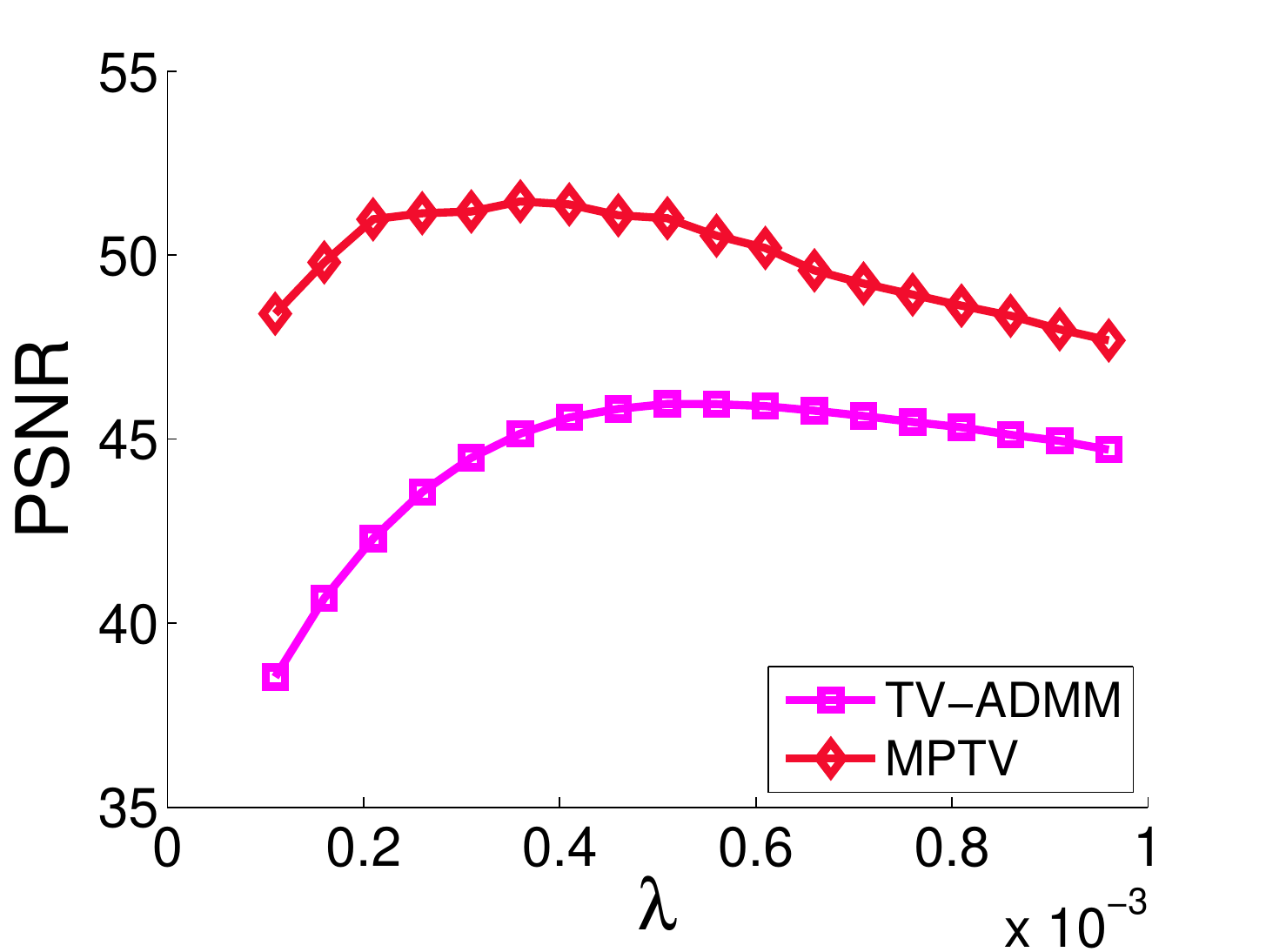}
}
\subfigure[Evaluation based on SSIM]{
\label{fig:noise_sens_b}
\centering
\includegraphics[width=0.45\linewidth]{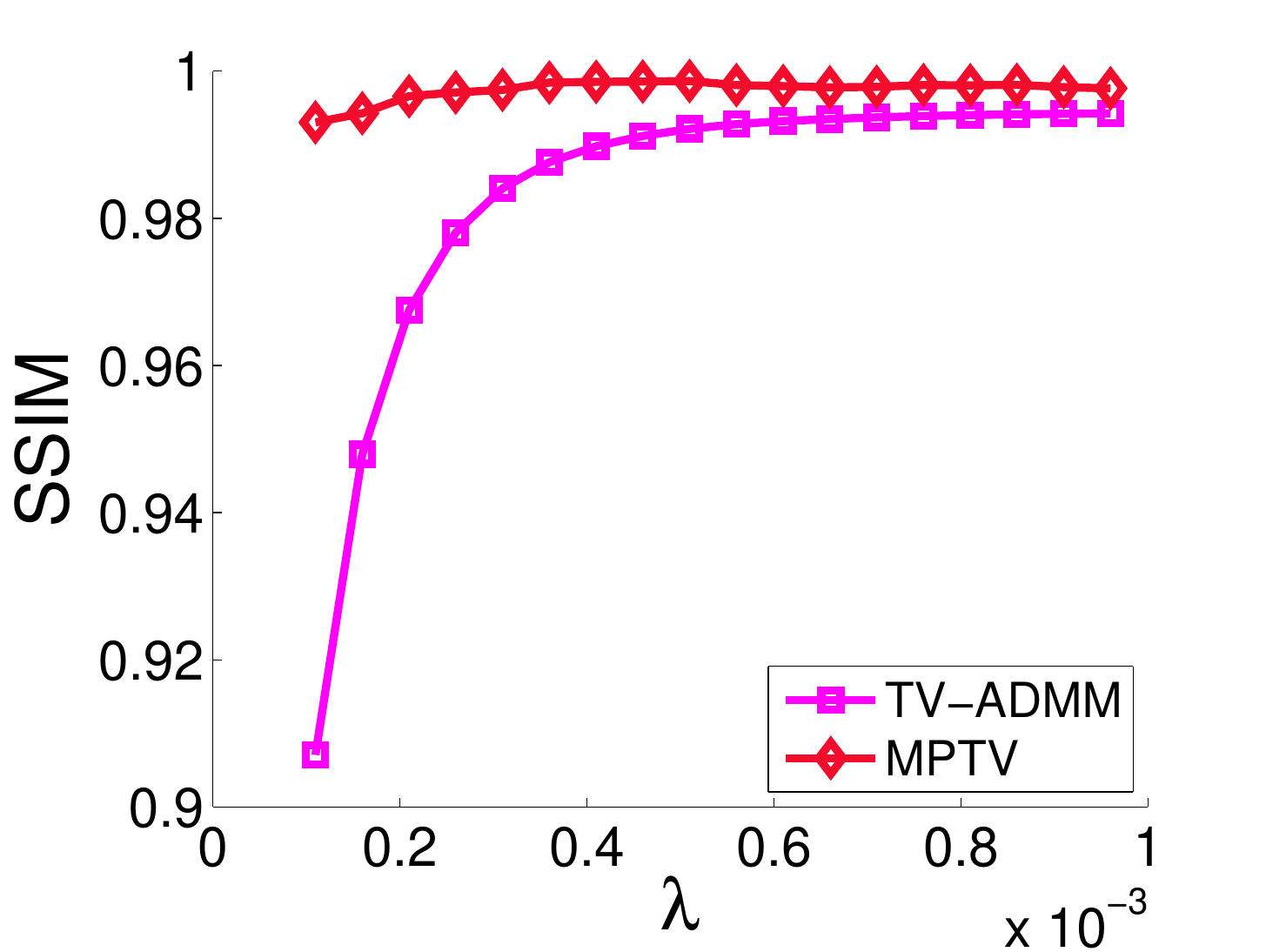}
}
\vspace{-0.15cm}
\caption{Sensitivity study of parameter $\lambda$. The deconvolution results of TV-ADMM and MPTV with varying $\lambda$ are evaluated and compared.}
\label{fig:para_lambda_sens}
\end{figure}

\par
\dgr{In the second experiment, we study the sensitivity of the parameter $\kappa$ for MPTV. The PSNR values of the image results and the corresponding $\kappa$ are reported in Fig. \ref{fig:kappa_epsilon_sens} (a). The same stopping condition described in Section \ref{sec:early_stopping} is used for different $\kappa$'s (from 64 to 512).
From Fig. \ref{fig:kappa_epsilon_sens} (a), a proper $\kappa$ helps to prevent MPTV from activating too many or too less nonzero gradients, resulting in high-quality results.
If $\kappa$ is too small, MPTV may not activate enough nonzero elements in a limited number of iterations. When $\kappa$ increases, the performance slightly degrades.
}

\par
\dgr{In the third experiment, we study the influence of the parameter $\epsilon$ on the performance of MPTV. Different $\epsilon$ values result in different iteration numbers.
Fig. \ref{fig:kappa_epsilon_sens} (b) records the PSNR values with 8 different $\epsilon$'s from $1\times 10^{-5}$ to $5\times 10^{-2}$. As shown in Fig. \ref{fig:kappa_epsilon_sens} (b), MPTV works robust to a wide range of $\epsilon$. When $\epsilon$ is large enough, \eg $\epsilon=1\times 10^{-3}$ in Fig \ref{fig:kappa_epsilon_sens} (b), the early stopping helps MPTV to produce images with sparse gradients and satisfactory quality.
A too small $\epsilon$ leads to too many iterations and thus activates too many nonzero gradients, which slightly degrades the performance.}

\begin{figure}[htp]
\centering
\subfigure[PSNR vs. $\kappa$]{
\centering
\includegraphics[trim =1mm 0mm 13mm 0mm, clip, width=0.45\linewidth]{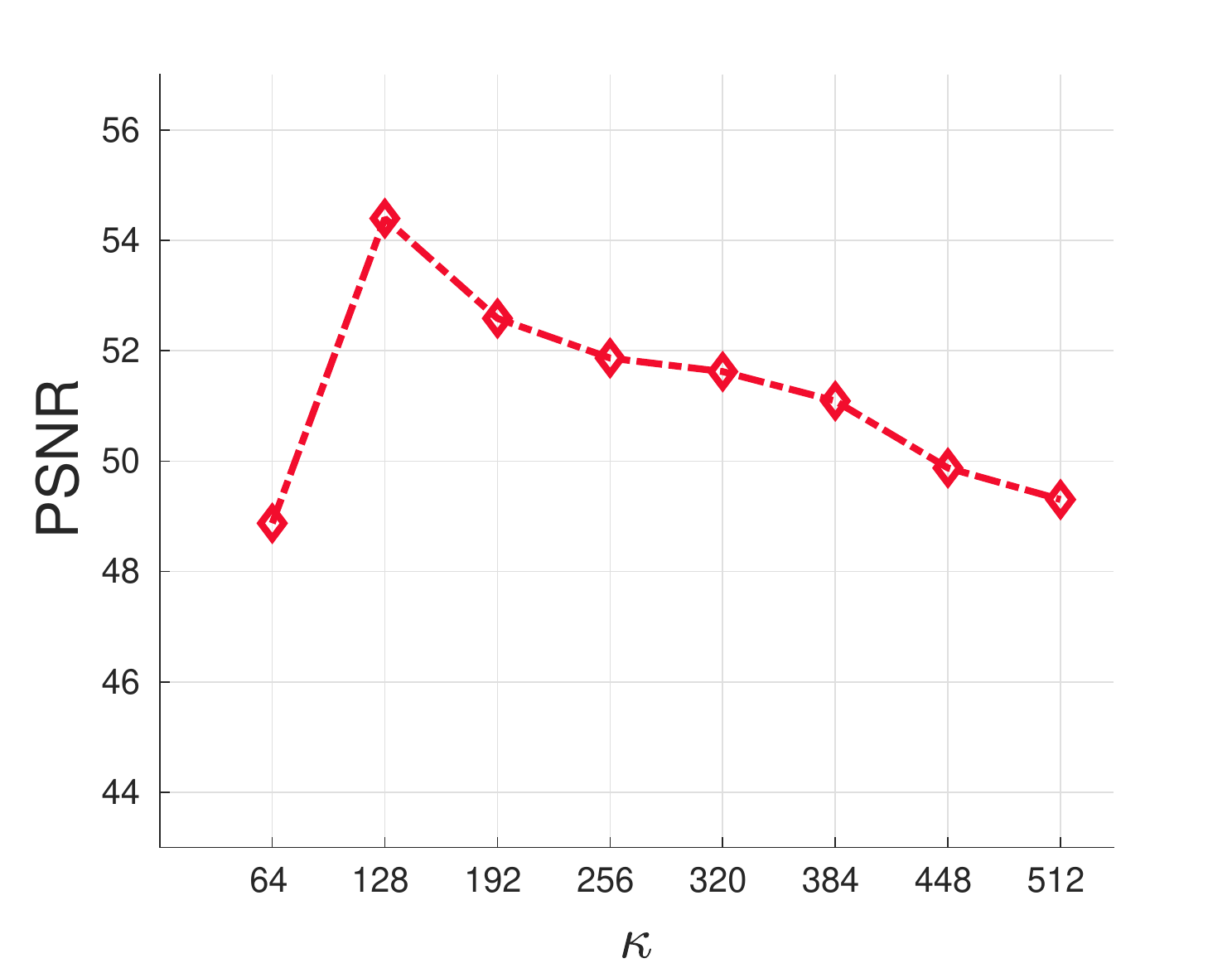}
}
\subfigure[PSNR vs. $\epsilon$]{
\centering
\includegraphics[trim =1mm 0mm 10mm 0mm, clip, width=0.45\linewidth]{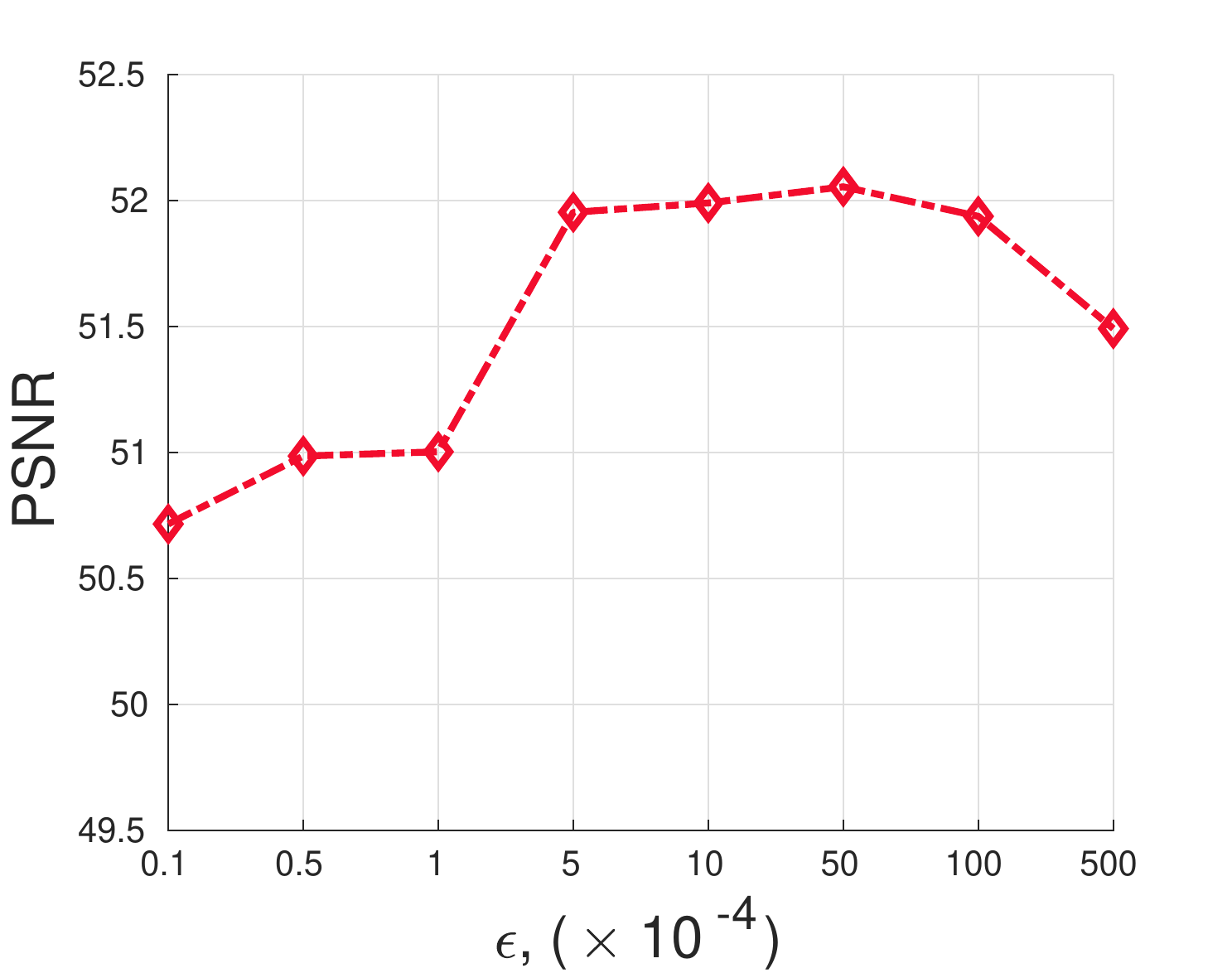}
}
\vspace{-0.15cm}
\caption{Sensitivity studies of $\kappa$ and $\epsilon$ for MPTV. PSNR of the deconvolution results with different values for the parameters are shown. (a) Sensitivity of $\kappa$. (b) Sensitivity of the stopping condition parameter $\epsilon$. \emph{In (b), the x axis is scaled for visualization}.}
\label{fig:kappa_epsilon_sens}
\end{figure}

\subsection{Quantitative Studies on Text Images}\kuire{A text image often contains near sparse gradients and
represents an important class of real images.}  \kuire{We thus study the performance of MPTV on text images using a dataset containing 14 ground truth text images and 8 kernels (in Fig. \ref{fig:syn_k_imgsp} (a)).} Here, we perform deconvolution with ground-truth blur kernels. For each blur kernel, we show the averaged PSNR and SSIM of both input blurred images and the deblurring results in Table \ref{tab:syn_text}\footnote{The comparison with L0-ABS \cite{portilla2009l0abs} is absent since the corresponding code only works on square images.}. \kuire{From the table, MPTV consistently performs better than state-of-the-art methods.} \kuire{In particular,} the results of MPTV appear sharper than others.

\begin{table*}[!t]
\begin{center}
\footnotesize
\caption{Qualitative Comparison on Text Images with Near Sparse Gradients (PSNR/SSIM).}
\vspace{-0.35cm}
\centering
\begin{tabular}{p{1.4cm}|p{1.4cm}<{\centering}|p{1.4cm}<{\centering}<{\centering}<{\centering}|p{1.4cm}<{\centering}<{\centering}|p{1.4cm}<{\centering}|p{1.4cm}<{\centering}|p{1.4cm}<{\centering}|p{1.4cm}<{\centering}|p{1.4cm}<{\centering}|p{1.4cm}<{\centering}}
\hline
 {PSF index} & {1} & {2} & {3} & {4} & {5} & {6} & {7} & {8} & {Avg.} \\ \cline{1-10}
 {Input}                            & 20.04/0.8009 & 15.97/0.5684 & 16.88/0.6312 & 17.51/0.6632 & 17.54/0.6945 & 13.49/0.4328 & 13.86/0.4882 & 14.63/0.4938 & 16.24/0.5966     \\ \hline
{FTVd \cite{wang2008new}} & 28.91/0.9573 & 27.03/0.9347 & 32.58/0.9398 & 37.00/0.9423 & 35.81/0.9490 & 31.88/0.8716 & 36.95/0.9492 & 34.18/0.9337 & 33.04/0.9347     \\ \hline
{BM3D \cite{dabov2008bm3ddeb}} & 27.14/0.9517 & 25.52/0.8942 & 28.76/0.8926 & 37.93/0.9866 & 37.01/\textbf{0.9891} & 28.86/0.8562 & 38.49/0.9891 & 33.18/0.9499 & 32.11/0.9387     \\ \hline
{WDTV \cite{lou2015weighted}} & 27.38/0.9459 & 25.40/0.9103 & 32.06/0.9302 & 37.25/0.9427 & 35.95/0.9501 & 31.96/0.8720 & 37.25/0.9514 & 34.23/0.9339 & 32.68/0.9296     \\ \hline
{IRLS \cite{levin2007coded_irls}} & 25.78/0.9384 & 22.96/0.8808 & 30.16/0.9569 & 38.49/0.9820 & 35.63/0.9833 & 37.22/0.9790 & 38.11/0.9859 & 35.00/0.9802 & 32.92/0.9608     \\ \hline
{HL \cite{krishnan2009fast}} & 25.60/0.9342 & 21.03/0.8210 & 27.63/0.9418 & 35.06/0.9825 & 33.65/0.9818 & 32.60/0.9667 & 36.26/0.9871 & 32.72/0.9751 & 30.57/0.9488     \\ \hline
{KS \cite{kheradmand2014Kernel}} & 27.52/0.9422 & 25.70/0.8948 & 29.28/0.8912 & 37.95/0.9738 & 37.12/0.9804 & 29.27/0.8535 & 38.67/0.9879 & 33.55/0.9502 & 32.38/0.9343     \\ \hline
{L0TV \cite{yuan2015l0tv}} & 29.24/0.9502 & 25.62/0.9018 & 31.99/0.9152 & 13.91/0.5324 & 14.38/0.5769 & 14.94/0.5111 & 14.09/0.5202 & 11.62/0.4153 & 19.47/0.6654     \\ \hline
{TV-ADMM} & 29.98/0.9673 & 26.91/0.9396 & 33.64/0.9635 & 38.98/0.9738 & 37.46/0.9760 & 37.10/0.9642 & 39.13/0.9775 & 36.68/0.9718 & 34.99/0.9667     \\ \hline
{MPTV} & \bf 30.15/0.9681 & \bf 27.82/0.9434 & \bf 34.27/0.9719 & \bf 39.94/0.9873 & \textbf{38.04}/0.9830 & \bf 38.05/0.9815 & \bf 39.96/0.9893 & \bf 37.30/0.9823 & \bf 35.69/0.9759     \\ \hline
\end{tabular}
\label{tab:syn_text}
\end{center}
\end{table*}

\subsection{Experiments on Natural Images}
In this section, we evaluate the proposed MPTV algorithm and other methods using natural images. We construct a dataset including 192 blurred natural images from 24 sharp images and the 8 kernels shown in Fig. \ref{fig:syn_k_imgsp} (a).
Since the gradients of natural images are much denser, MPTV is required to activate many gradients to fit the observation, leading to lower superiority over the comparator methods, as shown in Table \ref{tab:syn_natural}. Nevertheless, the proposed MPTV performs better than others in terms of PSNR and SSIM values.
Fig. \ref{fig:syn_natural_cman} provides a visual comparison which demonstrates that the latent images estimated by MPTV exhibit
sharper edges and less ringing artifacts than those of its comparators. We also note that MPTV failed to model some of the subtle textures in the background due to the binary indicator for the activated gradients.

\begin{figure*}[htp]
\centering
\subfigure[Ground truth $\bx^*$]{
\centering
\includegraphics[width=0.16\linewidth]{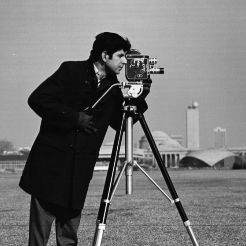}
}
\hspace{-0.4cm}
\subfigure[Blurred image $\by$]{
\centering
\includegraphics[width=0.16\linewidth]{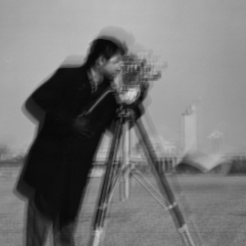}
}
\hspace{-0.4cm}
\subfigure[FTVd \cite{wang2008new}]{
\centering
\includegraphics[width=0.16\linewidth]{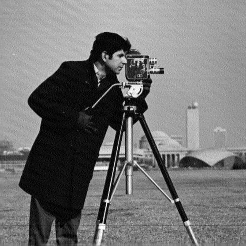}
}
\hspace{-0.4cm}
\subfigure[L0-Abs \cite{portilla2009l0abs}]{
\centering
\includegraphics[width=0.16\linewidth]{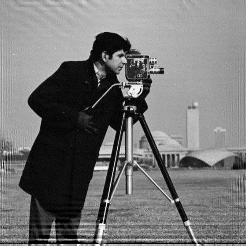}
}
\hspace{-0.4cm}
\subfigure[BM3D \cite{dabov2008bm3ddeb}]{
\centering
\includegraphics[width=0.16\linewidth]{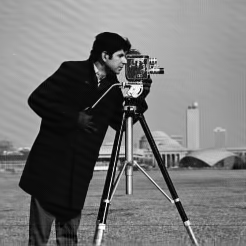}
}
\hspace{-0.4cm}
\subfigure[WTVD \cite{lou2015weighted}]{
\centering
\includegraphics[width=0.16\linewidth]{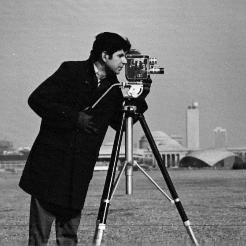}
}
\hspace{-0.4cm}
\subfigure[IRLS \cite{levin2007coded_irls}]{
\centering
\includegraphics[width=0.16\linewidth]{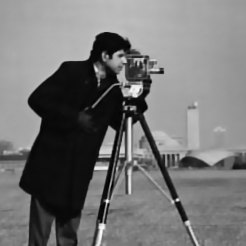}
}
\hspace{-0.4cm}
\subfigure[HL \cite{krishnan2009fast}]{
\centering
\includegraphics[width=0.16\linewidth]{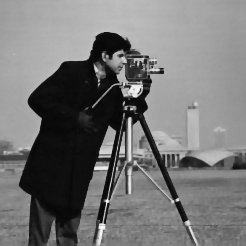}
}
\hspace{-0.4cm}
\subfigure[KS \cite{kheradmand2014Kernel}]{
\centering
\includegraphics[width=0.16\linewidth]{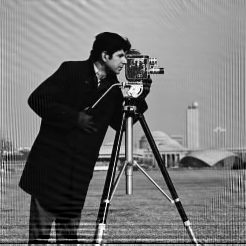}
}
\hspace{-0.4cm}
\subfigure[L0TV \cite{yuan2015l0tv}]{
\centering
\includegraphics[width=0.16\linewidth]{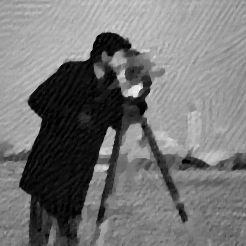}
}
\hspace{-0.4cm}
\subfigure[TVADMM]{
\centering
\includegraphics[width=0.16\linewidth]{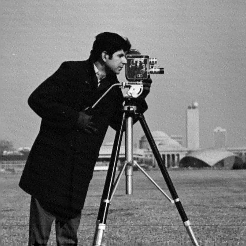}
}
\hspace{-0.4cm}
\subfigure[MPTV]{
\centering
\includegraphics[width=0.16\linewidth]{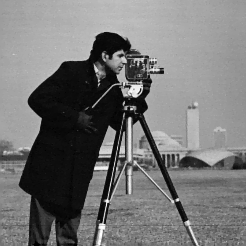}
}
\vspace{-0.2cm}
\caption{An example of the results on synthetically blurred natural images. The blurred image $\by$ is generated using the sharp image \texttt{cameraman} and the \#6 blur kernel in Fig. \ref{fig:syn_k_imgsp} (a). (a) Ground truth sharp image $\bx^*$. (b) Input blurred image $\by$. (c)-(l) Deconvolution results of different methods in comparison.}
\label{fig:syn_natural_cman}
\end{figure*}

\subsubsection{\kuire{Sensitivity study on blur kernel errors}}
In practice, the blur kernel for image deconvolution is usually an estimate containing errors, which lead to ringing artifacts \cite{Shan2008High} and incur degraded deconvolution performance.
We thus here describe an experiment to study the sensitivity of different methods
to errors in the kernel estimate.
Seven $256\times 256$ sharp natural images (among the 24 sharp images above) and two blur kernels (Gaussian blur kernel \#1 and motion blur kernel \#6) are used to generate the test data.
The ground truth known blur kernel is applied to each image, and each kernel then has noise added to form an erroneous blur kernel estimate which is passed to the various deblurring methods. The added noise is sampled from Gaussian distribution with a noise level increasing from
$0.2\%$ to $0.5\%$ (with an interval of $0.05\%$).

As shown in Fig. \ref{fig:syn_kerror_sens}, the proposed method is more robust than other algorithms since the cutting-plane of the proposed MPTV estimates the sharp image by gradually activating the significant gradients which helps to suppress the ringing artifacts.

\begin{table*}[!t]
\begin{center}
\footnotesize
\caption{Qualitative Comparison on Natural Images (PSNR/SSIM).}
\vspace{-0.35cm}
\centering
\begin{tabular}{p{1.4cm}|p{1.4cm}<{\centering}|p{1.4cm}<{\centering}<{\centering}<{\centering}|p{1.4cm}<{\centering}<{\centering}|p{1.4cm}<{\centering}|p{1.4cm}<{\centering}|p{1.4cm}<{\centering}|p{1.4cm}<{\centering}|p{1.4cm}<{\centering}|p{1.4cm}<{\centering}}
\hline
 {PSF index} & {1} & {2} & {3} & {4} & {5} & {6} & {7} & {8} & {Avg.} \\ \cline{1-10}
 {Input} & 25.75/0.7437 & 21.53/0.4991 & 22.42/0.5822 & 23.18/0.6131 & 23.22/0.6203 & 18.66/0.3782 & 19.17/0.4138 & 20.05/0.4371 & 21.75/0.5359     \\ \hline
 {FTVd \cite{wang2008new}} & 29.74/0.8459 & 23.93/0.5673 & 23.71/0.5770 & 31.07/0.8197 & 29.67/0.7663 & 25.00/0.6734 & 31.53/0.8321 & 27.25/0.7236 & 27.74/0.7257     \\ \hline
{L0-Abs \cite{portilla2009l0abs}} & 30.95/\textbf{0.8911} & 19.36/0.5190 & 19.69/0.5368 & 26.52/0.7709 & 29.53/0.8588 & 22.58/0.6414 & 31.74/0.8913 & 22.69/0.6637 & 25.38/0.7216     \\ \hline
{BM3D \cite{dabov2008bm3ddeb}} & 30.76/0.8799 & 24.79/0.6477 & 24.44/0.6576 & 33.07/0.8916 & 34.50/0.9281 & 21.77/0.5983 & 35.33/0.9392 & 29.55/0.8434 & 29.28/0.7982     \\ \hline
{WDTV \cite{lou2015weighted}} & 30.60/0.8818 & 27.62/0.7784 & 29.37/0.8323 & 34.86/0.9228 & 34.57/0.9216 & 29.87/0.8496 & 35.68/0.9359 & 30.90/0.8742 & 31.69/0.8746     \\ \hline
{IRLS \cite{levin2007coded_irls}} & 29.02/0.8423 & 25.27/0.6933 & 27.96/0.8019 & 30.35/0.8609 & 30.25/0.8582 & 29.42/0.8394 & 31.97/0.8938 & 30.09/0.8576 & 29.29/0.8309     \\ \hline
{HL \cite{krishnan2009fast}} & 29.59/0.8540 & 25.49/0.6980 & 28.56/0.8164 & 31.89/0.8889 & 31.85/0.8836 & 30.10/0.8614 & 33.59/0.9161 & 31.04/0.8773 & 30.26/0.8495     \\ \hline
{KS \cite{kheradmand2014Kernel}} & 30.48/0.8780 & 11.67/0.1524 & 11.85/0.1807 & 19.79/0.4958 & 22.92/0.7444 & 15.53/0.3038 & 26.98/0.8139 & 15.09/0.3183 & 19.29/0.4859     \\ \hline
{L0TV \cite{yuan2015l0tv}} & 29.28/0.8615 & 26.17/0.7345 & 28.37/0.8189 & 21.52/0.4082 & 21.97/0.4506 & 22.73/0.5399 & 23.59/0.5455 & 18.80/0.2801 & 24.05/0.5799     \\ \hline
{TV-ADMM} & 30.93/0.8881 & 27.42/0.7753 & 28.25/0.8119 & 35.06/0.9263 & 34.67/0.9230 & 31.70/0.8869 & 35.74/0.9363 & 30.76/0.8744 & 31.82/0.8778     \\ \hline
{MPTV} & \textbf{30.96}/0.8869 & \bf 27.89/0.7918 & \bf 30.08/0.8627 & \bf 35.53/0.9394 & \bf 35.02/0.9330 & \bf 32.85/0.9188 & \bf 36.32/0.9489 & \bf 32.67/0.9161 & \bf 32.67/0.8997     \\ \hline
\end{tabular}
\label{tab:syn_natural}
\end{center}
\end{table*}

\begin{figure*}[htp]
\centering
\subfigure[]{
\label{fig:blur_sens_a}
\centering
\includegraphics[trim =0mm 0mm 11mm 0mm, clip, width=0.23\linewidth]{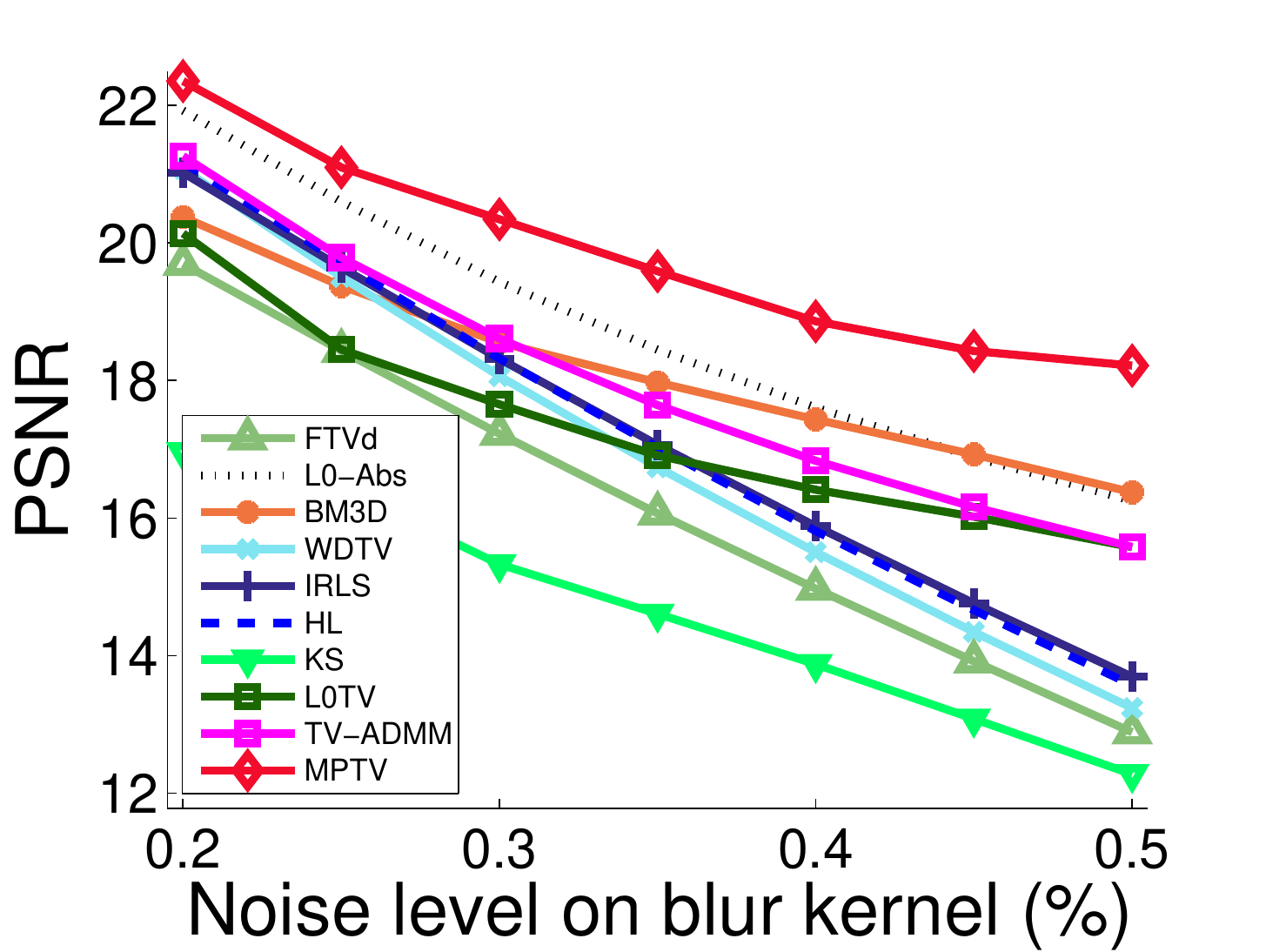}
}
\hfill
\subfigure[]{
\label{fig:blur_sens_b}
\centering
\includegraphics[trim =1.5mm 0mm 11mm 0mm, clip, width=0.23\linewidth]{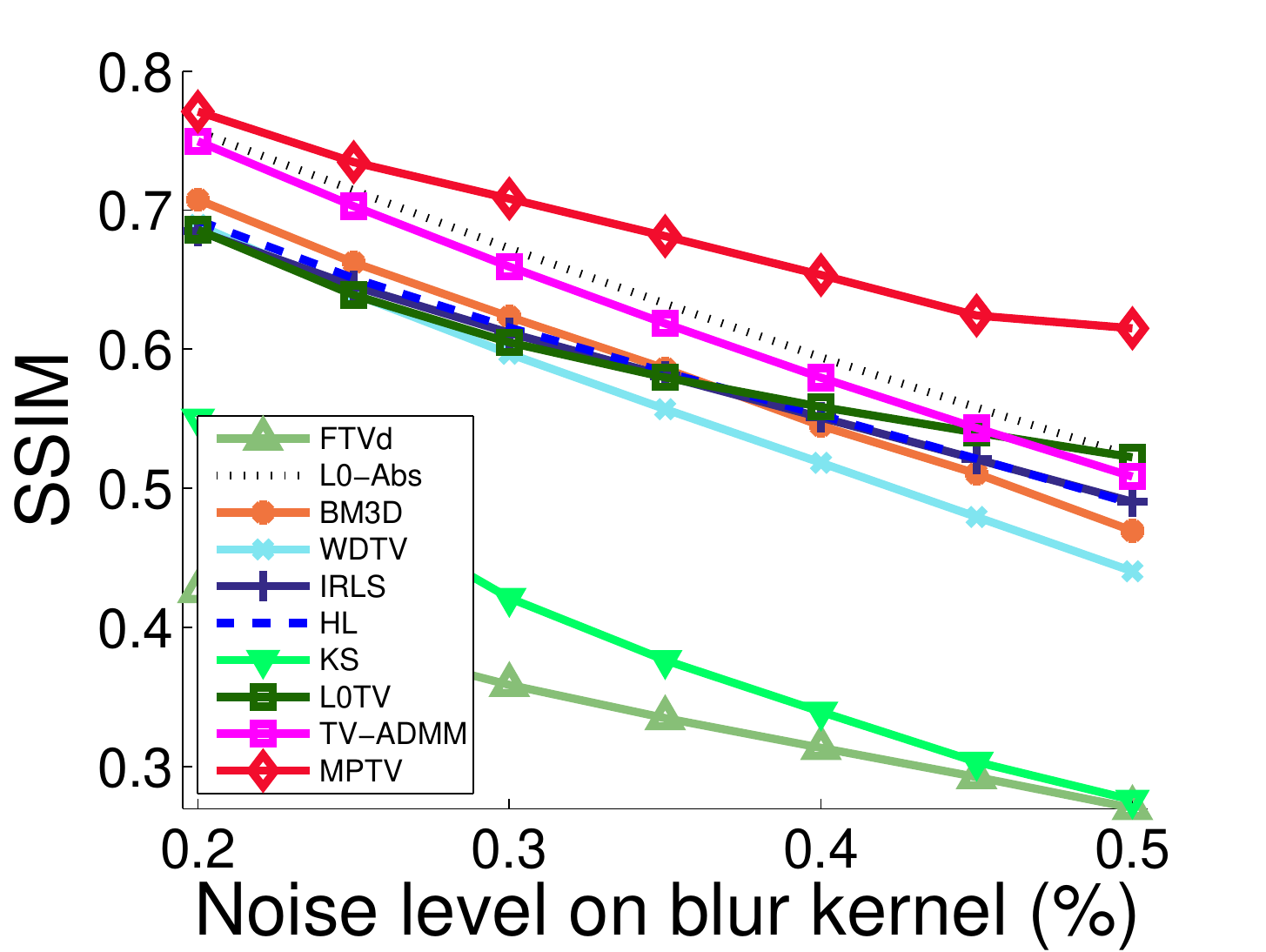}
}
\hfill
\subfigure[]{
\label{fig:blur_sens_c}
\centering
\includegraphics[trim =0mm 0mm 11mm 0mm, clip, width=0.23\linewidth]{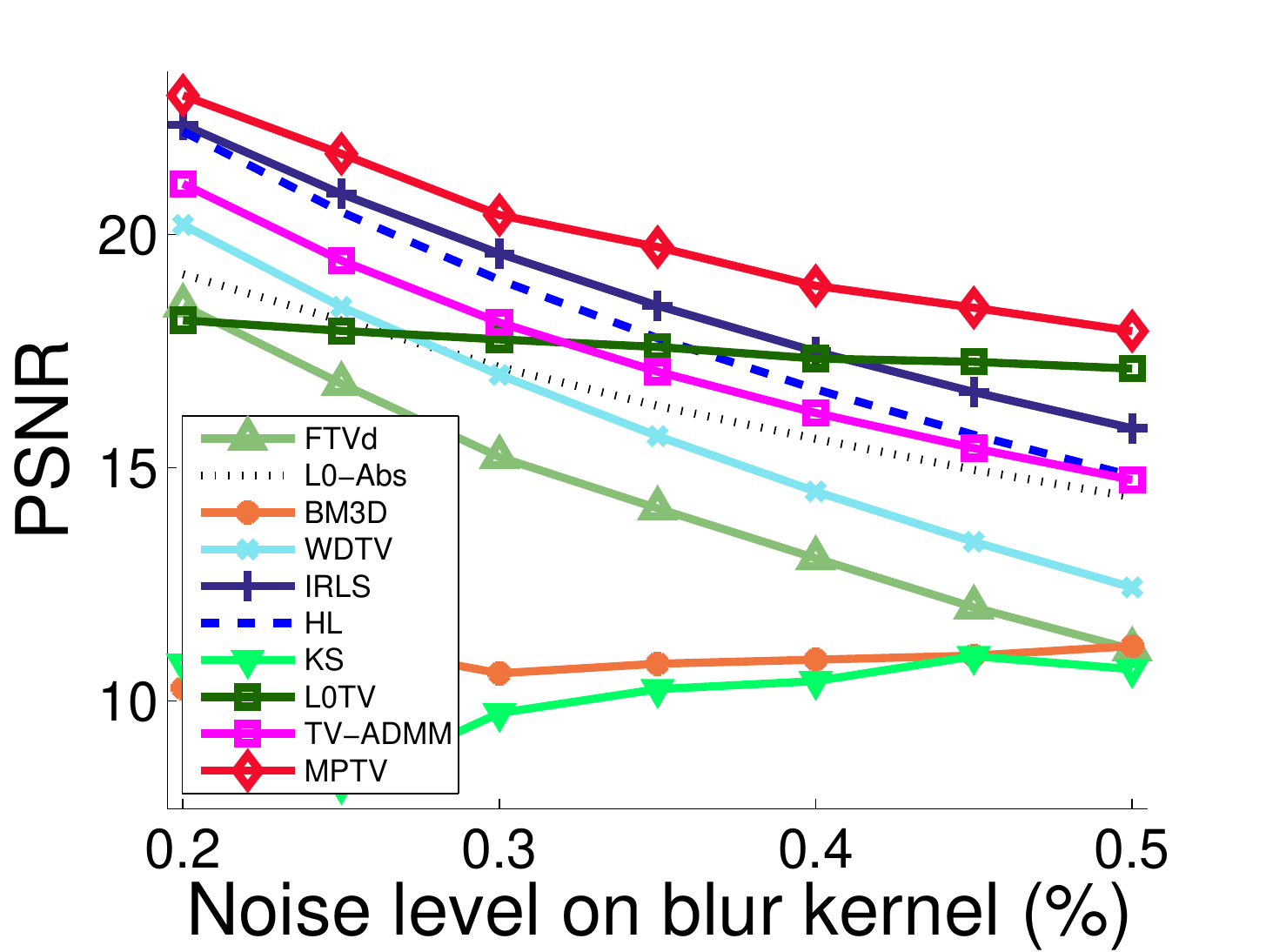}
}
\hfill
\subfigure[]{
\label{fig:blur_sens_d}
\centering
\includegraphics[trim =1.5mm 0mm 11mm 0mm, clip, width=0.23\linewidth]{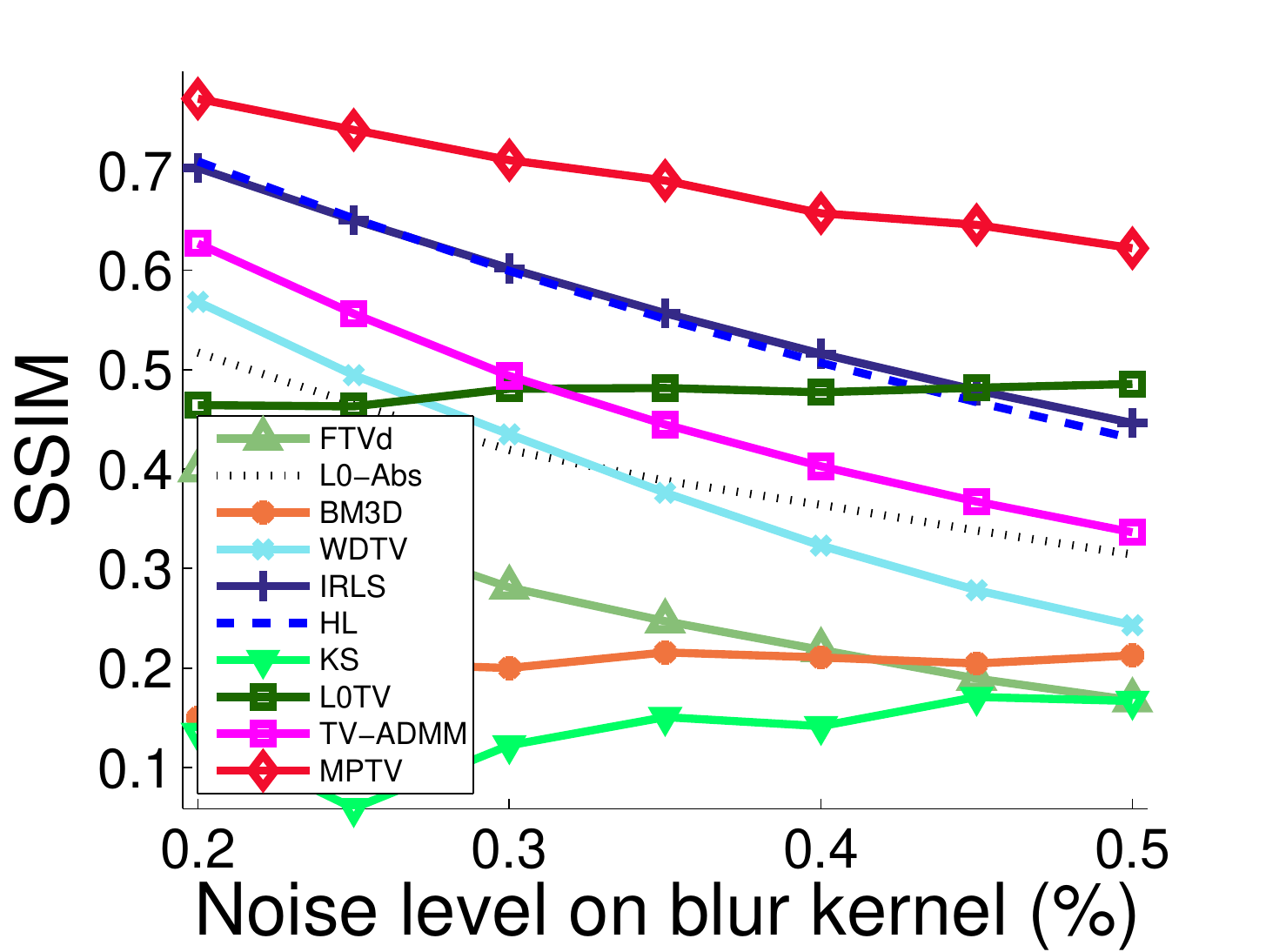}
}
\vspace{-0.2cm}
\caption{Study of  sensitivity to the error in the blur kernel estimate. Blur kernels with increasing noise level are given for deconvolution. PSNR and SSIM of the deconvolution results are evaluated for 7 blurry images contaminated by Gaussian blur or motion blur. (a) and (b) Results on the images blurred by Gaussian blur kernel. (c) and (d) Results on the images blurred by the motion blur kernel.}
\label{fig:syn_kerror_sens}
\end{figure*}

\subsection{Experiments on Real-world Images}
We \kui{evaluate} the performance of MPTV on two kinds of real-world blurred images, i.e. blurred text images and natural images. Here, the blur kernels are unknown. \dgrr{For real-world image deconvolution, the kernel estimation error is one main reason causing ringing artifacts~\cite{Shan2008High}.} For fair comparison, we use blur kernels estimated by the method in \cite{pan2014text} for all compared deconvolution methods.

\par
\kuire{We first report the results on a blurred text image in Fig. \ref{fig:real_text_hd}. From the figure, the deblurred results of most compared methods contain significant ringing artifacts and/or over-smooth strokes. In fact, since these methods usually use a universal criterion to suppress undesired components, such as thresholding on  gradients, they may cause over-smoothness (see Fig. \ref{fig:real_text_hd} (h)), or fail to suppress artifacts (see Fig. \ref{fig:real_text_hd} (b)). On the contrary, the proposed MPTV algorithm can recover sharper and clearer results by gradually activating significant strokes to fit the blurred observation (Fig. \ref{fig:real_text_hd} (k) and (l)).}

\par
In Fig. \ref{fig:real_harubang}, we show the deblurring results on a natural image containing both subtle textures and flat background. Although the ringing artifacts caused by the kernel estimation error are unavoidable,
MPTV performs favorably against the state-of-the-art competitors by alleviating ringing artifacts and preserving shape details simultaneously.
Fig.~\ref{fig:real_summerhouse} shows that, on an image with complex contents, the proposed MPTV can achieve favorable results than other methods. With a smaller regularization weight ($\lambda=0.00005$, Fig. \ref{fig:real_summerhouse} (j)), MPTV recovers much sharper and clearer details but still suppresses the ringing artifacts to quite a low level.

\begin{figure*}[htp]
\centering
\subfigure[Input $\by$ and $\bk$]{
\centering
\begin{minipage}[b]{.19\textwidth}
\centerline{
\begin{overpic}[width=1\textwidth]
{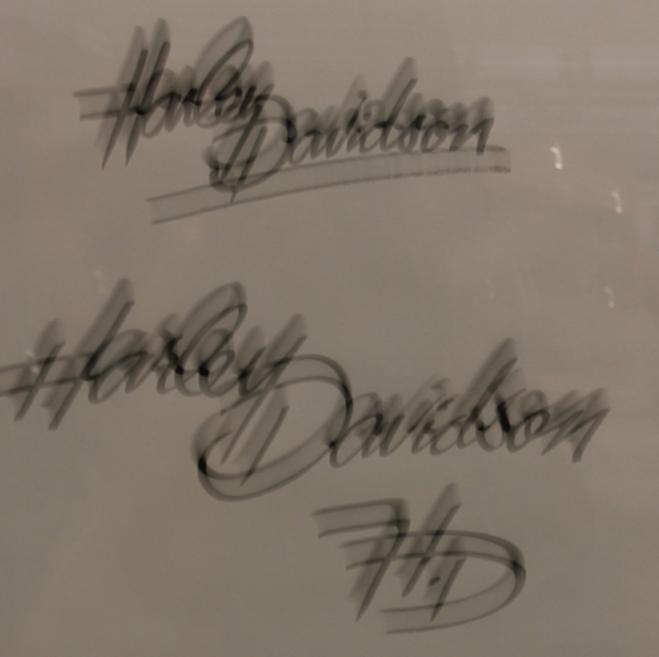}
\put(68,67){\includegraphics[width=0.32\linewidth]{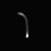}}
\put(69,69){\color{white}{\bf $\bk$}}
\end{overpic}}
\end{minipage}
}
\hspace{-0.4cm}
\subfigure[FTVd \cite{wang2008new}]{
\centering
\includegraphics[width=0.19\linewidth]{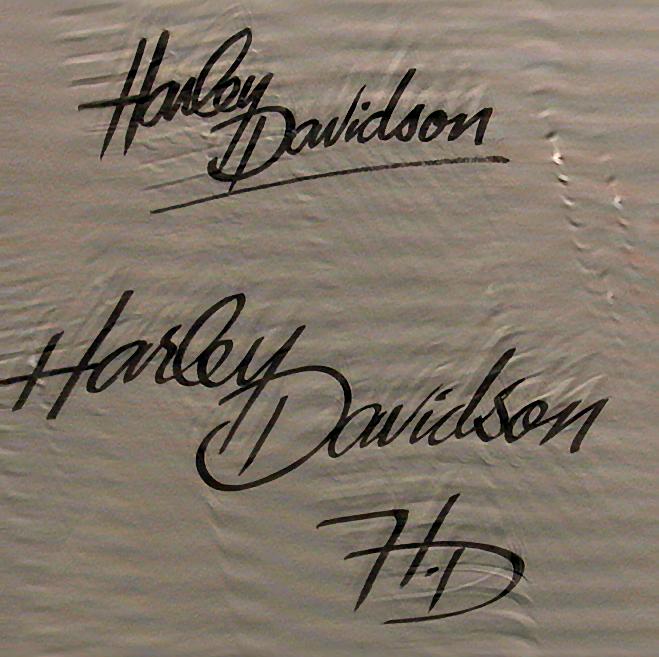}
}
\hspace{-0.4cm}
\subfigure[BM3D \cite{dabov2008bm3ddeb}]{
\centering
\includegraphics[width=0.19\linewidth]{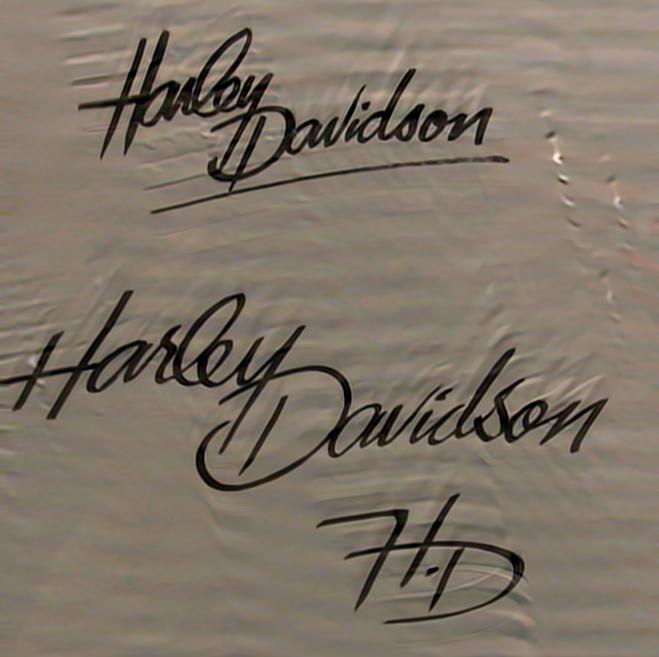}
}
\hspace{-0.4cm}
\subfigure[WDTV \cite{lou2015weighted}]{
\centering
\includegraphics[width=0.19\linewidth]{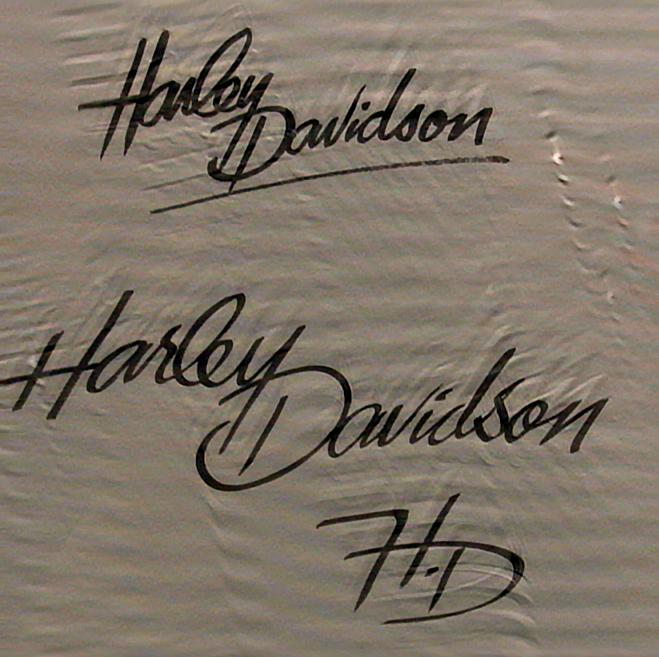}
}
\hspace{-0.4cm}
\subfigure[IRLS \cite{levin2007coded_irls}]{
\centering
\includegraphics[width=0.19\linewidth]{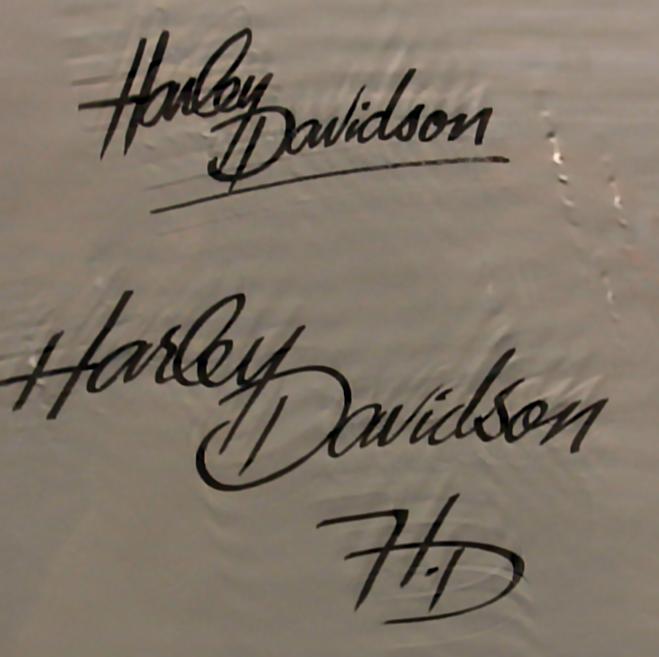}
}
\hspace{-0.4cm}
\subfigure[HL \cite{krishnan2009fast}]{
\centering
\includegraphics[width=0.19\linewidth]{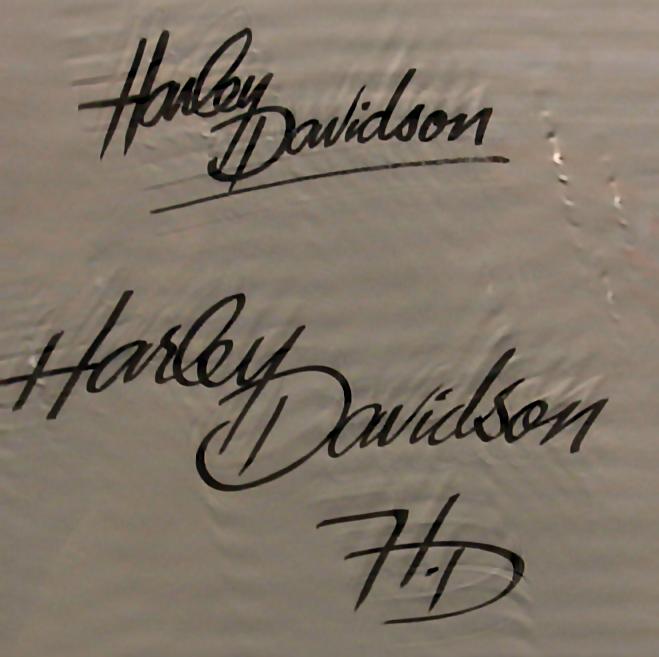}
}
\hspace{-0.4cm}
\subfigure[KS \cite{kheradmand2014Kernel}]{
\centering
\includegraphics[width=0.19\linewidth]{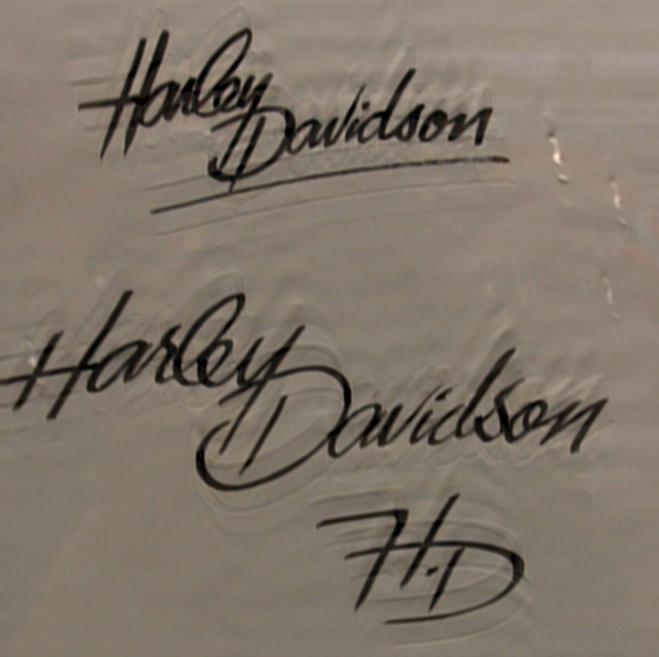}
}
\hspace{-0.4cm}
\subfigure[$\text{TV}_2$ \cite{pan2014text}]{
\centering
\includegraphics[width=0.19\linewidth]{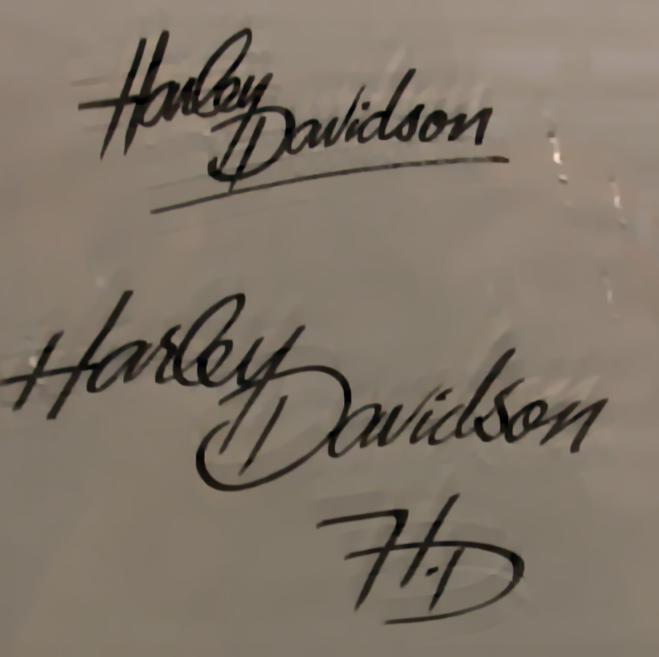}
}
\hspace{-0.4cm}
\subfigure[TV-ADMM]{
\centering
\includegraphics[width=0.19\linewidth]{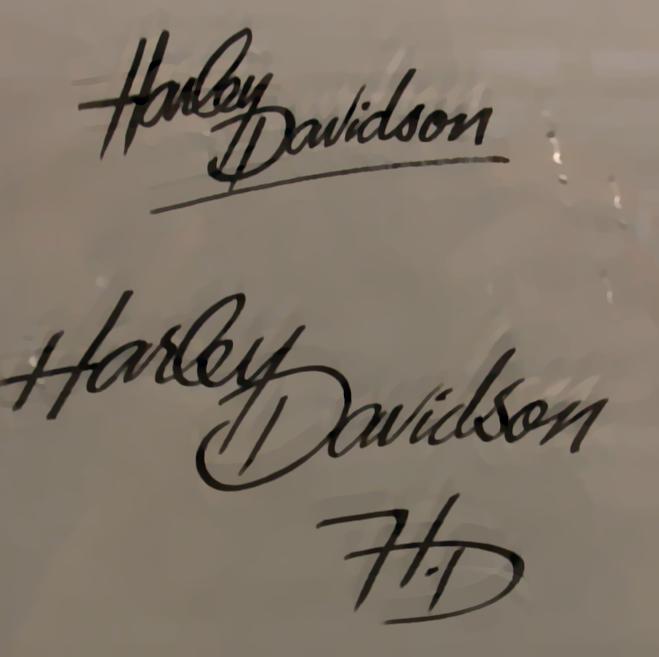}
}
\hspace{-0.4cm}
\subfigure[MPTV]{
\centering
\includegraphics[width=0.19\linewidth]{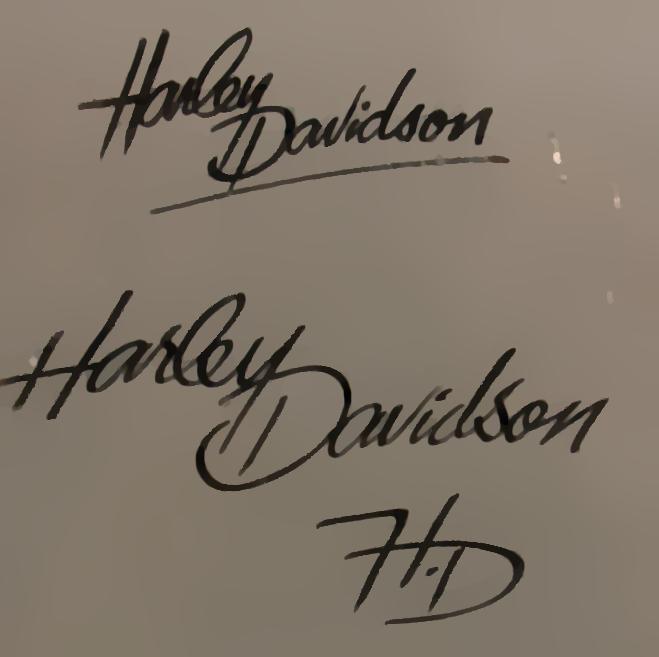}
}
\hspace{-0.4cm}
\subfigure[MPTV $\bg^t$]{
\begin{tabular}[]{c}
\begin{minipage}[b]{.157\textwidth}
\centerline{
\begin{overpic}[width=1\textwidth]
{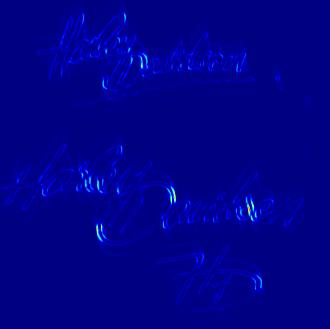}
\put(5,5){\color{white}{\bf $\bg^1$}}
\end{overpic}}
\end{minipage}
\hspace{-0.2cm}
\begin{minipage}[b]{.157\textwidth}
\centerline{
\begin{overpic}[width=1\textwidth]
{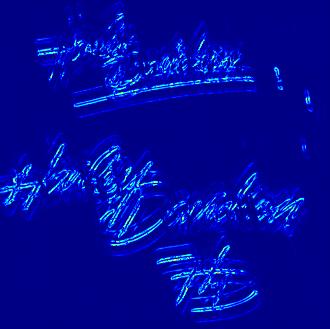}
\put(5,5){\color{white}{\bf $\bg^2$}}
\end{overpic}}
\end{minipage}
\hspace{-0.2cm}
\begin{minipage}[b]{.157\textwidth}
\centerline{
\begin{overpic}[width=1\textwidth]
{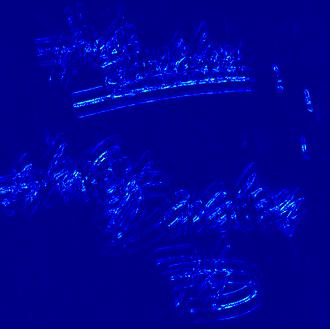}
\put(5,5){\color{white}{\bf $\bg^4$}}
\end{overpic}}
\end{minipage}
\end{tabular}
}
\hspace{-0.8cm}
\subfigure[MPTV $\bx^t$]{
\begin{tabular}[]{c}
\begin{minipage}[b]{.157\textwidth}
\centerline{
\begin{overpic}[width=1\textwidth]
{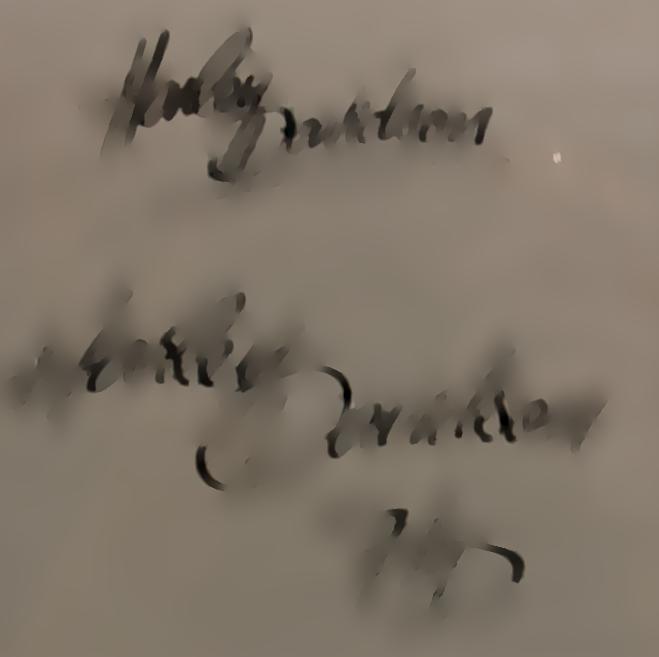}
\put(5,5){\color{white}{\bf $\bx^1$}}
\end{overpic}}
\end{minipage}
\hspace{-0.2cm}
\begin{minipage}[b]{.157\textwidth}
\centerline{
\begin{overpic}[width=1\textwidth]
{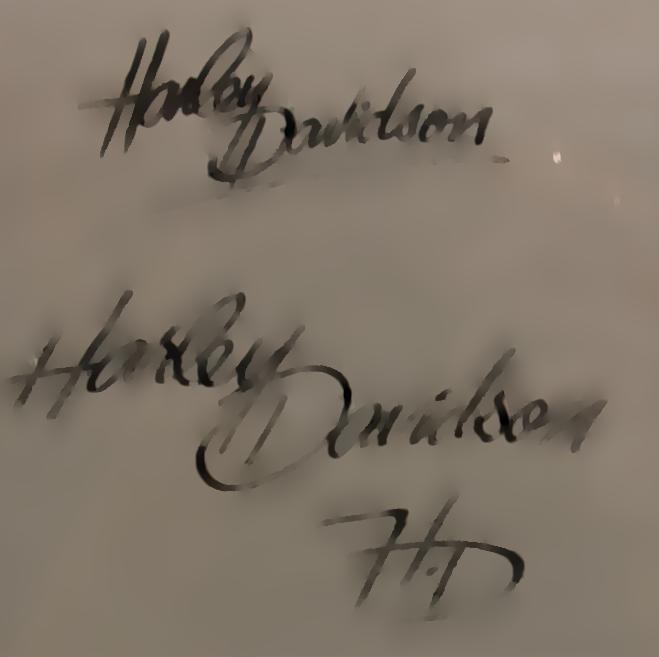}
\put(5,5){\color{white}{\bf $\bx^2$}}
\end{overpic}}
\end{minipage}
\hspace{-0.2cm}
\begin{minipage}[b]{.157\textwidth}
\centerline{
\begin{overpic}[width=1\textwidth]
{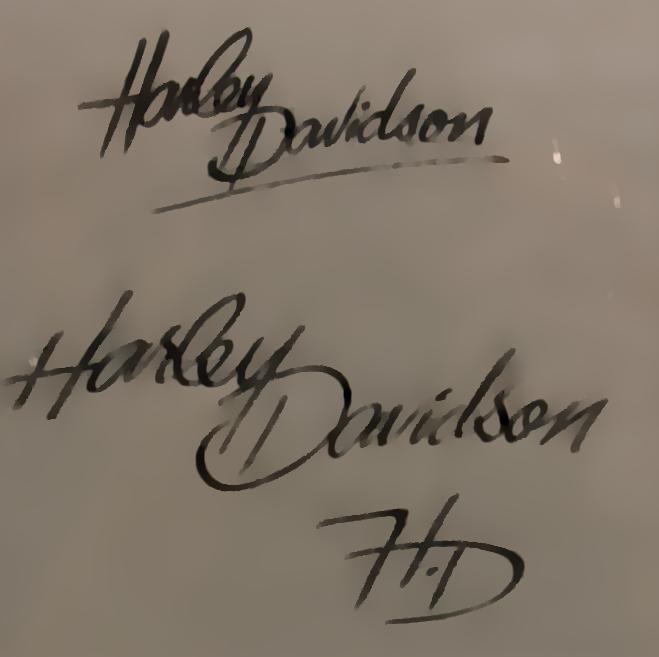}
\put(5,5){\color{white}{\bf $\bx^4$}}
\end{overpic}}
\end{minipage}
\end{tabular}
}
\caption{Experimental results on real text image. (a) Input blurred image $\by$ and blur kernel $\bk$ estimated using the method in \cite{pan2014text}. (b)-(j) Deconvolution results of different methods. The results using $\text{TV}_2$ in (h) are estimated using the method in \cite{pan2014text}. We show the intermediate results during the iterations of the proposed MPTV in (k) and (l). (k) $\bg^t$ for activating gradients. (l) Intermediate $\bx^t$. In this example, MPTV terminates after 6 iterations.}
\label{fig:real_text_hd}
\end{figure*}

\begin{figure*}[htp]
\centering
\subfigure[Input $\by$ and $\bk$]{
\centering
\begin{minipage}[b]{.19\textwidth}
\centerline{
\begin{overpic}[width=1\textwidth]
{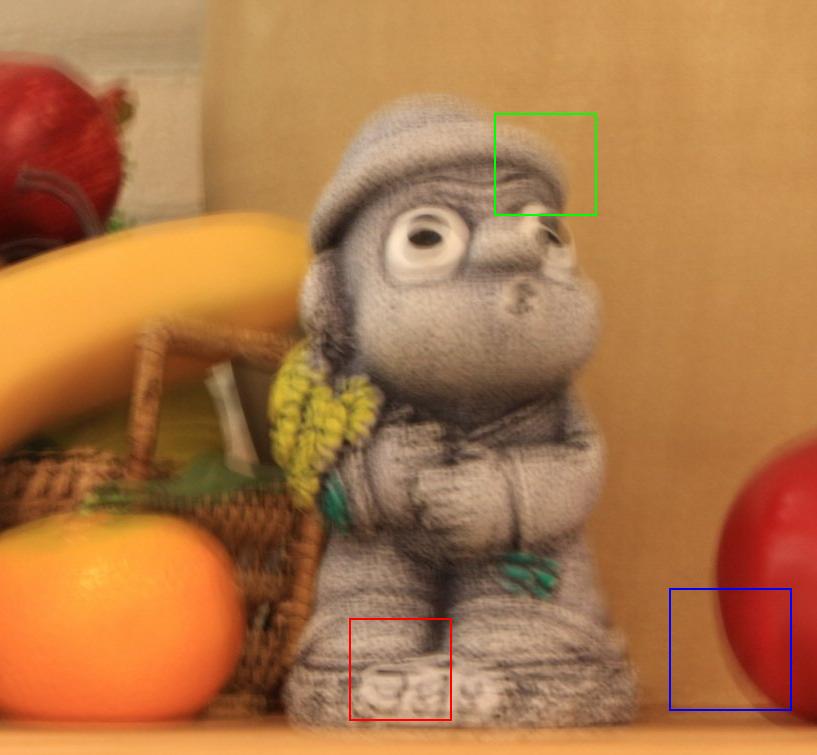}
\put(0,68){\includegraphics[width=0.24\linewidth]{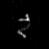}}
\put(76, 68.5){\includegraphics[width=0.235\linewidth]{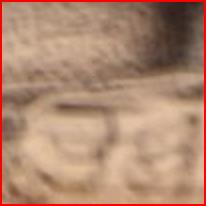}}
\put(76, 44.5){\includegraphics[width=0.235\linewidth]{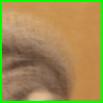}}
\put(2,69){\color{white}{\bf $\bk$}}
\end{overpic}}
\end{minipage}
}
\hspace{-0.4cm}
\subfigure[Xu and Jia \cite{xu2010two}]{
\centering
\begin{minipage}[b]{.19\textwidth}
\centerline{
\begin{overpic}[width=1\textwidth]
{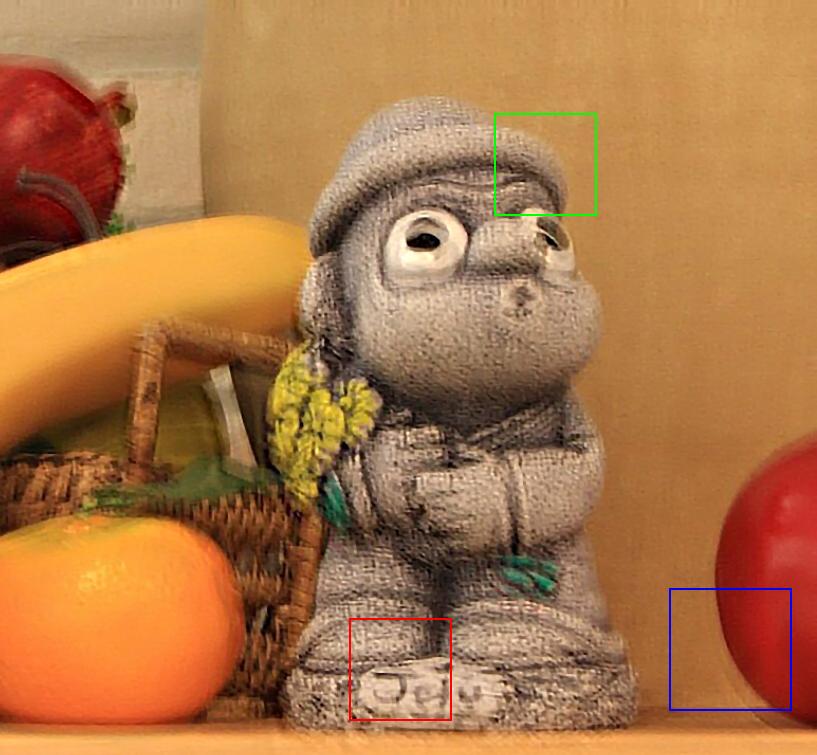}
\put(76, 68.5){\includegraphics[width=0.235\linewidth]{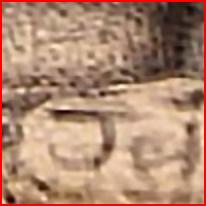}}
\put(76, 44.5){\includegraphics[width=0.235\linewidth]{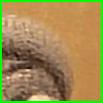}}
\end{overpic}}
\end{minipage}
}
\hspace{-0.4cm}
\subfigure[FTVd \cite{wang2008new}]{
\centering
\begin{minipage}[b]{.19\textwidth}
\centerline{
\begin{overpic}[width=1\textwidth]
{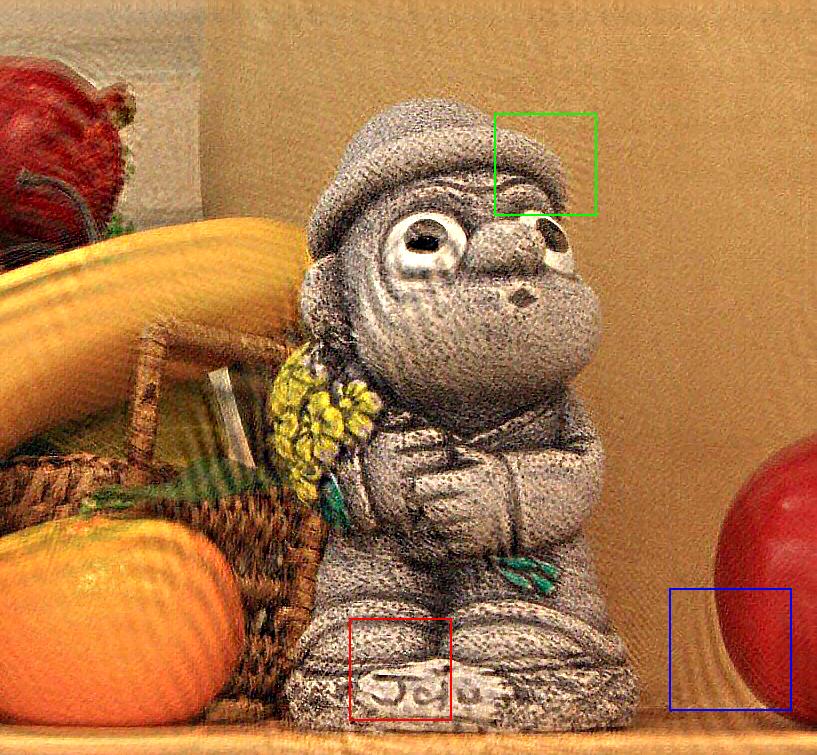}
\put(76, 68.5){\includegraphics[width=0.235\linewidth]{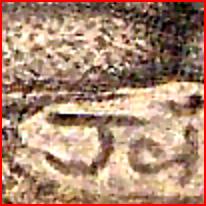}}
\put(76, 44.5){\includegraphics[width=0.235\linewidth]{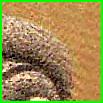}}
\end{overpic}}
\end{minipage}
}
\hspace{-0.4cm}
\subfigure[BM3D \cite{dabov2008bm3ddeb}]{
\centering
\begin{minipage}[b]{.19\textwidth}
\centerline{
\begin{overpic}[width=1\textwidth]
{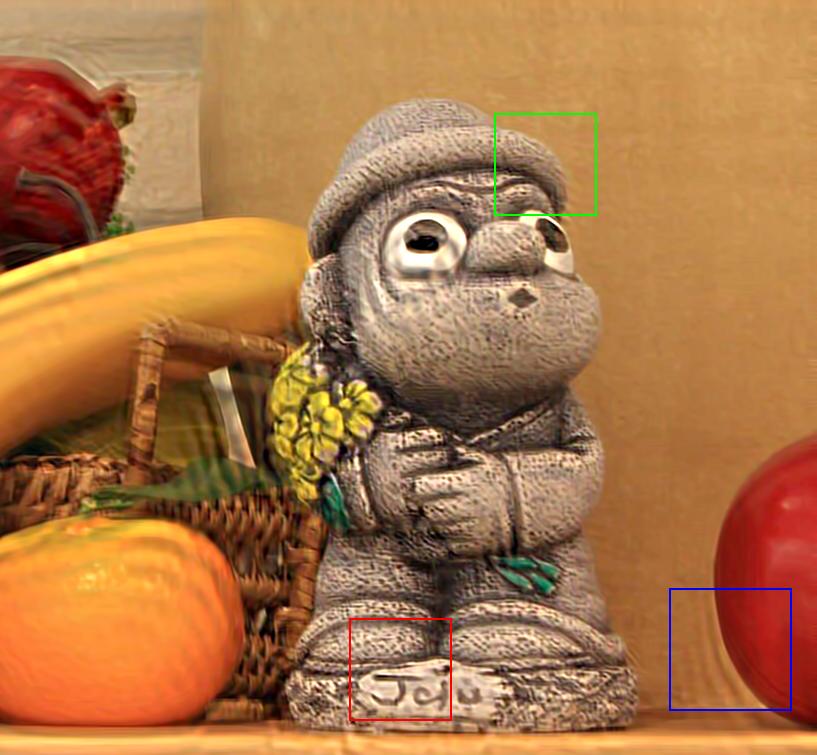}
\put(76, 68.5){\includegraphics[width=0.235\linewidth]{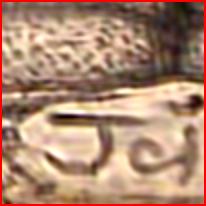}}
\put(76, 44.5){\includegraphics[width=0.235\linewidth]{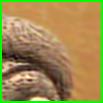}}
\end{overpic}}
\end{minipage}
}
\hspace{-0.4cm}
\subfigure[WDTV \cite{lou2015weighted}]{
\centering
\begin{minipage}[b]{.19\textwidth}
\centerline{
\begin{overpic}[width=1\textwidth]
{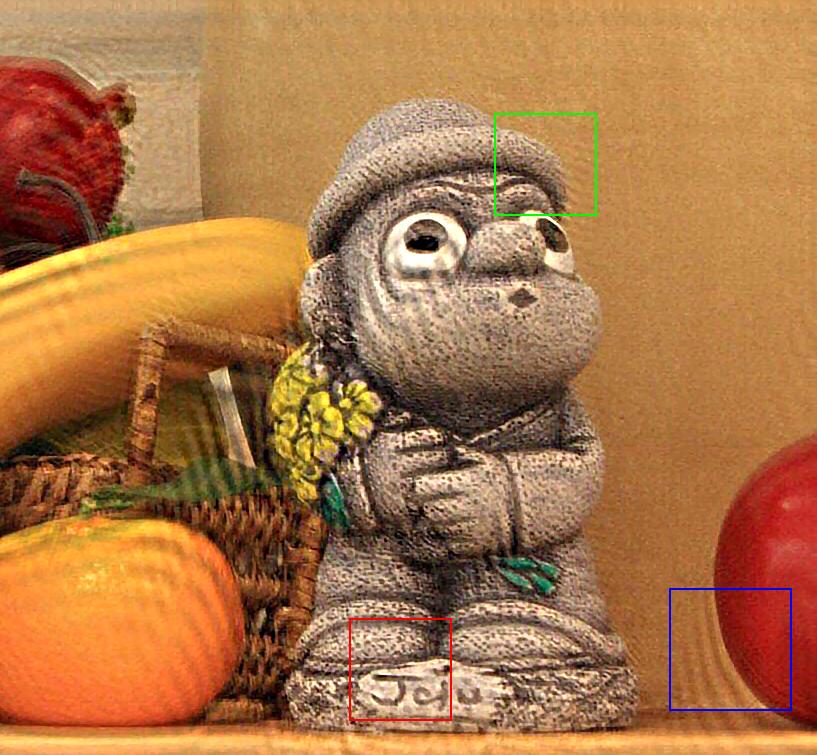}
\put(76, 68.5){\includegraphics[width=0.235\linewidth]{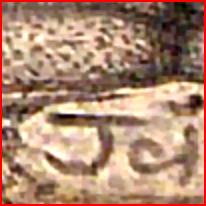}}
\put(76, 44.5){\includegraphics[width=0.235\linewidth]{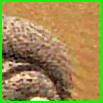}}
\end{overpic}}
\end{minipage}
}
\hspace{-0.4cm}
\subfigure[IRLS \cite{levin2007coded_irls}]{
\centering
\begin{minipage}[b]{.19\textwidth}
\centerline{
\begin{overpic}[width=1\textwidth]
{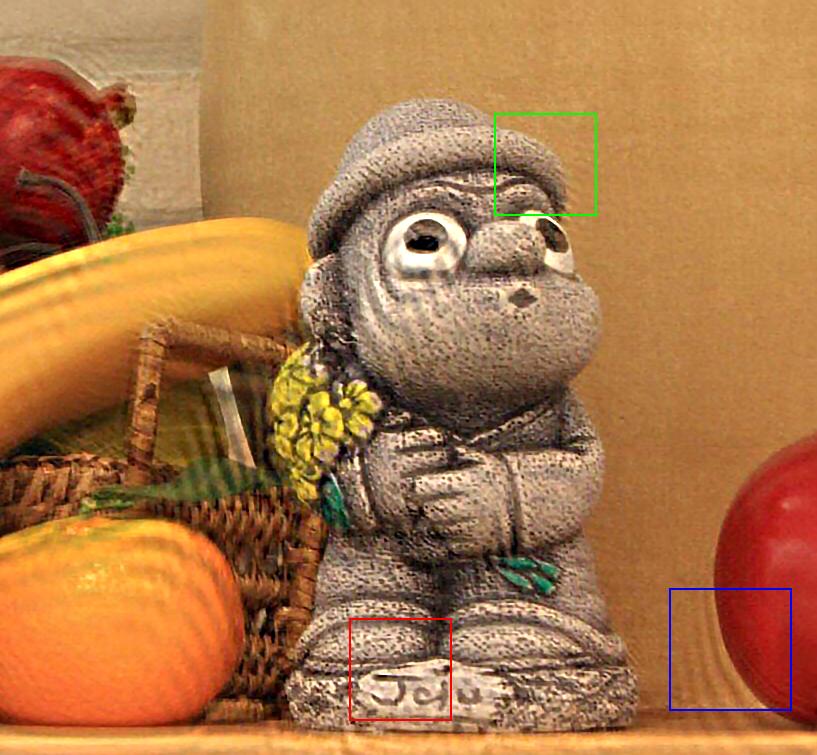}
\put(76, 68.5){\includegraphics[width=0.235\linewidth]{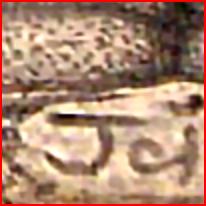}}
\put(76, 44.5){\includegraphics[width=0.235\linewidth]{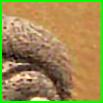}}
\end{overpic}}
\end{minipage}
}
\hspace{-0.4cm}
\subfigure[HL \cite{krishnan2009fast}]{
\centering
\begin{minipage}[b]{.19\textwidth}
\centerline{
\begin{overpic}[width=1\textwidth]
{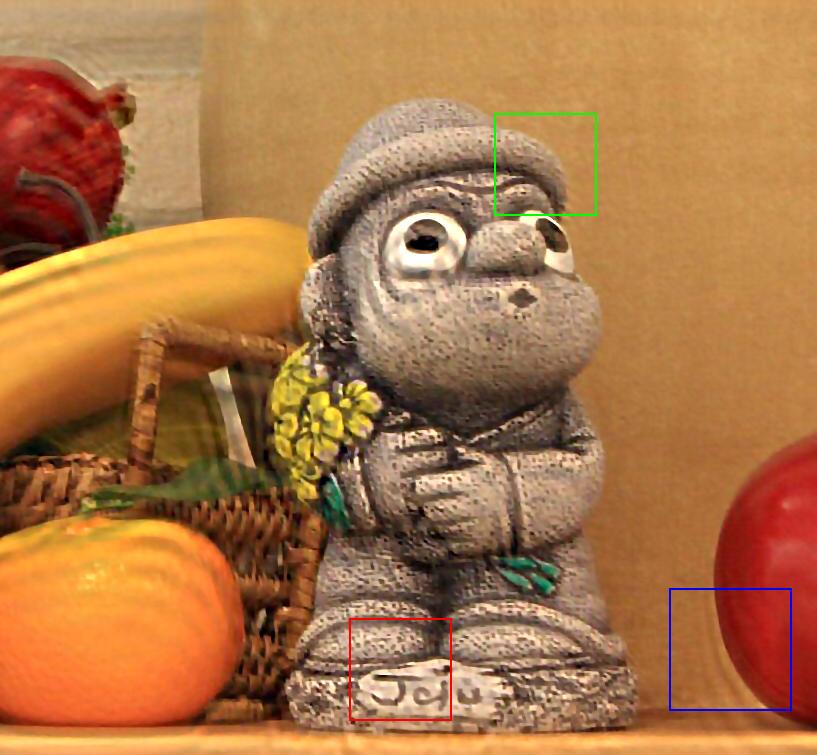}
\put(76, 68.5){\includegraphics[width=0.235\linewidth]{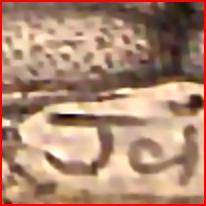}}
\put(76, 44.5){\includegraphics[width=0.235\linewidth]{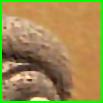}}
\end{overpic}}
\end{minipage}
}
\hspace{-0.4cm}
\subfigure[KS \cite{kheradmand2014Kernel}]{
\centering
\begin{minipage}[b]{.19\textwidth}
\centerline{
\begin{overpic}[width=1\textwidth]
{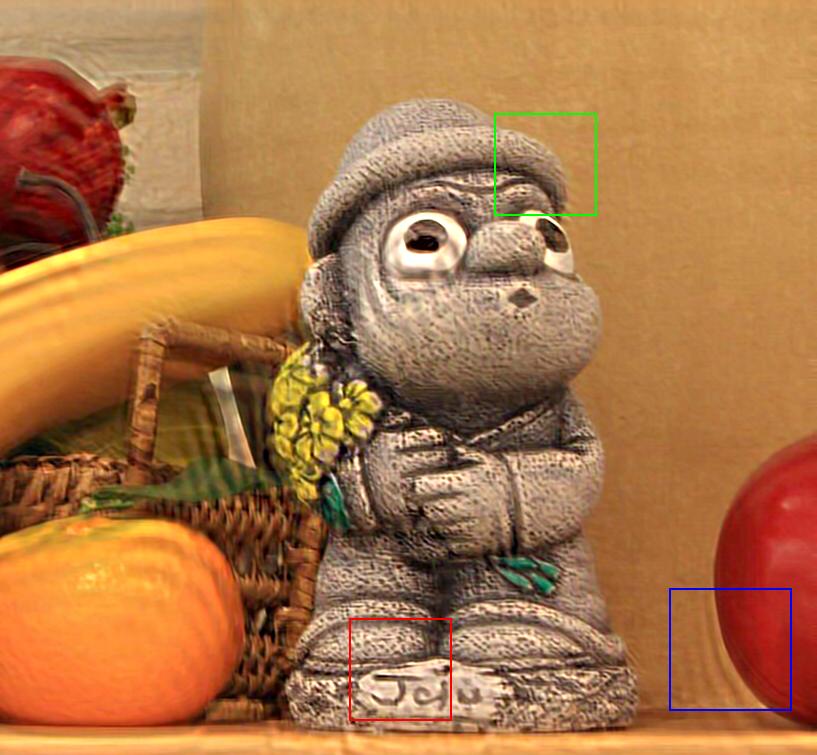}
\put(76, 68.5){\includegraphics[width=0.235\linewidth]{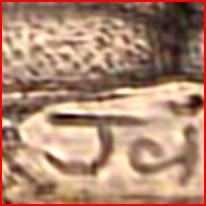}}
\put(76, 44.5){\includegraphics[width=0.235\linewidth]{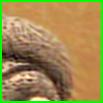}}
\end{overpic}}
\end{minipage}
}
\hspace{-0.4cm}
\subfigure[TVADMM]{
\centering
\begin{minipage}[b]{.19\textwidth}
\centerline{
\begin{overpic}[width=1\textwidth]
{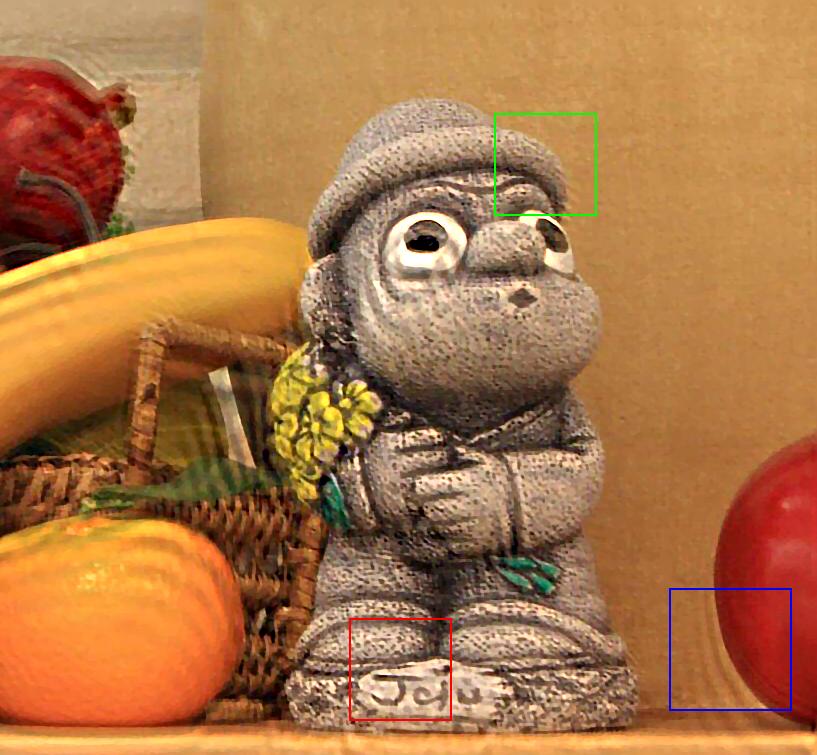}
\put(76, 68.5){\includegraphics[width=0.235\linewidth]{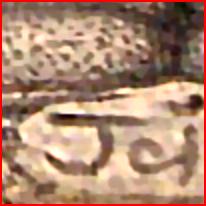}}
\put(76, 44.5){\includegraphics[width=0.235\linewidth]{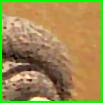}}
\end{overpic}}
\end{minipage}
}
\hspace{-0.4cm}
\subfigure[MPTV ($\lambda=0.00005$) ]{
\centering
\begin{minipage}[b]{.19\textwidth}
\centerline{
\begin{overpic}[width=1\textwidth]
{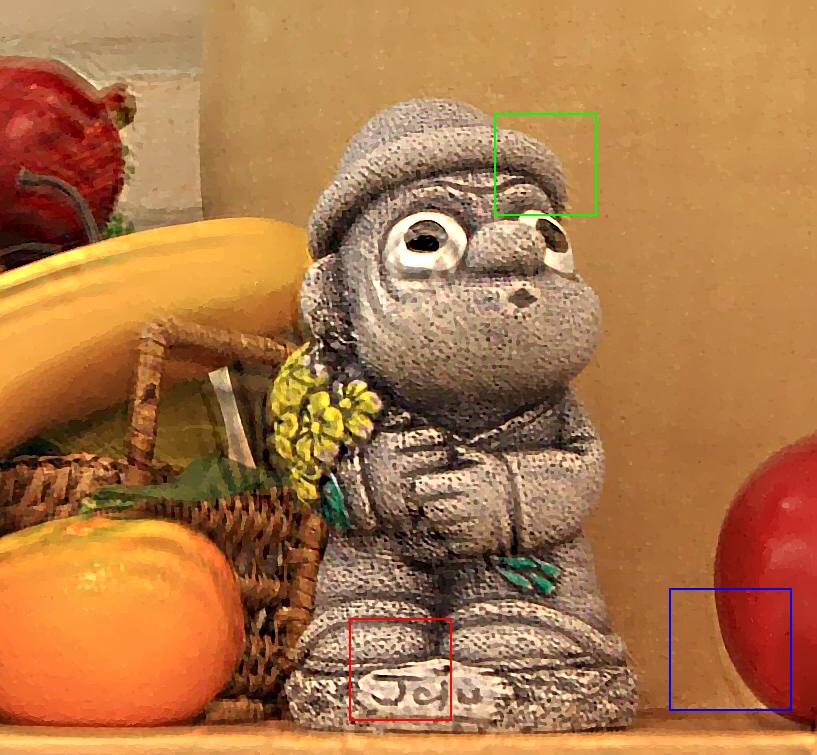}
\put(76, 68.5){\includegraphics[width=0.235\linewidth]{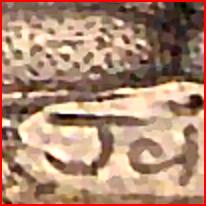}}
\put(76, 44.5){\includegraphics[width=0.235\linewidth]{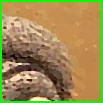}}
\end{overpic}}
\end{minipage}
}
\hspace{-0.4cm}
\caption{Experimental results on a real-world image. (a) Input blurred image $\by$ and blur kernel $\bk$. (b) Deblurring result of the software in \cite{xu2010two} is used as a baseline. (c)-(j) Deconvolution results of different methods. The proposed MPTV recovers more sharp details (see red box) and introduces fewer ringing artifacts (see blue box).}
\label{fig:real_harubang}
\end{figure*}

\begin{figure*}[htp]
\centering
\subfigure[Input $\by$ and $\bk$]{
\centering
\begin{minipage}[b]{.19\textwidth}
\centerline{
\begin{overpic}[width=1\textwidth]
{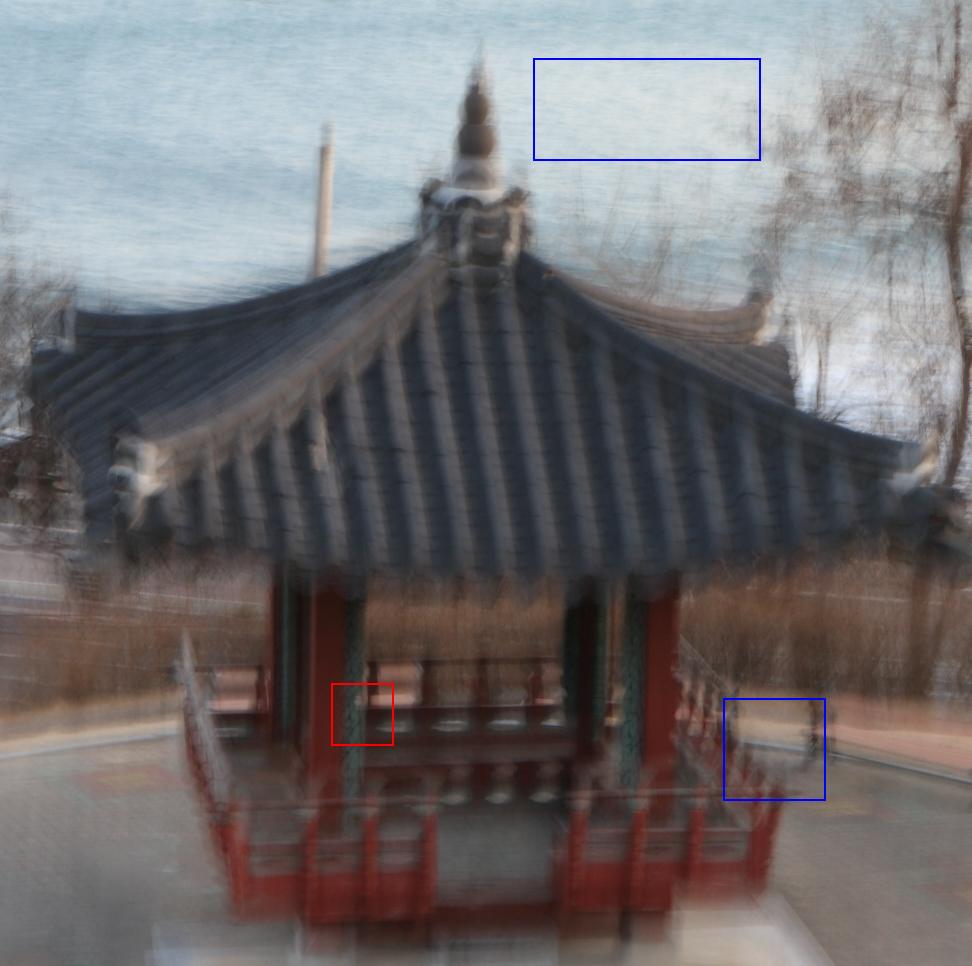}
\put(0,67){\includegraphics[width=0.32\linewidth]{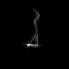}}
\put(1,1){\includegraphics[width=0.24\linewidth]{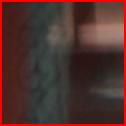}}
\put(2,69){\color{white}{\bf $\bk$}}
\end{overpic}}
\end{minipage}
}
\hspace{-0.4cm}
\subfigure[Xu and Jia \cite{xu2010two}]{
\centering
\begin{minipage}[b]{.19\textwidth}
\centerline{
\begin{overpic}[width=1\textwidth]
{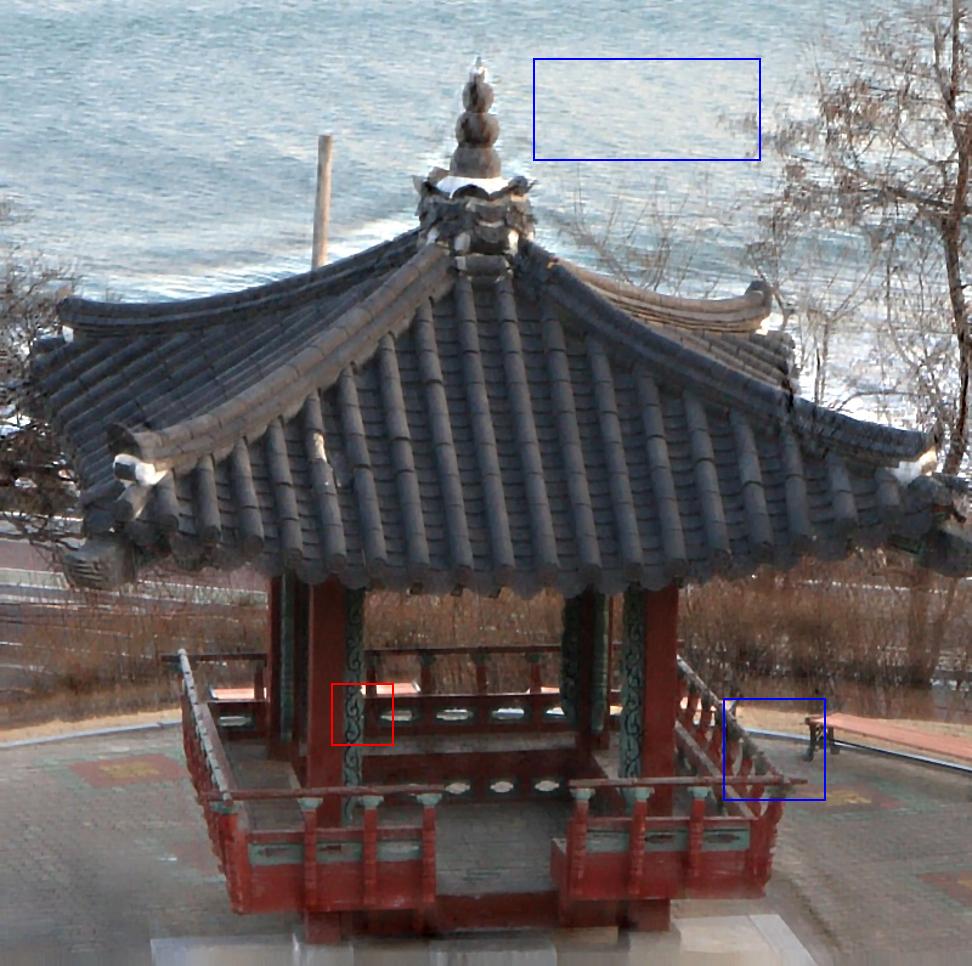}
\put(1,1){\includegraphics[width=0.24\linewidth]{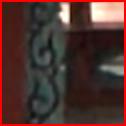}}
\end{overpic}}
\end{minipage}
}
\hspace{-0.4cm}
\subfigure[FTVd \cite{wang2008new}]{
\centering
\begin{minipage}[b]{.19\textwidth}
\centerline{
\begin{overpic}[width=1\textwidth]
{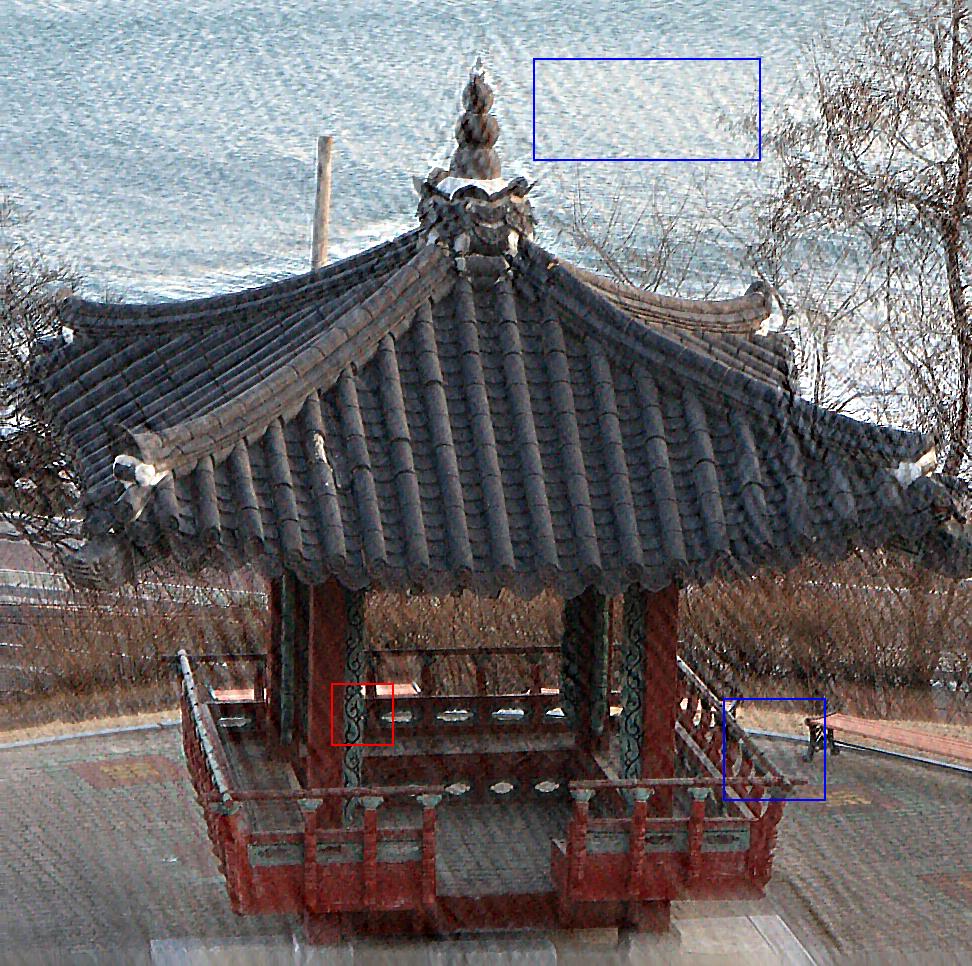}
\put(1,1){\includegraphics[width=0.24\linewidth]{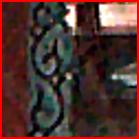}}
\end{overpic}}
\end{minipage}
}
\hspace{-0.4cm}
\subfigure[BM3D \cite{dabov2008bm3ddeb}]{
\centering
\begin{minipage}[b]{.19\textwidth}
\centerline{
\begin{overpic}[width=1\textwidth]
{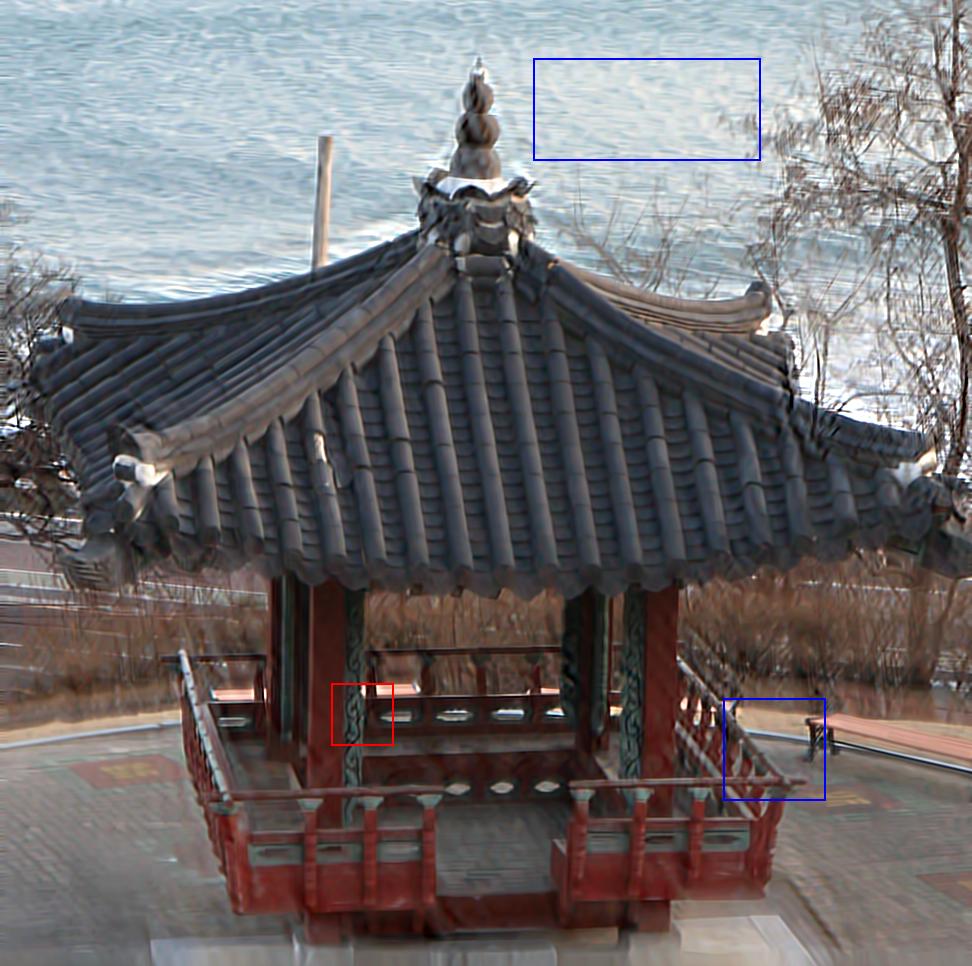}
\put(1,1){\includegraphics[width=0.24\linewidth]{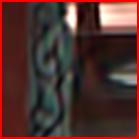}}
\end{overpic}}
\end{minipage}
}
\hspace{-0.4cm}
\subfigure[WDTV \cite{lou2015weighted}]{
\centering
\begin{minipage}[b]{.19\textwidth}
\centerline{
\begin{overpic}[width=1\textwidth]
{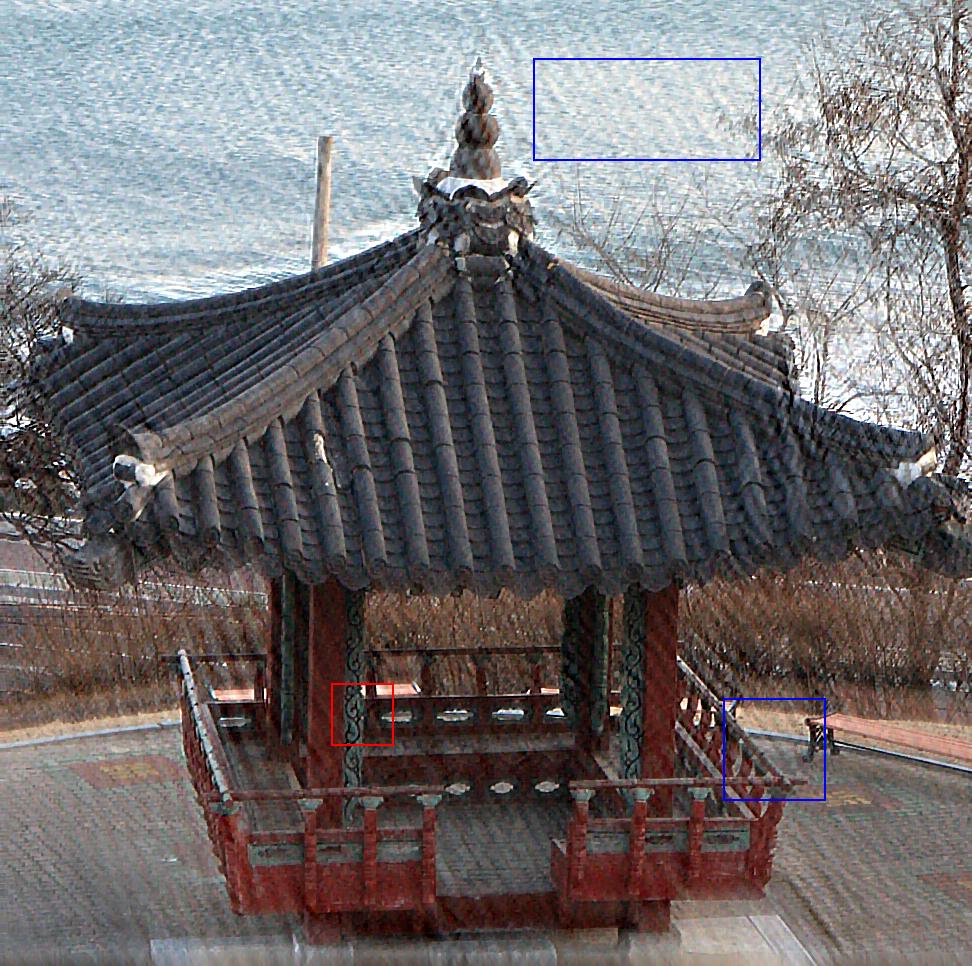}
\put(1,1){\includegraphics[width=0.24\linewidth]{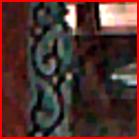}}
\end{overpic}}
\end{minipage}
}
\hspace{-0.4cm}
\subfigure[IRLS \cite{levin2007coded_irls}]{
\centering
\begin{minipage}[b]{.19\textwidth}
\centerline{
\begin{overpic}[width=1\textwidth]
{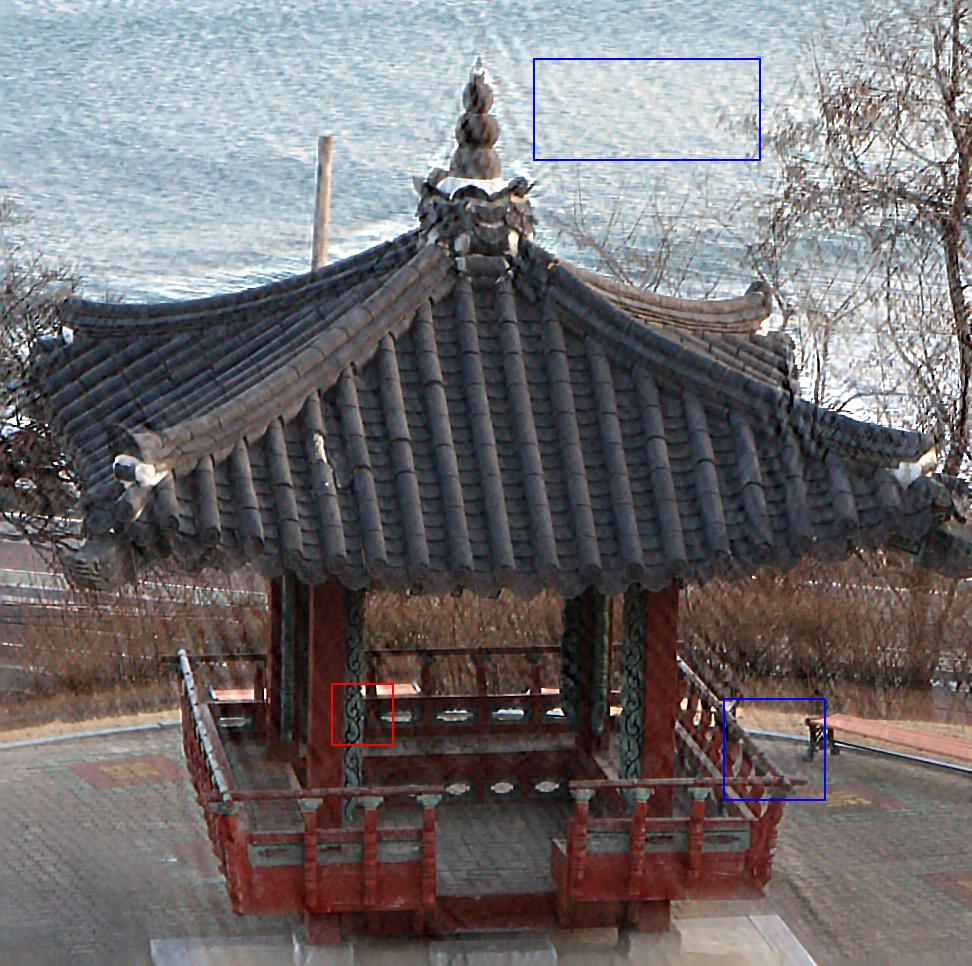}
\put(1,1){\includegraphics[width=0.24\linewidth]{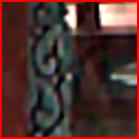}}
\end{overpic}}
\end{minipage}
}
\hspace{-0.4cm}
\subfigure[HL \cite{krishnan2009fast}]{
\centering
\begin{minipage}[b]{.19\textwidth}
\centerline{
\begin{overpic}[width=1\textwidth]
{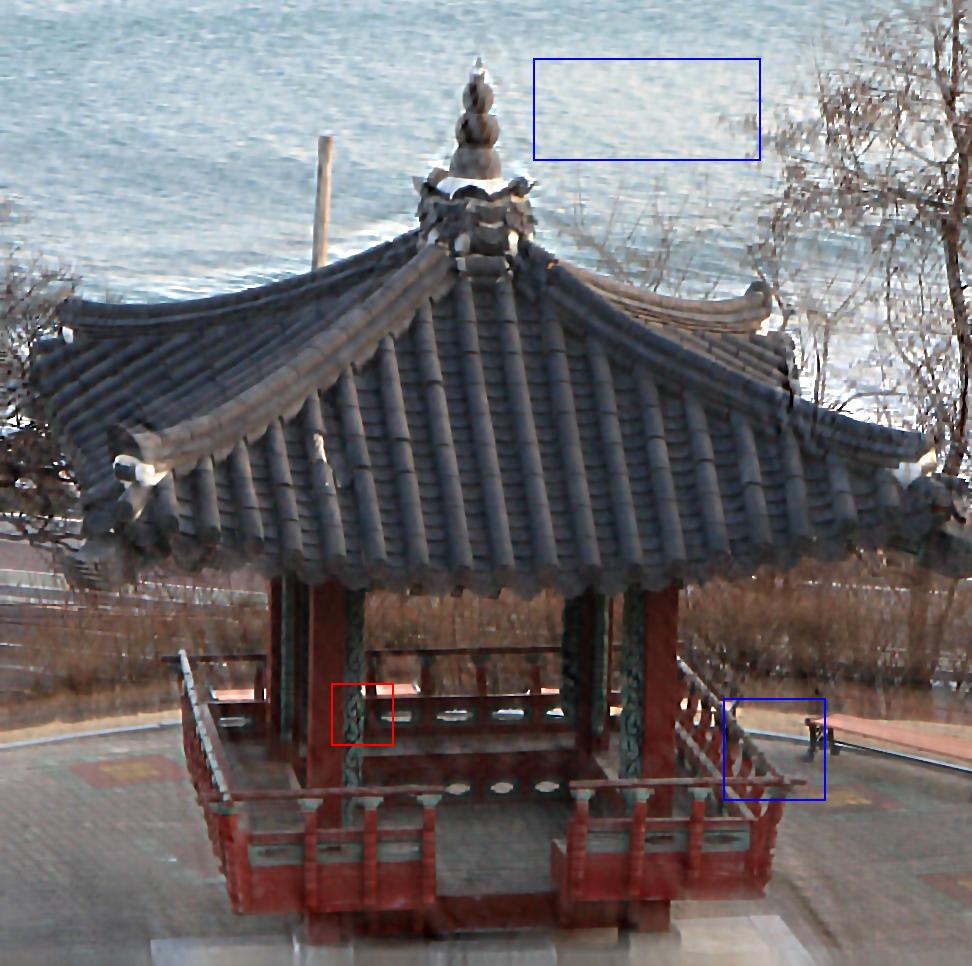}
\put(1,1){\includegraphics[width=0.24\linewidth]{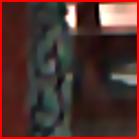}}
\end{overpic}}
\end{minipage}
}
\hspace{-0.4cm}
\subfigure[KS \cite{kheradmand2014Kernel}]{
\centering
\begin{minipage}[b]{.19\textwidth}
\centerline{
\begin{overpic}[width=1\textwidth]
{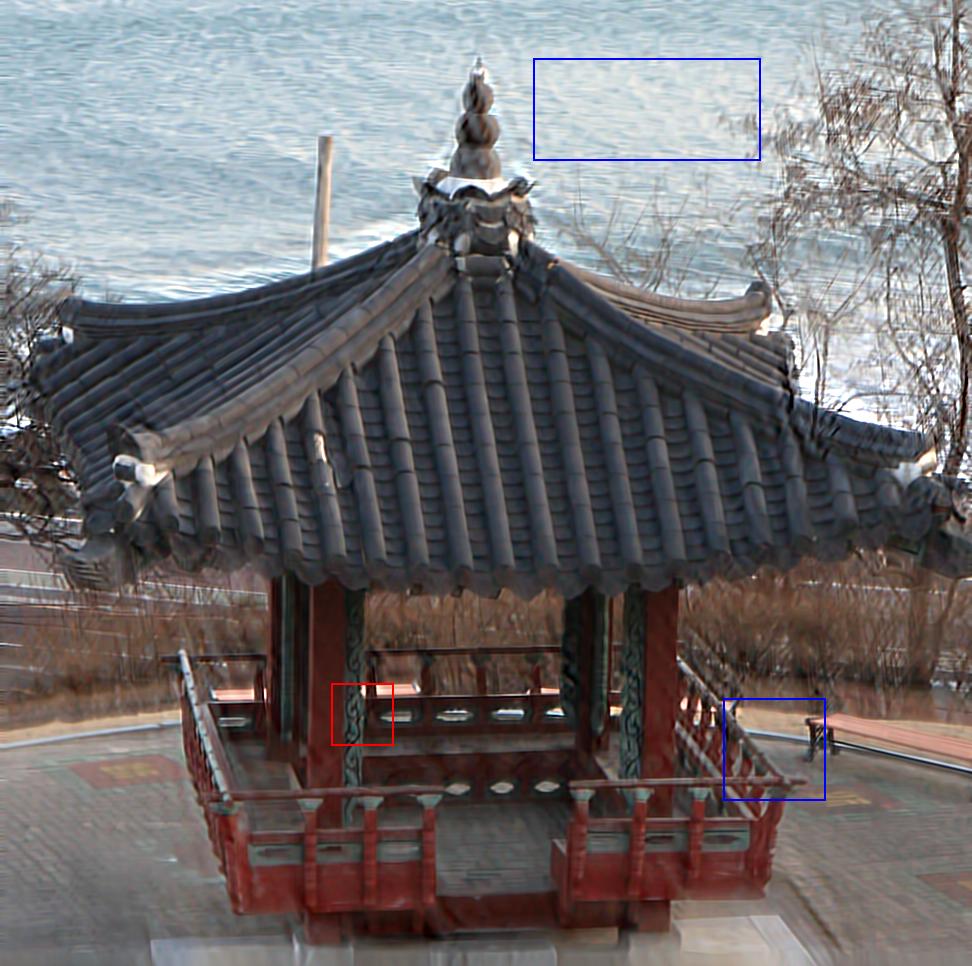}
\put(1,1){\includegraphics[width=0.24\linewidth]{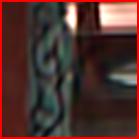}}
\end{overpic}}
\end{minipage}
}
\hspace{-0.4cm}
\subfigure[MPTV  ($\lambda=0.0005$)]{
\centering
\begin{minipage}[b]{.19\textwidth}
\centerline{
\begin{overpic}[width=1\textwidth]
{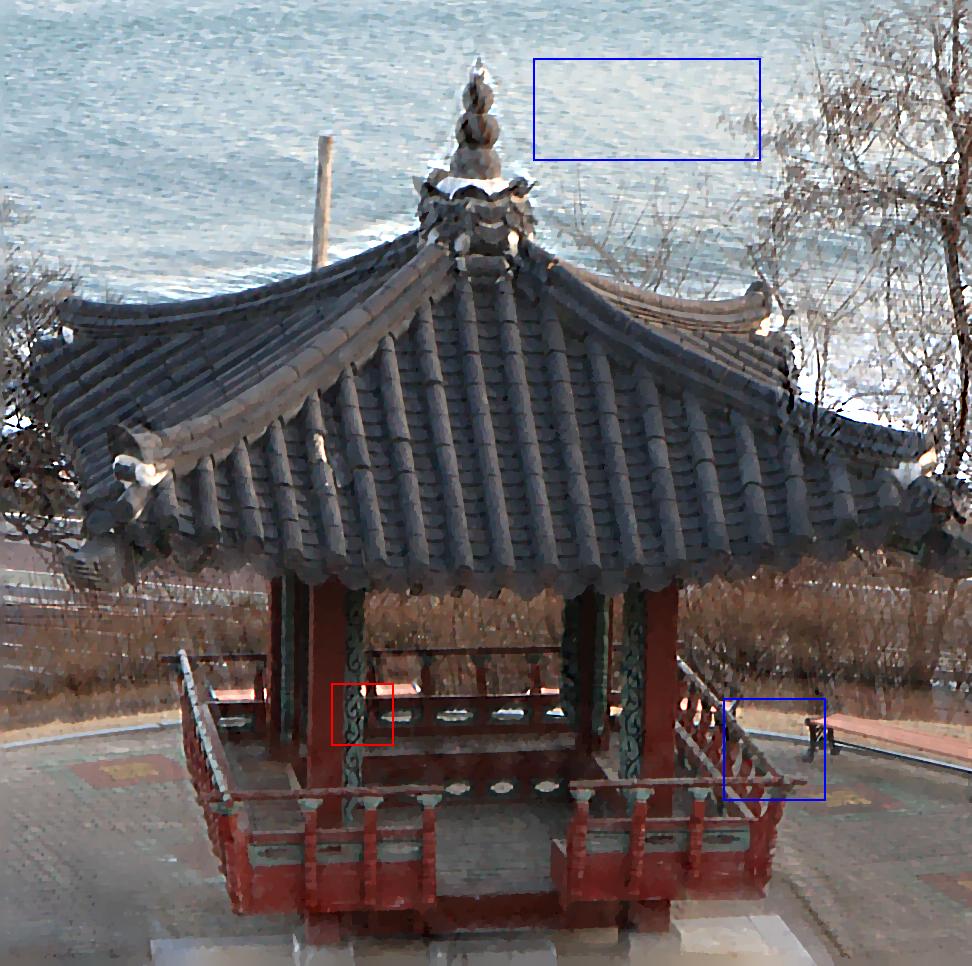}
\put(1,1){\includegraphics[width=0.24\linewidth]{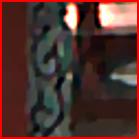}}
\end{overpic}}
\end{minipage}
}
\hspace{-0.4cm}
\subfigure[MPTV ($\lambda=0.00005$) ]{
\centering
\begin{minipage}[b]{.19\textwidth}
\centerline{
\begin{overpic}[width=1\textwidth]
{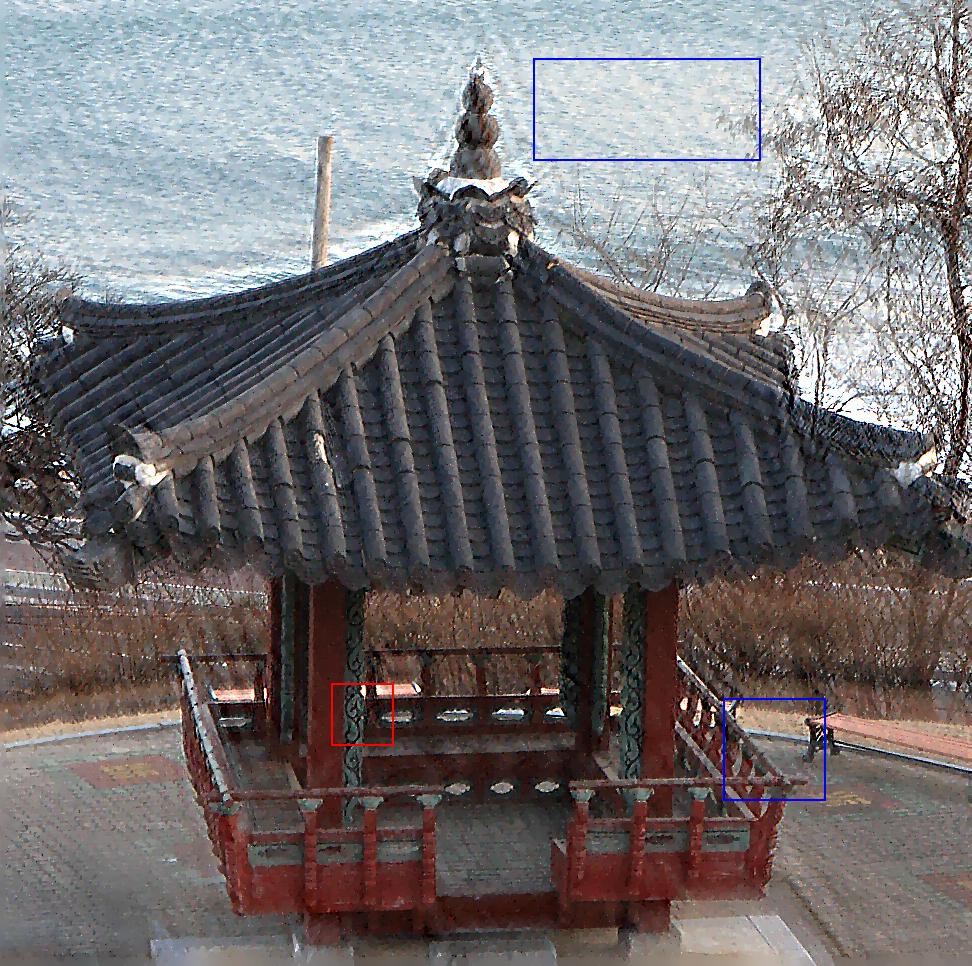}
\put(1,1){\includegraphics[width=0.24\linewidth]{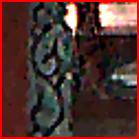}}
\end{overpic}}
\end{minipage}
}
\hspace{-0.4cm}
\caption{Experimental results on a real-world image. (a) Input blurred image $\by$ and blur kernel $\bk$. (b) Deblurring result of the software in \cite{xu2010two} is used as a baseline. (c)-(j) Deconvolution results of different methods. The results of MPTV with different values for parameter $\lambda$ are shown in (i) and (j). The results of MPTV suffer from fewer ringing artifacts (see blue boxes).}
\label{fig:real_summerhouse}
\end{figure*}

\section{Conclusion}
\label{sec:con}
We have proposed a matching pursuit based total variation minimization method for image deconvolution. By introducing a binary vector to indicate the significant components in the image gradients, we formulate the TV minimization problem as a QCLP problem and solve it via a cutting-plane method. The proposed algorithm with early stopping helps to reduce the solution bias, to alleviate over-smoothness, and suppress the ringing artifacts in the deconvolution results. Comprehensive empirical studies on many different datasets show the superior performance of the proposed MPTV over the state-of-the-art methods.

In the future, we plan to apply the proposed method to a variety of related image processing tasks, such image super-resolution. We \kui{are} also interested to extend the matching pursuit framework to the wavelet frame based on the models introduced in \cite{he2014waveletinpainting}.

\appendices

\section{Proof of Proposition \ref{prop:mptv_dual}} \label{sec:app_dual_derv}

\begin{proof}
For any fixed $\btau$, we have the inner problem as
\begin{equation}\label{eq:prob_mptv_inner}
\begin{split}
  \min_{\bx,\bxi,\bz}  &~~\frac{1}{2}\|\bxi\|_2^2 + \lambda\sum\nolimits_{i=1}^n \|\bd_i\|_2\\
  \st &~~\bxi=\by-\bA\bx,~\bD\bx=(\bz\odot\tbtau), \\
  &~~\bd_i=\bC_i\bz, \forall i\in [n].\\
\end{split}
\end{equation}
By introducing Lagrangian dual variables $\balpha$, $\bbeta$ and $\bgamma$ with $\bgamma_i=\bC_i\bgamma$ to these equality constraints, the Lagrangian function of the inner problem \eqref{eq:prob_mptv_inner} is:
\begin{equation}\label{eq:mptv_lag}
  \begin{split}
  \cL&(\bx,\bz,\bxi, \balpha, \bbeta, \bgamma)=\frac{1}{2} \|\bxi\|_2^2 + \lambda\sum\nolimits_{i=1}^n \|\bd_i\|_2 \\
  & + \balpha^\T(\by-\bA\bx-\bxi) +\bbeta^\T(\bD\bx-\diag(\widetilde{\btau})\bz)\\
  & +\sum\nolimits_{i=1}^n \bgamma_i^\T(\bC_i\bz-\bd_i).
\end{split}
\nonumber
\end{equation}
We then minimize $\cL(\bx,\bz,\bxi, \balpha, \bbeta, \bgamma)$ w.r.t. $\bxi$, $\bx$ and $\bz$, respectively:
\begin{equation}
  \min_{\bxi}~ \frac{1}{2}\|\bxi\|_2^2 - \balpha^\T\bxi + \balpha^\T\by = -\frac{1}{2}\|\balpha\|_2^2 + \balpha^\T\by,
  \nonumber
\end{equation}
\begin{equation}
  \min_{\bx}~ -\balpha^\T\bA\bx + \bbeta^\T\bD\bx = \left\{
      \begin{array}{l}
        ~0,~~~~ \text{if}~ \bA^\T\balpha =\bD^\T\bbeta,\\
        -\infty, ~\text{otherwise},\\
      \end{array}
      \right.
      \nonumber
\end{equation}
\begin{equation}
  \min_{\bz}~ -\bbeta^\T\diag(\tbtau)\bz + \bgamma^\T\bz= \left\{
      \begin{array}{l}
        ~0,~~~~ \text{if}~ \bgamma=\diag(\tbtau)\bbeta,\\
        -\infty, ~\text{otherwise},\\
      \end{array}
      \right.
      \nonumber
\end{equation}
\begin{equation}
\begin{split}
  & \min_{\bd}~ \sum\nolimits_{i=1}^n \left(\lambda\|\bd_i\|_2 - \bgamma_i^\T\bd_i \right)\\
  & = \min_{\bd} \max_{\|\bomega_i\|_2\leq 1, \forall i} \sum\nolimits_{i=1}^n \lambda \bomega_i^\T \bd_i - \bgamma_i^\T\bd_i \\
  & \geq \max_{\|\bomega_i\|_2\leq 1, \forall i} \min_{\bd} \sum\nolimits_{i=1}^n \lambda \bomega_i^\T \bd_i - \bgamma_i^\T\bd_i  \\
  & = \left\{
      \begin{array}{l}
        0,~~~ ~if~ \bgamma_i=\lambda \bomega_i, ~\|\bomega_i\|_2\leq 1, \forall i,\\
        -\infty, otherwise,\\
      \end{array}
      \right.
\end{split}
\nonumber
\end{equation}
where $\bomega_i$'s are the dual variables for the $\ell_2$-norm regularizer. We obtain $\balpha=\bxi$ at the optimality when optimizing $\cL$ w.r.t. $\bxi$.
By substituting above into $\cL(\bx,\bz,\bxi, \balpha, \bbeta, \bgamma)$, we obtain the dual of the inner problem
\begin{equation}\label{eq:mptv_dual_tmp}
\begin{split}
  \max_{\balpha}& ~-\frac{1}{2}\||\balpha\|_2^2 + \balpha^\T\by\\
  \st &~ \bA^\T\balpha =\bD^\T\bbeta, \\
  & \tau_i \|\bbeta_i\|_2 \leq \lambda, ~\bbeta_i=\bC_i\bbeta, \forall i\in [n].\\
\end{split}
\end{equation}
This completes the proof.

\end{proof}

\section{Solve Problem \eqref{eq:ls_worst_case_ana} for Finding the Most-violated Constraint} \label{sec:app_solve_ls_worst}

The problem in \eqref{eq:ls_worst_case_ana} has a closed-solution:
\begin{equation}
  \bbeta = \left( \bD\bD^\T+\rho\bI \right)^{-1} \bD\bA^\T\balpha.
  \label{eq:ls_wca_clof_sol}
\end{equation}
Since $\bD=[\bD_v^\T, \bD_h^\T]^\T$ is a concatenation of $\bD_v$ and $\bD_h$, directly calculating \eqref{eq:ls_wca_clof_sol} via FFTs is non-trivial. 

\par
To calculate \eqref{eq:ls_wca_clof_sol} using FFTs, we firstly rewrite \eqref{eq:ls_wca_clof_sol} as 
\begin{equation}
  \bbeta = \left[
\begin{array}{c|c}
\bD_v\bD_v^\T + \rho\bI & \bD_v\bD_h^\T\\
\hline
\bD_h\bD_v^\T & \bD_h\bD_h^\T + \rho\bI
\end{array}
\right]^{-1} \left[
\begin{array}{c}
\bD_v\\
\hline
\bD_h
\end{array}
\right]\bA^\T\balpha
\end{equation}
Through a series of algebra calculation, we can transform the block-wise matrix inversion \cite{bernstein2009matrix} as:
\begin{equation}
  \left[
\begin{array}{c|c}
\bD_v\bD_v^\T + \rho\bI & \bD_v\bD_h^\T\\
\hline
\bD_h\bD_v^\T & \bD_h\bD_h^\T + \rho\bI
\end{array}
\right]^{-1}=\left[
\begin{array}{c|c}
\bS_h^{-1} & \bT_v\\
\hline
\bT_h & \bS_v^{-1}
\end{array}
\right],
\end{equation}
where 
\begin{equation}
\begin{split}
\bS_v &= \bD_h\bD_h^\T + \rho\bI-\bD_h\bD_v^\T(\bD_v\bD_v^\T + \rho\bI)^{-1}\bD_v\bD_h^\T,\\
\bS_h &= \bD_v\bD_v^\T + \rho\bI-\bD_v\bD_h^\T (\bD_h\bD_h^\T + \rho\bI)^{-1}\bD_h\bD_v^\T,\\
\bT_v &= -(\bD_v\bD_v^\T+\rho\bI)^{-1}\bD_v\bD_h^\T \bS_v^{-1},\\
\bT_h &=-(\bD_h\bD_h^\T+\rho\bI)^{-1}\bD_h\bD_v^\T \bS_h^{-1}.
\end{split}
\end{equation}
Therefor, we can achieve $\bbeta$ by rewriting \eqref{eq:ls_wca_clof_sol} as
\begin{equation}
\begin{split}
  & \bbeta_v = \bS_h^{-1} \bD_v \bA^\T\balpha + \bT_v\bD_h \bA^\T\balpha,\\
  & \bbeta_h = \bS_v^{-1} \bD_v \bA^\T\balpha + \bT_h\bD_h \bA^\T\balpha,
\end{split}
\end{equation}
which can be easily and efficiently calculated using FFTs.

\section{Proof of Proposition \ref{prop:subproblem}} \label{sec:app_subproblem}

For simplifying the proof, we first reformulate problem \eqref{eq:subproblem_primal} by introducing $\bxi=\by-\bA\bx$, and one constraint $(\bD\bx)_{\cS^c}=\0$. Problem \eqref{eq:subproblem_primal} then can be equivalently reformulated as
\begin{equation}
\begin{split}
\min_{\bx,\bxi,\bz_{\tcS}}  &~\frac{1}{2}\|\bxi\|_2^2 + \lambda\sum\nolimits_{i\in \cS} \|\bC_{i\tcS}\bz_{\tcS}\|_2\\
    \st &~\bxi=\by-\bA\bx, (\bD\bx)_{\tcS}=\bz_{\tcS},(\bD\bx)_{\tcS^c}=\0.\\
\end{split}
\label{eq:subproblem_primal_2}
\end{equation}
Following this, before proving Proposition \ref{prop:subproblem}, we provide the following lemma.
\begin{lemma}
Let $\bmu\in \Pi=\{\bmu~|~\sum_{i=1}^t\mu_i=1, \mu_i\geq0, \forall i, \bmu\in \mbR^t\}$, problem \eqref{eq:ccp_master_prob} can be solved by the following minimax problem:
\begin{equation}\label{eq:tmp_dual_lemma}
 \dgr{ \max_{\balpha\in \cA} \min_{\bmu\in \Pi} ~ -\!\!\!\sum_{\btau_i\in \Lambda_t} \mu_i \phi(\balpha, \btau_i)}.
\end{equation}
\end{lemma}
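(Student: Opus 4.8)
The plan is to reduce the finite epigraph-form subproblem \eqref{eq:ccp_master_prob} to the inner minimax over the probability simplex by (i) eliminating the scalar $\theta$, (ii) rewriting the maximum over the finite set $\Lambda_t$ as a maximum of its convex combinations, and (iii) tracking signs so the result coincides with \eqref{eq:tmp_dual_lemma}. Throughout I would keep the convention that each $\phi(\balpha,\btau_i)$ carries the feasibility of $\balpha$ with respect to $\cA_{\btau_i}$, i.e. $\phi(\balpha,\btau_i)=\frac{1}{2}\|\balpha\|_2^2-\balpha^\T\by$ on $\cA_{\btau_i}$ and $+\infty$ otherwise, so that each constraint $\phi(\balpha,\btau_i)-\theta\le 0$ encodes both $\balpha\in\cA_{\btau_i}$ and the quadratic bound. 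Under this convention every $\phi(\cdot,\btau_i)$ is a closed convex function of $\balpha$ on the compact convex set $\cA$.

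First I would eliminate $\theta$. For a fixed $\balpha$, the smallest admissible $\theta$ in \eqref{eq:ccp_master_prob} is $\max_{\btau\in\Lambda_t}\phi(\balpha,\btau)$, so \eqref{eq:ccp_master_prob} shares its optimal $\balpha$ with $\min_{\balpha\in\cA}\max_{\btau\in\Lambda_t}\phi(\balpha,\btau)$. Equivalently, one may introduce nonnegative multipliers $\mu_i$ for the $t$ constraints; minimizing the Lagrangian over the free variable $\theta$ produces the coefficient $(1-\sum_i\mu_i)$, which is finite only when $\sum_i\mu_i=1$, so the multipliers are automatically confined to $\Pi$, which explains the appearance of the simplex.

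The core step is the simplex identity. For any fixed $\balpha$, the map $\bmu\mapsto\sum_i\mu_i\phi(\balpha,\btau_i)$ is linear, hence its maximum over $\Pi$ is attained at a vertex and equals $\max_{\btau\in\Lambda_t}\phi(\balpha,\btau)$; this remains valid in the extended-valued setting, since if some $\phi(\balpha,\btau_i)=+\infty$ both sides are $+\infty$. Negating gives $\min_{\btau\in\Lambda_t}(-\phi(\balpha,\btau))=\min_{\bmu\in\Pi}(-\sum_i\mu_i\phi(\balpha,\btau_i))$ for every $\balpha$. Taking $\max_{\balpha\in\cA}$ on both sides, and recalling that $\max_{\balpha\in\cA}\min_{\btau\in\Lambda_t}(-\phi(\balpha,\btau))=-\min_{\balpha\in\cA}\max_{\btau\in\Lambda_t}\phi(\balpha,\btau)$ is exactly the negative of the value of \eqref{eq:ccp_master_prob} attained at the same maximizing $\balpha^*$, I obtain the minimax \eqref{eq:tmp_dual_lemma}. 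Hence solving \eqref{eq:tmp_dual_lemma} recovers the optimal $\balpha$ of \eqref{eq:ccp_master_prob}.

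The main obstacle is bookkeeping rather than a deep argument: I must keep the per-$\btau$ domains $\cA_{\btau_i}$ attached to $\phi$ so that collapsing to a single convex-combination objective does not silently discard the feasibility constraints, and I must be consistent about the sign flip that turns the minimization epigraph \eqref{eq:ccp_master_prob} into the $\max_{\balpha}$ form of \eqref{eq:tmp_dual_lemma}. If I later wish to interchange $\max_{\balpha}$ and $\min_{\bmu}$ (as is needed when passing to the primal subproblem \eqref{eq:subproblem_primal}), I would invoke Sion's minimax theorem, whose hypotheses hold because $-\sum_i\mu_i\phi(\balpha,\btau_i)$ is concave in $\balpha$ and linear in $\bmu$ while $\cA$ and $\Pi$ are nonempty, compact and convex; for the present lemma, however, only the elementary vertex identity together with the sign tracking is required.
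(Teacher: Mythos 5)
Your proof is correct, but it reaches \eqref{eq:tmp_dual_lemma} by a more elementary route than the paper. The paper argues via Lagrangian duality: it attaches a multiplier $\mu_i\geq 0$ to each constraint $\phi(\balpha,\btau_i)-\theta\leq 0$, forms the Lagrangian $\cL(\balpha,\theta,\bmu)=\theta+\sum_{i=1}^t\mu_i(\phi(\balpha,\btau_i)-\theta)$, observes that stationarity in $\theta$ forces $\sum_{i=1}^t\mu_i=1$ (so $\bmu\in\Pi$ --- exactly the mechanism you sketch in your parenthetical remark about the coefficient $(1-\sum_i\mu_i)$), and then invokes Sion's minimax theorem to pass from the resulting $\max_{\bmu}\min_{\balpha}$ dual to the $\max_{\balpha}\min_{\bmu}$ form stated in \eqref{eq:tmp_dual_lemma}. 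Your main line instead eliminates $\theta$ directly from the epigraph form and uses only the vertex identity $\max_{\bmu\in\Pi}\sum_i\mu_i\phi(\balpha,\btau_i)=\max_{\btau\in\Lambda_t}\phi(\balpha,\btau)$, valid pointwise in $\balpha$ (also under your extended-valued convention), followed by sign tracking. This buys two things: the lemma is established without appealing to strong duality or to any minimax theorem, since in the stated order of operators no exchange of $\max$ and $\min$ is actually required; and you correctly isolate where Sion's theorem genuinely enters, namely the later interchange used en route to \eqref{eq:subproblem_primal}, and you verify its hypotheses (concavity in $\balpha$, linearity in $\bmu$, compact convex $\cA$ and $\Pi$) at that point. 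Your explicit bookkeeping of the domains $\cA_{\btau_i}$ through extended-valued $\phi$ is also more careful than the paper, which leaves feasibility implicit; the only caveat is that the paper stipulates that all $\cA_{\btau}$'s share a common $\bbeta$, so a per-constraint indicator convention quantifies $\bbeta$ separately for each $\btau_i$ rather than jointly, but the paper's own proof does not treat this coupling either, and it does not affect the argument at the paper's level of rigor.
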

\begin{proof}
By introducing the dual variable $\mu_i, \forall i\in [t]$, the Lagrangian function of \eqref{eq:ccp_master_prob} is \dgr{$\cL(\balpha, \theta, \mu) = \theta+\sum_{i=1}^t \mu_i(\phi(\balpha, \btau_i)-\theta)$}. Let the derivatives w.r.t. $\theta$ be zero, we then can obtain $\sum_{i=1}^t \mu_i=1$ at the optimum. 
Furthermore, we can exchange the order of the $\max$ and $\min$ operators based on the minimax theorem \cite{sion1958general}. This completes the proof.
\end{proof}

\par
\dgr{We can directly rewrite problem \eqref{eq:tmp_dual_lemma} as
\begin{equation}
  \begin{split}
  \max_{\balpha\in\cA}\min_{\bmu\in\Pi}& ~\sum_{\btau_i\in\Lambda_t}\mu_i \left(-\frac{1}{2}\|\balpha\|_2^2 + \balpha^\T\by \right)\\
  \st &~~ \bA^\T\balpha =\bD^\T\bbeta, \\
  &~~ \tau_i \|\bbeta_i\|_2 \leq \lambda, ~\bbeta_i=\bC_i\bbeta, \forall i\in [n], \forall \btau\in \Lambda_t.
  \nonumber
\end{split}
\end{equation}}
\dgr{Because of the assumption $\sum_{i=1}^t\mu_i=1$, problem \eqref{eq:tmp_dual_lemma} can be reformulated as:}
\begin{equation} \label{eq:mptv_dual_proof}
\begin{split}
  \max_{\balpha\in\cA}& ~-\frac{1}{2}\|\balpha\|_2^2 + \balpha^\T\by\\
  \st &~~ \bA^\T\balpha =\bD^\T\bbeta, \\
  &~~ \tau_i \|\bbeta_i\|_2 \leq \lambda, ~\bbeta_i=\bC_i\bbeta, \forall i\in [n], \forall \btau\in \Lambda_t.\\
\end{split}
\end{equation}

Based on the assumption that there are no overlapping elements among $\cC_i$'s, we have $\sum\nolimits_{i\in \cS} \|\bC_{i\tcS}\bz_{\tcS}\|_2 = \sum_{i=1}^t \sum\nolimits_{j\in \cC_i} \|\bC_{j\tcS}\bz_{\tcS}\|_2$. To furthermore simplify the representation, we rewrite the constraints in problem \eqref{eq:subproblem_primal_2} with the group configuration matrices as $\bC_j\bD\bx = \bC_{j\tcS}\bz_{\tcS},\forall j\in \cC_i, \forall i\in[t]$ and $\bC_j\bD\bx=\0,\forall j\in\cS^c$, and introduce new variables $\bd_j$'s with $\bd_j=\bC_{j\tcS}\bz_{\tcS}$.
Thus the primal problem \eqref{eq:subproblem_primal_2} can be rewritten as
\begin{equation}
\begin{split}
&\min_{\bx,\bxi,\bz,\bd}  ~\frac{1}{2}\|\bxi\|_2^2 + \lambda\sum_{i=1}^t \sum\nolimits_{j\in \cC_i} \|\bd_j\|_2\\
&~~\st ~\bxi=\by-\bA\bx, \bC_j\bD\bx=\bC_{j\tcS}\bz_{\tcS}, \forall j\in \cC_i, \forall i\in[t],\\
&~~~\bC_j\bD\bx=\0, \forall j\in\cS^c, \bd_j = \bC_{j\tcS}\bz_{\tcS}, \forall j\in\cC_i, \forall i\in[t]. \\
\end{split}
\label{eq:subproblem_primal_3}
\end{equation}

\par
By introducing dual variables $\balpha$, $\bbeta$ and $\bgamma$, and letting $\bbeta_i=\bC_i\bbeta$, the Lagrangian function of problem is
\begin{equation}
\begin{split}
  \cL&(\bx, \bz_{\tcS}, \bxi, \bd, \balpha, \bbeta,\bgamma) = \frac{1}{2}\|\bxi\|_2^2 + \lambda\sum_{i=1}^t \sum_{j\in \cC_i} \|\bd_j\|_2 \\
  &+\balpha^\T(\by -\bA\bx-\bxi) + \sum_{i=1}^t \sum_{j\in\cC_i} \bbeta_j^\T(\bC_j\bD\bx-\bC_{j\tcS}\bz_{\tcS}) \\
  &+\sum_{j\in\cS^c} (\bC_j\bD\bx) + \sum_{i=1}^t \sum_{j\in\cC_i} \bgamma_j^\T(\bC_{j\tcS} \bz_{\tcS}-\bd_j).
\end{split}
\nonumber
\end{equation}
To derive the dual form, we minimize $\cL(\bx, \bz_{\tcS}, \bxi, \bd,\balpha, \bbeta, \bgamma)$ w.r.t. $\bx$, $\bz_{\tcS}$, $\bxi$ and $\bd$:
\begin{equation}
\begin{split}
  & \min_{\bxi}~ \frac{1}{2}\|\bxi\|_2^2 - \balpha^\T\bxi + \balpha^\T\by = -\frac{1}{2}\|\balpha\|_2^2 + \balpha^\T\by,\\
      \end{split}
  \nonumber
\end{equation}
\begin{equation}
\begin{split}
  & \min_{\bx}~ -\balpha^\T\bA\bx + \sum_{i=1}^t \sum_{j\in \cC_i} \bbeta_j^\T\bC_j\bD\bx + \sum_{j\in \cS^c} \bbeta_j^\T\bC_j\bD\bx \\
  &~~~~~~~~~~~~~~~~~~= \left\{
      \begin{array}{l}
        ~0,~~~~ \text{if}~ \bA^\T\balpha =\bD^\T\bbeta,\\
        -\infty, ~\text{otherwise},\\
      \end{array}
      \right.
      \end{split}
  \nonumber
\end{equation}
\begin{equation}
\begin{split}
  & \min_{\bz_{\tcS}}~ \sum_{i=1}^t \sum_{j\in \cC_i} -\bbeta_j^\T\bC_{j\tcS}\bz_\tcS + \sum_{i=1}^t \sum_{j\in \cC_i}\bgamma_j^\T\bC_{j\tcS}\bz_{\tcS}\\
  &~~~~~~~~~~~~~~~~~~ = \left\{
      \begin{array}{l}
        ~0,~~~~ \text{if}~ \bgamma_j=\bbeta_j, \forall j\in \cS,\\
        -\infty, ~\text{otherwise},\\
      \end{array}
      \right.
            \end{split}
  \nonumber
\end{equation}
\begin{equation}
\begin{split}
  & \min_{\bd}~ \sum_{i=1}^t \sum_{j\in \cC_i} \left(\lambda\|\bd_j\|_2 -  \bgamma_j^\T\bd_j\right)\\
  & = \min_{\bd} \max_{\|\bomega_j\|_2\leq 1, \forall j} \sum_{i=1}^t \sum_{j\in \cC_i} \left(\lambda \bomega_j^\T \bd_j - \bgamma_j^\T\bd_j \right)\\
  & \geq \max_{\|\bomega_j\|_2\leq 1, \forall j} \min_{\bd} \sum_{i=1}^t \sum_{j\in \cC_i} \lambda \bomega_j^\T \bd_j - \bgamma_j^\T\bd_j  \\
  & = \left\{
      \begin{array}{l}
        0,~~~ ~\text{if}~ \bgamma_j=\lambda \bomega_j, ~\|\bomega_j\|_2\leq 1, \forall j\in \cC_i, \forall i\in [t],\\
        -\infty, \text{otherwise},\\
      \end{array}
      \right.
\end{split}
\nonumber
\end{equation}
where $\bomega_j$'s are the dual variable for the $\ell_2$-norm regularizer. We then can obtain the dual problem of problem \eqref{eq:subproblem_primal_2}:
\begin{equation}\label{eq:subprob_dual_for_proof}
\begin{split}
  \max_{\balpha\in\cA}& ~-\frac{1}{2}\|\balpha\|_2^2 + \balpha^\T\by\\
  \st &~ \bA^\T\balpha =\bD^\T\bbeta, \\
  & \tau_j\|\bbeta_j\|_2 \leq \lambda, ~\bbeta_j=\bC_j\bbeta, \forall j\in \cC_i, \forall i\in [t].\\
\end{split}
\end{equation}
Since, for all $i\in [t]$, each $\cC_i$ corresponds to a $\btau_i$ in $\Lambda_t$, we thus can verify that \eqref{eq:mptv_dual_proof} is the dual form of problem \eqref{eq:subproblem_primal_2}. And there is $\balpha^*=\bxi^*$. Hence, the subproblem \eqref{eq:ccp_master_prob} can be addressed by solving \eqref{eq:subproblem_primal}, and there is $\balpha^*=\bxi^*$ and $\bxi^*=\by-\bA\bx^*$ at the optimum.
This completes the proof.


\section{Proof of the Convergence Analysis}\label{sec:proof_conv}
Before proving the proposition for convergence analysis in Proposition \ref{thm:convergence} in the main paper, we first give the following lemma.
\begin{lemma}
Let $(\balpha^*, \theta^*)$ be the global optimal solution of \eqref{eq:mptv2_qclp}, and $\{\theta_t\}_{t=1}^T$ be a sequence of $\theta$ obtained in the iterations in Algorithm \ref{algo:ccp_dual}, where $T=|\Lambda|$ denotes the possibly maximum iteration number. As the iteration index $t$ increases, $\{\theta_t\}$ is monotonically increasing. And there is $\theta_t\leq \theta^*$.
 \label{lem:mono_inc}
\end{lemma}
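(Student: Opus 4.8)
The plan is to recast both the master problem \eqref{eq:mptv2_qclp} and each subproblem \eqref{eq:ccp_master_prob} into an equivalent min-max form and then exploit monotonicity of a pointwise maximum under set inclusion. The key observation is that the variable $\theta$ in \eqref{eq:ccp_master_prob} serves only to upper-bound $\phi(\balpha,\btau)$ over the current index set, so for fixed $\balpha$ the smallest feasible $\theta$ equals the maximum of $\phi$ over that set. Hence the subproblem value at iteration $t$ is
\begin{equation}
\theta_t = \min_{\balpha\in\cA}\ \max_{\btau\in\Lambda_t}\ \phi(\balpha,\btau),
\nonumber
\end{equation}
and, by the same reasoning applied to the full constraint set, $\theta^* = \min_{\balpha\in\cA}\max_{\btau\in\Lambda}\phi(\balpha,\btau)$. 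Since $\cA=[-l,l]^n$ is compact and $\phi$ is continuous in $\balpha$, while $\Lambda_t$ and $\Lambda$ are finite, the inner maxima and the outer minima are all attained, so these quantities are well defined.

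For the monotonicity claim I would use the fact that Algorithm \ref{algo:ccp_dual} only appends a newly found constraint at each step, so $\Lambda_{t-1}\subseteq\Lambda_t$. Therefore, for every fixed $\balpha\in\cA$, the maximum of $\phi(\balpha,\cdot)$ over the larger index set dominates that over the smaller one, $\max_{\btau\in\Lambda_t}\phi(\balpha,\btau)\ge\max_{\btau\in\Lambda_{t-1}}\phi(\balpha,\btau)$. Taking the minimum over $\balpha\in\cA$ preserves this inequality and yields $\theta_t\ge\theta_{t-1}$, establishing that $\{\theta_t\}$ is nondecreasing.

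For the upper bound $\theta_t\le\theta^*$ the same inclusion argument applies, now with $\Lambda_t\subseteq\Lambda$: restricting the inner maximum to the subset $\Lambda_t$ can only decrease it, so $\max_{\btau\in\Lambda_t}\phi(\balpha,\btau)\le\max_{\btau\in\Lambda}\phi(\balpha,\btau)$ for each $\balpha$, and minimizing over $\balpha\in\cA$ gives $\theta_t\le\theta^*$. Both conclusions thus reduce to the elementary monotonicity of the operator $\Lambda'\mapsto\min_{\balpha}\max_{\btau\in\Lambda'}\phi(\balpha,\btau)$ with respect to set inclusion.

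The only genuinely delicate point is justifying the epigraph-to-min-max reduction together with the attainment of the minima and maxima, which is what makes the set-inclusion comparison clean; this relies on the compactness of $\cA$ (guaranteed by the standing assumption $\balpha\in[-l,l]^n$) and on $\Lambda_t$ being a finite set at every iteration. Once these are in place, neither claim requires any quantitative estimate, so I expect no further obstacle. I would also remark, for later use in Proposition \ref{thm:convergence}, that this lemma shows $\{\theta_t\}$ is monotone and bounded above by $\theta^*$, hence convergent, which is the ingredient the convergence argument needs.
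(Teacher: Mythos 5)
Your proof is correct and takes essentially the same route as the paper: both identify $\theta_t=\min_{\balpha\in\cA}\max_{\btau\in\Lambda_t}\phi(\balpha,\btau)$ and $\theta^*=\min_{\balpha\in\cA}\max_{\btau\in\Lambda}\phi(\balpha,\btau)$ via the epigraph reformulation, and then deduce both claims from monotonicity of the inner maximum under the inclusions $\Lambda_{t-1}\subseteq\Lambda_t\subseteq\Lambda$, with the min over $\balpha$ preserving the pointwise inequality. Your additional attention to attainment of the minima and maxima (compactness of $\cA$, finiteness of $\Lambda_t$) is a small rigor refinement that the paper's proof leaves implicit, not a different argument.
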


\begin{proof}
According to definition of problem \eqref{eq:mptv2_qclp} and subproblem \eqref{eq:ccp_master_prob}, we have
\begin{equation}
\begin{split}
  &\theta^*=\min_{\balpha\in \cA}\max_{\btau\in \Lambda}\phi(\balpha, \btau) \\
  \text{and}~~ &\theta_t=\min_{\balpha\in \cA}\max_{\btau\in \Lambda_t}\phi(\balpha, \btau).
\end{split}
\end{equation}
For any fixed $\balpha$, we have $\max_{\btau\in \Lambda_t}\phi(\balpha, \btau)\leq \max_{\btau\in \Lambda}\phi(\balpha, \btau)$, then
\begin{equation}
  \min_{\balpha\in \cA}\max_{\btau\in \Lambda_t}\phi(\balpha, \btau) \leq \min_{\balpha\in \cA}\max_{\btau\in \Lambda}\phi(\balpha, \btau),
\end{equation}
which means $\theta_t\leq \theta^*$. Furthermore, as $t$ increases, the size of the subset $\Lambda_t$ is increasing monotonically, so $\{\theta_t\}$ is monotonically increasing. The proof is completed.
\end{proof}

\subsection{Proof of Proposition \ref{thm:convergence}}
For convenience, we first define
\begin{equation}
\varphi_t=\min_{i\in[t]}(\max_{\btau\in \Lambda}\phi(\balpha_i, \btau) ).
\end{equation}
Furthermore, to complete the proof, we introduce the following lemma.
\begin{lemma}
Let $(\balpha^*, \theta^*)$ be the global optimal solution of \eqref{eq:mptv2_qclp} in the main paper, and $\{\varphi_t\}$ is corresponding to a sequence $\{\balpha_t, \theta_t\}$ generated by Algorithm \ref{algo:ccp_dual}. There is $\varphi_t\leq \theta^*$, and the sequence $\{\varphi_t\}$ is monotonically decreasing.
\label{lem:varphi}
\end{lemma}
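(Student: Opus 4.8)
The plan is to prove the lemma's two assertions separately: the monotone decrease of $\{\varphi_t\}$, which is purely structural, and the bound $\varphi_t \le \theta^*$, which is where the optimality of the iterates enters. Throughout I will abbreviate the full objective by $f(\balpha) = \max_{\btau \in \Lambda} \phi(\balpha, \btau)$, so that by definition $\varphi_t = \min_{i \in [t]} f(\balpha_i)$.

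The monotone decrease is immediate from this running-minimum form. Since $[t] \subseteq [t+1]$, appending the term $f(\balpha_{t+1})$ can only lower the minimum, giving $\varphi_{t+1} = \min\{\varphi_t, f(\balpha_{t+1})\} \le \varphi_t$. No property of $\balpha_{t+1}$ beyond its membership in $\cA$ is used here, so this part needs no further work.

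For the bound I would route the estimate through the subproblem optima handled in Lemma~\ref{lem:mono_inc}. Each iterate $\balpha_i$ is the solution of the relaxed subproblem \eqref{eq:ccp_master_prob} restricted to the active set $\Lambda_i \subseteq \Lambda$, so its optimal subproblem value is $\theta_i = \max_{\btau \in \Lambda_i} \phi(\balpha_i, \btau)$, and Lemma~\ref{lem:mono_inc} already supplies $\theta_i \le \theta^*$ because shrinking the constraint index set from $\Lambda$ to $\Lambda_i$ lowers the inner maximum and hence the outer minimum. The plan is then to carry the uniform bound $\theta_i \le \theta^*$ through the running minimum defining $\varphi_t$ and conclude $\varphi_t = \min_{i \le t} f(\balpha_i) \le \theta^*$.

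I expect the crux to be controlling the discrepancy between the full-index objective $f(\balpha_i) = \max_{\btau \in \Lambda}\phi(\balpha_i, \btau)$ and the active-index value $\theta_i = \max_{\btau \in \Lambda_i}\phi(\balpha_i, \btau)$, since before termination $\Lambda_i$ is a proper subset of $\Lambda$ and the most-violated-constraint step of Algorithm~\ref{algo:ccp_dual} is precisely what detects the indices in $\Lambda \setminus \Lambda_i$ on which the two differ; the argument must account for these newly activated constraints so that the running minimum of the $f(\balpha_i)$ stays on the stated side of $\theta^*$. Once the bound $\varphi_t \le \theta^*$ and the monotone decrease are established, this lemma combines with the monotone increase of $\{\theta_t\}$ from Lemma~\ref{lem:mono_inc} to control both sequences simultaneously, which is the ingredient the convergence statement in Proposition~\ref{thm:convergence} ultimately relies on.
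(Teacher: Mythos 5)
Your monotonicity argument is fine, but the bound you set out to prove is unprovable as stated, and the ``crux'' you defer is precisely where it breaks. The inequality in the lemma as printed is a typo in the paper: the paper's own proof establishes the \emph{reverse} bound $\theta^*\leq\varphi_t$, and that is the direction the convergence argument in Proposition \ref{thm:convergence} actually needs --- $\{\theta_t\}$ increases toward $\theta^*$ from below (Lemma \ref{lem:mono_inc}), $\{\varphi_t\}$ decreases toward it from above, and convergence is measured by the gap between the two. The correct proof is a one-line feasibility argument, not an optimality argument: for every $i$, the pair $\bigl(\balpha_i,\, f(\balpha_i)\bigr)$ with $f(\balpha_i)=\max_{\btau\in\Lambda}\phi(\balpha_i,\btau)$ satisfies \emph{all} constraints of the minimization \eqref{eq:mptv2_qclp} by definition of the maximum, hence $\theta^*\leq f(\balpha_i)$ for all $i\in[t]$, and taking the minimum over $i$ yields $\theta^*\leq\varphi_t$.

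Your intended route cannot be repaired. You bound the active-set value $\theta_i=\max_{\btau\in\Lambda_i}\phi(\balpha_i,\btau)\leq\theta^*$ and hope to transfer this to $f(\balpha_i)$, but since $\Lambda_i\subseteq\Lambda$ you only get $f(\balpha_i)\geq\theta_i$ --- the wrong direction --- and in general $f(\balpha_i)>\theta^*$ strictly before termination: $\balpha_i$ solves only the relaxed subproblem and typically violates some constraint indexed by $\Lambda\setminus\Lambda_i$, which is exactly what the most-violated-constraint step exists to detect (if $f(\balpha_i)\leq\theta_i$ held, the iterate would already be optimal for the full problem and the algorithm would stop). So no accounting for newly activated constraints can push the running minimum of the $f(\balpha_i)$ below $\theta^*$; the sequence $\{\varphi_t\}$ is an upper bound on $\theta^*$, not a lower one, and the intended conclusion is the sandwich $\theta_t\leq\theta^*\leq\varphi_t$.
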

\begin{proof}
For $\forall i$, $(\balpha_i, \max_{\btau\in \Lambda}\phi(\balpha_i, \btau))$ is the feasible solution of \eqref{eq:mptv2_qclp}. Then $\theta^*\leq \max_{\btau\in\Lambda}\phi(\balpha_i, \btau)$ for $\forall i\in[t]$. Thus we have
\begin{equation}
  \theta^*\leq \varphi_t =\min_{i\in[t]}(\max_{\btau\in \Lambda}\phi(\balpha_i, \btau) ),
\end{equation}
which shows that, with the increasing iteration $t$, $\{\varphi_t\}$ is monotonically decreasing.
\end{proof}
Then we conduct the proof for \dgrrr{Proposition} \ref{thm:convergence} in the main paper.
\begin{proof}
  We measure the convergence of the \dgr{MPTV} via the gap between the sequence $\{\theta_t\}$ and $\{\varphi_t\}$. 
We prove the convergence of the sequence $\{(\balpha_t, \theta_t)\}$ generated by the updating procedure defined in Algorithm \ref{algo:ccp_dual}. Specifically, we adapt the convergence analysis from \cite{Tan2014Towards,kortanek1993ccp}.
\dgr{We assume a limit point $(\bar{\balpha}, \bar{\theta})$ exists for $(\balpha_t, \theta_t)$ and $\bar{\theta}\leq \theta^*$, where $\theta^*$ denotes the optimal solution. To show $(\bar{\balpha}, \bar{\theta})$ is global optimal of problem \eqref{eq:mptv2_qclp}, we need to show $(\bar{\balpha}, \bar{\theta})$ is a feasible point of problem \eqref{eq:mptv2_qclp}, \ie $\bar{\theta}\geq \phi(\bar{\balpha}, \theta)$ for all $\btau\in \Lambda$, so $\bar{\theta}\geq \theta^*$ and we must have $\bar{\theta}=\theta^*$. Let $f(\balpha, \theta)=\min_{\btau\in \Lambda}(\theta -\phi(\balpha, \btau))=\theta-\max_{\btau\in \Lambda} \phi(\balpha, \btau)$. Then $f(\balpha, \theta)$ is continuous w.r.t. $(\balpha, \theta)$. Relying on the continuity, we then have 
\begin{equation}
\begin{split}
  f(\bar{\balpha}, \bar{\theta})& = f(\balpha_t, \theta_t) + (f(\bar{\balpha}, \bar{\theta}) -f(\balpha_t, \theta_t)) \\
  & = (\theta_t-\phi(\balpha_t, \btau_{t+1})) + (f(\bar{\balpha}, \bar{\theta}) - f(\balpha_t, \theta_t))\\
  & \geq (\theta_t-\phi(\balpha_t, \btau_{t+1})) - (\bar{\theta}-\phi(\bar{\balpha}, \btau_t)) \\
  &~~~~~~~~+ (f(\bar{\balpha}, \bar{\theta})-f(\balpha_t, \theta_t)) \rightarrow 0 ~~(\text{when} t\rightarrow 0).
\end{split}
\end{equation}
The proof is completed.
}
\end{proof}

\ifCLASSOPTIONcaptionsoff
  \newpage
\fi

\section*{Acknowledgment}
This work was partially supported by National Natural Science Foundation of China (61231016, 61602185), ARC Discovery Project Grant (DP160100703). Yanning Zhang is supported by Chang Jiang Scholars Program of China (100017GH030150, 15GH0301). Mingkui Tan is supported by Recruitment Program for Young Professionals.

\bibliographystyle{IEEEtranbib}
\bibliography{TVbib}


\end{document}